\theoremstyle{plain}
\newtheorem{theorem}{Theorem}[section]
\newtheorem{proposition}[theorem]{Proposition}
\newtheorem{lemma}[theorem]{Lemma}
\newtheorem{corollary}[theorem]{Corollary}
\newtheorem{definition}[theorem]{Definition}
\newtheorem{assumption}[theorem]{Assumption}
\theoremstyle{remark}
\title{On Elimination Strategies for Bandit Fixed-Confidence Identification}
\author{%
Andrea Tirinzoni\thanks{Work done while at Inria Lille.} \\
Meta AI\\
Paris, France\\
\texttt{tirinzoni@fb.com} \\
  \And
  R\'{e}my Degenne\\
  Univ. Lille, Inria, CNRS, Centrale Lille, UMR 9189 CRIStAL, F-59000 Lille, France\\
  \texttt{remy.degenne@inria.fr} \\
}
\begin{document}

\maketitle

\doparttoc 
\faketableofcontents 

\begin{abstract}
Elimination algorithms for bandit identification, which prune the plausible correct answers sequentially until only one remains, are computationally convenient since they reduce the problem size over time. However, existing elimination strategies are often not fully adaptive (they update their sampling rule infrequently) and are not easy to extend to combinatorial settings, where the set of answers is exponentially large in the problem dimension. 
On the other hand, most existing fully-adaptive strategies to tackle general identification problems are computationally demanding since they repeatedly test the correctness of every answer, without ever reducing the problem size.
We show that adaptive methods can be modified to use elimination in both their stopping and sampling rules, hence obtaining the best of these two worlds: the algorithms (1) remain fully adaptive, (2) suffer a sample complexity that is never worse of their non-elimination counterpart, and (3) provably eliminate certain wrong answers early. We confirm these benefits experimentally, where elimination improves significantly the computational complexity of adaptive methods on common tasks like best-arm identification in linear bandits.
\end{abstract}


\section{Introduction}
\label{sec:introduction}

The multi-armed bandit is a sequential decision-making task which is now extensively studied (see, e.g., \cite{lattimore2020bandit} for a recent review). In this problem, an algorithm interacts with its environment by sequentially ``pulling'' one among $K \in \mathbb{N}$ arms and observing a sample from a corresponding distribution. Among the possible objectives, we focus on \emph{fixed-confidence identification} \citep{even2002pac, jamieson2014best,garivier2016optimal,chen2015optimal}. In this setting, the algorithm successively collects samples until it decides to stop and return an answer to a given query about the distributions. Its task is to return the correct answer with at most a given probability of error $\delta$, and its secondary goal is to do so while stopping as early as possible. This problem is called ``fixed-confidence'' as opposed to ``fixed-budget'', where the goal is to minimize the error probability with at most a given number of samples \citep{bubeck2009pure,audibert2010best,gabillon2012best,carpentier2016tight,abbasi2018best}.


The most studied query is \emph{best arm identification} (BAI), where the aim is to return the arm whose distribution has highest mean. A variant is Top-m identification \citep{kalyanakrishnan2012pac}, where the goal is to find the $m$ arms with highest means. While these are the most common, other queries have been studied, including thresholding bandits \citep{carpentier2016tight}, minimum threshold \citep{kaufmann2018sequential}, and multiple correct answers \citep{degenne2019pure}.

Algorithms for fixed-confidence identification can be generally divided into two classes: those based on \emph{adaptive sampling} and those based on \emph{elimination}. Adaptive algorithms \citep[e.g.,][]{gabillon2012best,kalyanakrishnan2012pac,garivier2016optimal,xu2018fully} update their sampling strategy at each round and typically stop when they can simultaneously assess the correctness of every answer. They often enjoy strong theoretical guarantees.
For instance, some of them \citep{garivier2016optimal,degenne2019non,degenne2020gamification,wang2021fast} have been shown to be optimal as $\delta\rightarrow 0$. However, since they repeatedly test the correctness of every answer, they are often computationally demanding.
Elimination-based strategies \citep[e.g.,][]{even2002pac,kaufmann2013information,soare2014best, fiez2019sequential,tao2018best} maintain a set of ``active'' answers (those that are still likely to be the correct one) and stop when only one remains. They typically update their sampling rules and/or the active answers infrequently. This, together with the fact that eliminations reduce the problem size over time, makes them more computationally efficient but also yields large sample complexity in practice.
Moreover, while adaptive algorithms for general identification problems (i.e., with arbitrary queries) exist \citep{garivier2016optimal,degenne2019non,wang2021fast}, elimination-based strategies are not easy to design at such a level of generality. In particular, they are not easy to extend to structured combinatorial problems (such as Top-m), where the number of answers is exponential in the problem dimension.\footnote{An elimination strategy for specific unstructured combinatorial problems has been introduced in \citep{chen2014combinatorial}.}

In this paper, we design a novel elimination rule for general identification problems which we call \emph{selective} elimination. It can be easily combined with existing adaptive strategies, both in their stopping and sampling rules, making them achieve the best properties of the two classes mentioned above.
In particular, we prove that (1) selective elimination never suffers worse sample complexity than the original algorithm, and hence remain asymptotically optimal whenever the base algorithm is; (2) It provably discards some answers much earlier than the stopping time; (3) It improves the computational complexity of the original algorithm when some answers are eliminated early. Experimentally, we compare several existing algorithms for three identification problems (BAI, Top-m, and thresholding bandits) on two bandit structures (linear and unstructured). We find that, coherently across all experiments, existing adaptive strategies achieve significant gains in computation time and, to a smaller extent, in sample complexity when combined with selective elimination.

\subsection{Bandit fixed-confidence identification}
\label{sub:bandit_identification}

An algorithm interacts with an environment composed of $K>1$ \emph{arms}. At each time $t \in \mathbb{N}$, the algorithm picks an arm $k_t$ and observes $X_t^{k_t} \sim \nu_{k_t}$, where $\nu_{k_t}$ is the distribution of arm $k_t$. 
At a time $\tau$, the algorithm stops and returns an answer $\ihat$ from a finite set $\mathcal I$. 
Formally, let $\cF_t$ be the $\sigma$-algebra generated by the observations up to time $t$. An identification algorithm is composed of
\begin{enumerate}[nosep]
	\item \emph{Sampling rule}: the sequence $(k_t)_{t \in \mathbb{N}}$, where $k_t$ is $\mathcal F_{t-1}$-measurable.
	\item \emph{Stopping rule}: a stopping time $\tau$ with respect to $(\cF_t)_{t\in\mathbb{N}}$ and a random variable $\ihat \in \mathcal I$, i.e., the answer returned when stopping at time $\tau$.
\end{enumerate}
Note that, while it is common to decouple $\tau$ and $\ihat$, we group them to emphasize that the time at which an algorithm stops depends strongly on the answer it plans on returning.

We assume that the arm distributions depend on some unknown parameter $\theta\in\cM$, where $\cM \subseteq \mathbb{R}^d$ is the set of possible parameters, and write $\nu_k(\theta)$ for $k\in[K]$ to make this dependence explicit. For simplicity, we shall use $\theta$ to refer to the bandit problem $(\nu_k(\theta))_{k\in[K]}$.
This assumption allows us to include linear bandits in our analysis.
We let $i^\star: \cM \to \mathcal I$ be the function, known to the algorithm, which returns the unique correct answer for each problem. The algorithm is correct on $\theta$ if $\ihat = i^\star(\theta)$. 

\begin{definition}[$\delta$-correct algorithm]\label{def:delta-correct}
An algorithm is said to be $\delta$-correct on $\mathcal M \subseteq \mathbb{R}^d$ if for all $\theta \in \mathcal M$, $\tau < +\infty$ almost surely and
$
\mathbb{P}_\theta(\ihat \ne i^\star(\theta) )
\le \delta \: .
$
\end{definition}

We want to design algorithms that, given a value $\delta$, are $\delta$-correct on $\mathcal M$ and have minimal expected sample complexity $\mathbb{E}_\theta[\tau]$ for all $\theta\in\cM$. A lower bound on $\mathbb{E}_\theta[\tau]$ was proved in \cite{garivier2016optimal}. In order to present it, we introduce the concept of \emph{alternative} set to an answer $i\in\cI$: $\Lambda(i) := \{\lambda \in \mathcal M \mid i^\star(\lambda) \ne i\}$, the set of parameters for which the correct answer is not $i$. Let us denote by $\KL_k(\theta,\lambda)$ the Kullback-Leibler (KL) divergence between the distribution of arm $k$ under $\theta$ and $\lambda$.
Then the lower bound states that for any algorithm that is $\delta$-correct on $\cM$ and any problem $\theta\in\cM$,
\begin{align}
\mathbb{E}_\theta[\tau]
&\ge \log(1/(2.4\delta))/H^\star(\theta)
\: ,
\text{with }
H^\star(\theta)
:= \max_{\omega\in\Delta_K}\inf_{\lambda \in \Lambda(i^\star(\theta))} \sum_{k\in[K]}\omega^k\KL_k(\theta,\lambda)
\: .\label{eq:lower_bound}
\end{align}


\paragraph{Example: BAI in Gaussian linear bandits}

While our results apply to general queries, we illustrate all statements of this paper on the widely-studied task of BAI in Gaussian linear bandits \citep{soare2014best,xu2018fully,zaki2020explicit,degenne2020gamification,jedra2020optimal}. 
In this setting, each arm $k\in[K]$ has a Gaussian distribution $\mathcal N(\mu_k(\theta), 1)$ with mean $\mu_k(\theta) = \phi_k^\top \theta$, a linear function of the unknown parameter $\theta \in \mathbb{R}^d$ (and $\mathcal M = \mathbb{R}^d$) and of known arm features $\phi_k\in\mathbb{R}^d$. The set of answers is $\mathcal I = [K]$ and the correct answer is $i^\star(\theta) := \argmax_{k\in[K]}\phi_k^\top \theta$.

Finally, for $x \in \mathbb{R}^d$ and $A \in \mathbb{R}^{d \times d}$, we define $\Vert x \Vert_A := \sqrt{x^\top A x}$. For $\omega \in \mathbb{R}^K$, let $V_\omega := \sum_{k=1}^K \omega^k \phi_k \phi_k^\top$. With this notation, we have $\sum_{k\in[K]}\omega^k\KL_k(\theta,\lambda) = \frac{1}{2}\Vert \theta - \lambda \Vert_{V_\omega}^2$. 
%

\subsection{Log-likelihood ratio stopping rules}
\label{sub:stopping_rules}

Most existing adaptive algorithms use a log-likelihood ratio (LLR) test in order to decide when to stop. Informally, they check whether sufficient information has been collected to confidently discard at once all answers except one. Since such LLR tests are crucial for the design of our general elimination rules, we now describe their principle.
 
Given two parameters $\theta,\lambda \in \cM$, the LLR of observations $X_{[t]} = (X_1^{k_1}, \ldots, X_t^{k_t})$ between models $\theta$ and $\lambda$ is 
$
L_t(\theta, \lambda)
:= \log \frac{d \mathbb{P}_\theta}{d \mathbb{P}_\lambda}(X_{[t]})
= \sum_{s=1}^t \log \frac{d \mathbb{P}_\theta}{d \mathbb{P}_\lambda}(X_s^{k_s})
$~. Let $\hat{\theta}_t := \argmax_{\lambda\in\cM} \log  \mathbb{P}_\lambda(X_{[t]})$ be the maximum likelihood estimator of $\theta$ from $t$ observations. In Gaussian linear bandits, we have
$
L_t(\theta, \lambda)
= \frac{1}{2}\Vert \theta - \lambda \Vert^2_{V_{N_t}} + (\theta - \lambda)^\top V_{N_t} (\hat{\theta}_t - \theta)
$~, where $N_t^k := \sum_{s=1}^t \indi{k_s=k}$.
See Appendix~\ref{sec:exponential_families} for more details. $L_t(\theta, \lambda)$ is closely related to $\sum_{k=1}^K N_t^k \KL_k(\theta, \lambda)$, a quantity that appears frequently in our results. Indeed, the difference between these quantities is a martingale, which is a lower order term compared to them. The LLR stopping rule was introduced to the bandit literature in \cite{garivier2016optimal}. At each step $t \in \mathbb{N}$, the algorithm computes the infimum LLR to the alternative set of $i^\star(\hat{\theta}_t)$ and stops if it exceeds a threshold, i.e., if
\begin{align}\label{eq:llr-stop}
\inf_{\lambda \in \Lambda(i^\star(\hat{\theta}_t))} L_t(\hat{\theta}_t, \lambda)
\ge \beta_{t, \delta} \: ,
\end{align}
where the function $\beta_{t, \delta}$ can vary, notably based on the shape of the alternative sets. The recommendation rule is then $\ihat = i^\star(\hat{\theta}_t)$. Informally, the algorithm stops if it has enough information to exclude all points $\lambda$ for which the answer is not $i^\star(\hat{\theta}_t)$.
This stopping rule enforces $\delta$-correctness, provided that the sampling rule ensures $\tau < + \infty$ a.s. and that $\beta_{t,\delta}$ is properly chosen. The most popular choice is to ensure a concentration property of $L_t(\hat{\theta}_t, \theta)$. For example, if for all $\delta$, $\beta_{t,\delta}$ guarantees that
\begin{align}\label{eq:concentration-beta}
\mathbb{P}\left( \exists t \geq 1 : L_t(\hat{\theta}_t, \theta) \geq \beta_{t,\delta}\right) \leq \delta,
\end{align}
LLR stopping with that threshold returns a wrong answer with probability at most $\delta$. Such concentration bounds can be found in \citep{abbasi2011improved,magureanu2014lipschitz} for linear and unstructured bandits, respectively. This LLR stopping rule is used in many algorithms \citep{garivier2016optimal,xu2018fully,degenne2019non,degenne2020gamification,jedra2020optimal,wang2021fast}\footnote{LinGapE \citep{xu2018fully} does not use LLR stopping explicitly, but its stopping rule is equivalent to it. We can write it as: stop if for all points inside a confidence region a gap is small enough, that is if all those points do not belong to the alternative of $i^\star(\hat{\theta}_t)$. The contrapositive of that statement is exactly LLR stopping.}.
Some of them have been proven 
to be \emph{asymptotically optimal}: their sample complexity upper bound matches the lower bound~\eqref{eq:lower_bound} when $\delta \to 0$. However, improvements are still possible: their sample complexity for moderate $\delta$ may not be optimal and their computational complexity may be reduced, as we will see.

\section{Elimination stopping rules for adaptive algorithms}
\label{sec:elimination_stopping_rules}

We show how to modify the stopping rule of adaptive algorithms using LLR stopping to perform elimination. We assume that the alternatives sets $\Lambda(i)$ can be decomposed  into a union of sets which we refer to as \emph{alternative pieces} (or simply pieces), with the property that computing the infimum LLR over these sets is computationally easy.

\begin{assumption}\label{ass:union_of_sets}
For all $i \in \cI$, there exist pieces $(\Lambda_p(i))_{p \in \cP(i)}$, where $\cP(i)$ is a finite set of 
piece indexes, such that $\Lambda(i) = \bigcup_{p\in\cP(i)} \Lambda_p(i)$ and $\inf_{\lambda \in \Lambda_p(i)}L_t(\hat{\theta}_t,\lambda)$ can be efficiently computed for all $p \in \cP(i)$ and $t >0$.
\end{assumption}
This assumption is satisfied in many problems of interest, including BAI, Top-$m$ identification, and thresholding bandits (see Appendix \ref{app:problems}).
Indeed, in all applications we consider in this paper, the sets of Assumption~\ref{ass:union_of_sets} are half-spaces. In our linear BAI example, the piece indexes are simply arms. For $i,j\in[K]$ we can define $\Lambda_j(i) = \{\lambda\in\cM \mid \phi_j^\top \lambda > \phi_i^\top \lambda\}$. Then, $\Lambda(i) = \bigcup_{j\in [K]\setminus \{i\}} \Lambda_j(i)$. Moreover, the infimum LLR (and the corresponding minimizer) can be computed in closed form as \citep[e.g.,][]{fiez2019sequential}
$
\inf_{\lambda \in \Lambda_j(i)}L_t(\hat{\theta}_t,\lambda) = \max\{\hat{\theta}_t^T(\phi_i-\phi_j),0\}^2 / \| \phi_i - \phi_j \|_{V_{N_t}^{-1}}^2.
$

\paragraph{Elimination stopping}

The main idea is that it is not necessary to exclude all $\Lambda_p(i)$ for $p\in\cP(i)$ at the \emph{same time}, as LLR stopping \eqref{eq:llr-stop} does\footnote{Under Assumption \ref{ass:union_of_sets}, LLR stopping is written as $\min_{p\in\cP(i)}\inf_{\lambda\in\Lambda_p(i)}L_t(\hat{\theta}_t,\lambda) \geq \beta_{t,\delta}$ for $i=i^\star(\hat{\theta}_t)$, which implies that all alternative pieces of answer $i$ are discarded at once.}, in order to know that the algorithm can stop and return answer $i$. Instead, each piece can be pruned as soon as we have enough information to do so. 

\begin{definition}\label{def:elimination}
A set $S \subseteq \mathbb{R}^d$ is said to be eliminated at time $t$ if, for all $\lambda\in S$, $L_t(\hat{\theta}_t,\lambda) \ge \beta_{t,\delta}$.
\end{definition}
From the concentration property~\eqref{eq:concentration-beta}, we obtain that the probability that $\theta \in S$ and $S$ is eliminated is less than $\delta$.
LLR stopping interrupts the algorithm when the alternative set $\Lambda(i^\star(\hat{\theta}_t))$ can be eliminated. 
In elimination stopping, we eliminate smaller sets gradually, instead of the whole alternative at once.
Formally, let us define, for all $i\in\cI$,
\begin{align}
\label{eq:active-pieces-t-only}
\overline{\cP}_t(i;\beta_{t,\delta}) = \left\{ p \in \cP(i) : \inf_{\lambda \in \Lambda_p(i)} L_t(\hat{\theta}_t,\lambda) < \beta_{t,\delta} \right\}
\end{align}
as the subset of pieces for answer $i\in\cI$ whose infimum LLR at time $t$ is below a threshold $\beta_{t,\delta}$. That is, the indexes of pieces that are \emph{not} eliminated at time $t$. Moreover, we define, for all $i\in\cI$, a set of \emph{active pieces} $\cP_t^{\mathrm{stp}}(i)$ which is initialized as $\cP_0^{\mathrm{stp}}(i) = \cP(i)$ (all piece indexes).

Our \emph{selective elimination} rule updates, at each time $t$, only the active pieces of the empirical answer $i^\star(\hat{\theta}_t)$. That is, for $i=i^\star(\hat{\theta}_t)$, it sets
\begin{align}\label{eq:elimination-sets}
	\cP_t^{\mathrm{stp}}(i) := \cP_{t-1}^{\mathrm{stp}}(i) \cap \overline{\cP}_t(i;\beta_{t,\delta}),
\end{align}
while it sets $\cP_t^{\mathrm{stp}}(i) := \cP_{t-1}^{\mathrm{stp}}(i)$ for all $i\neq i^\star(\hat{\theta}_t)$. One might be wondering why not updating all answers at each round. The main reason is computational: as we better discuss at the end of this section, checking LLR stopping requires one minimization for \emph{each} piece $p\in\cP(i^\star(\hat{\theta}_t))$, while selective elimination requires only one for each \emph{active} piece $p\in\cP_{t-1}^{\mathrm{stp}}(i^\star(\hat{\theta}_t))$. Thus, the latter becomes increasingly more computationally efficient as pieces are eliminated. For completeness, we also analyze the variant, that we call \emph{full elimination}, which updates the active pieces according to \eqref{eq:elimination-sets} for \emph{all} answers $i\in\cI$ at each round. While we establish slightly better theoretical guarantees for this rule, it is computationally demanding and, as we shall see in our experiments, it does not significantly improve sample complexity w.r.t. selective elimination, which remains our recommended choice.

Let $\tau_{\mathrm{s. elim}} = \inf_{t \geq 1}\{t \mid \cP_t^{\mathrm{stp}}(i^\star(\hat{\theta}_t)) = \emptyset\}$ and $\tau_{\mathrm{f. elim}} := \inf_{t \geq 1}\{t \mid \exists i\in\cI : \cP_t^{\mathrm{stp}}(i) = \emptyset\}$ be the stopping times of selective and full elimination, respectively. Intuitively, these two rules stop when one of the updated answers has all its pieces eliminated (and return that answer). We show that, as far as $\beta_{t,\delta}$ is chosen to ensure concentration of $\hat{\theta}_t$ to $\theta$, those two stopping rules are $\delta$-correct.

\begin{lemma}[$\delta$-correctness]\label{lem:delta-correct}
Suppose that $\beta_{t,\delta}$ guarantees \eqref{eq:concentration-beta} and that the algorithm verifies that, whenever it stops, there exists $i_{\emptyset}\in\cI$ such that $\cP_\tau^{\mathrm{stp}}(i_{\emptyset}) = \emptyset$ and $\ihat = i_{\emptyset}$. Then, $\mathbb{P}_\theta(\ihat \ne i^\star(\theta)) \le \delta$.
\end{lemma}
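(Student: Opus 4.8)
The plan is to show that the error event $\{\ihat \neq i^\star(\theta)\}$ is contained in the failure event of the concentration bound~\eqref{eq:concentration-beta}, namely $\{\exists t\geq 1 : L_t(\hat{\theta}_t,\theta) \geq \beta_{t,\delta}\}$, whose probability is at most $\delta$ by hypothesis. To that end I would argue deterministically, on each sample path, that a wrong answer can only be returned if the LLR of the true parameter $\theta$ crossed the threshold at some time before stopping.

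First I would unpack what it means for the algorithm to err. By the stated hypothesis, whenever the algorithm stops it returns some $i_\emptyset\in\cI$ with $\ihat = i_\emptyset$ and $\cP_\tau^{\mathrm{stp}}(i_\emptyset) = \emptyset$ (this covers both selective and full elimination, since both stop exactly when some answer has an empty active set). Suppose $\ihat = i_\emptyset \neq i^\star(\theta)$. Since $i^\star(\theta)\neq i_\emptyset$, the definition of the alternative set gives $\theta \in \Lambda(i_\emptyset)$, and Assumption~\ref{ass:union_of_sets} then yields a piece index $p^\star \in \cP(i_\emptyset)$ with $\theta \in \Lambda_{p^\star}(i_\emptyset)$. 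The task reduces to showing that this particular piece having been eliminated forces the LLR at $\theta$ above threshold at some time.

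Next I would exploit the structure of the update rule~\eqref{eq:elimination-sets}. Because $\cP_t^{\mathrm{stp}}(i_\emptyset)$ is initialized to the full set $\cP(i_\emptyset)$ and only ever intersected with $\overline{\cP}_t(i_\emptyset;\beta_{t,\delta})$, the active set is nonincreasing and a dropped piece never returns. Hence $\cP_\tau^{\mathrm{stp}}(i_\emptyset)=\emptyset$ implies that $p^\star$ left the active set at some time $t'\leq\tau$, i.e. $p^\star \notin \overline{\cP}_{t'}(i_\emptyset;\beta_{t',\delta})$, which by the definition~\eqref{eq:active-pieces-t-only} of $\overline{\cP}_{t'}$ means $\inf_{\lambda\in\Lambda_{p^\star}(i_\emptyset)} L_{t'}(\hat{\theta}_{t'},\lambda) \geq \beta_{t',\delta}$. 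Since $\theta\in\Lambda_{p^\star}(i_\emptyset)$, bounding the infimum by its value at $\theta$ gives $L_{t'}(\hat{\theta}_{t'},\theta)\geq\beta_{t',\delta}$. This shows $\{\ihat\neq i^\star(\theta)\}\subseteq\{\exists t\geq 1 : L_t(\hat{\theta}_t,\theta)\geq\beta_{t,\delta}\}$, and applying~\eqref{eq:concentration-beta} finishes the proof.

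The only real subtlety, and the step I would be most careful with, is the bookkeeping of the monotone intersection update: I must ensure that emptiness of the active set at the stopping time legitimately pins down a specific earlier time $t'$ at which the piece $p^\star$ containing $\theta$ was eliminated (as opposed to, say, some other piece). Everything else is a direct chain of inequalities. It is worth emphasizing that we never need all pieces of $i_\emptyset$ to be handled simultaneously — a single piece containing $\theta$ suffices — which is precisely what makes the gradual, selective elimination compatible with $\delta$-correctness.
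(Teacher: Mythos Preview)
Your proposal is correct and follows essentially the same approach as the paper's proof: both identify a piece $p^\star$ containing $\theta$ in the alternative of the (wrong) returned answer, observe that its elimination at some time $t'\le\tau$ forces $L_{t'}(\hat{\theta}_{t'},\theta)\ge\beta_{t',\delta}$, and conclude via the concentration bound~\eqref{eq:concentration-beta}. The paper phrases this as a chain of probability inequalities rather than a sample-path inclusion, but the logic is identical.
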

All proofs for this section are in Appendix~\ref{app:proofs-elim-stopping}. If an algorithm verifies the conditions of Lemma~\ref{lem:delta-correct} and has a sampling rule that makes it stop almost surely, then it is $\delta$-correct. Interestingly, we can prove a stronger result than $\delta$-correctness: under the same sampling rule, the elimination stopping rules never trigger later than the LLR one \emph{almost surely}. In other words, any algorithm equipped with elimination stopping suffers a sample complexity that is never worse than the one of the same algorithm equipped with LLR stopping. Let $\tau_{\mathrm{llr}} := \inf_{t \geq 1}\{t \mid \inf_{\lambda \in \Lambda(i^\star(\hat{\theta}_t))} L_t(\hat{\theta}_t,\lambda) \geq \beta_{t,\delta}\}$.

\begin{theorem}\label{th:elim-better-than-llr}
For any sampling rule, almost surely $\tau_{\mathrm{f. elim}} \le \tau_{\mathrm{s. elim}} \le \tau_{\mathrm{llr}}$~.
\end{theorem}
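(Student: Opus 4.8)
The plan is to prove the two inequalities separately, each as a deterministic (pathwise) statement holding for every realization of the observations; since nothing probabilistic enters, the almost-sure claim for an arbitrary sampling rule follows immediately. To avoid the notational clash in $\cP_t^{\mathrm{stp}}$, which denotes different update rules in the two cases, I would write $\cP_t^{\mathrm{s}}(i)$ and $\cP_t^{\mathrm{f}}(i)$ for the active-piece sets maintained by selective and full elimination respectively; both are initialized to $\cP(i)$ at $t=0$.

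For the right inequality $\tau_{\mathrm{s. elim}} \le \tau_{\mathrm{llr}}$, I would evaluate the selective rule at $t = \tau_{\mathrm{llr}}$ and set $i = i^\star(\hat{\theta}_t)$. By Assumption~\ref{ass:union_of_sets}, $\inf_{\lambda\in\Lambda(i)}L_t(\hat{\theta}_t,\lambda) = \min_{p\in\cP(i)}\inf_{\lambda\in\Lambda_p(i)}L_t(\hat{\theta}_t,\lambda)$, so the defining event of $\tau_{\mathrm{llr}}$ forces $\inf_{\lambda\in\Lambda_p(i)}L_t(\hat{\theta}_t,\lambda)\ge\beta_{t,\delta}$ for every piece $p$, i.e. $\overline{\cP}_t(i;\beta_{t,\delta})=\emptyset$. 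Because the selective update $\cP_t^{\mathrm{s}}(i)=\cP_{t-1}^{\mathrm{s}}(i)\cap\overline{\cP}_t(i;\beta_{t,\delta})$ is applied precisely to this $i=i^\star(\hat{\theta}_t)$, we obtain $\cP_t^{\mathrm{s}}(i)=\emptyset$, so selective elimination has stopped by time $t$.

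For the left inequality $\tau_{\mathrm{f. elim}} \le \tau_{\mathrm{s. elim}}$, the key lemma is the pathwise nesting $\cP_t^{\mathrm{f}}(i)\subseteq\cP_t^{\mathrm{s}}(i)$ for all $i\in\cI$ and all $t$, which I would establish by induction on $t$. The base case is equality. In the inductive step I would split on whether $i=i^\star(\hat{\theta}_t)$: if so, both rules intersect their respective previous set with the \emph{same} set $\overline{\cP}_t(i;\beta_{t,\delta})$, so the inclusion is preserved; if not, selective leaves $\cP^{\mathrm{s}}$ unchanged while full can only shrink $\cP^{\mathrm{f}}$, giving $\cP_t^{\mathrm{f}}(i)\subseteq\cP_{t-1}^{\mathrm{f}}(i)\subseteq\cP_{t-1}^{\mathrm{s}}(i)=\cP_t^{\mathrm{s}}(i)$. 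Evaluating at $t=\tau_{\mathrm{s. elim}}$ with $i=i^\star(\hat{\theta}_t)$, selective stopping gives $\cP_t^{\mathrm{s}}(i)=\emptyset$, and the nesting forces $\cP_t^{\mathrm{f}}(i)=\emptyset$, witnessing that full elimination has stopped by $t$.

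I expect the main obstacle to be purely bookkeeping: keeping the two differently-updated families $\cP^{\mathrm{s}}$ and $\cP^{\mathrm{f}}$ straight and correctly handling the $i\neq i^\star(\hat{\theta}_t)$ branch of the induction, since that is the only place where the two rules genuinely diverge (selective freezes the set, full keeps pruning). Notably, no concentration or $\delta$-correctness machinery from Lemma~\ref{lem:delta-correct} is needed here, as the entire argument is deterministic once a sample path is fixed.
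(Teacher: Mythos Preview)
Your proposal is correct and follows essentially the same approach as the paper. The paper's proof sketch says exactly that at $t=\tau_{\mathrm{llr}}$ all pieces of $i^\star(\hat{\theta}_t)$ are eliminated (hence $\tau_{\mathrm{s.elim}}\le t$), and that $\tau_{\mathrm{f.elim}}\le\tau_{\mathrm{s.elim}}$ because full elimination always has fewer active pieces than selective; you have simply spelled out the nesting $\cP_t^{\mathrm{f}}(i)\subseteq\cP_t^{\mathrm{s}}(i)$ via an explicit induction, which the paper leaves as an ``observation.''
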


The proof of this theorem is very simple: if $\tau_{\mathrm{llr}} = t$, then at $t$ all pieces $\Lambda_p(i^\star(\hat{\theta}_t))$ for $p\in\cP(i^\star(\hat{\theta}_t))$ can be eliminated, hence $\tau_{\mathrm{s. elim}} \le t$. The proof that $\tau_{\mathrm{f. elim}} \le \tau_{\mathrm{s. elim}}$ follows from the observation that full elimination always has less active pieces than selective elimination. Note that all three stopping rules must use the same threshold $\beta_{t,\delta}$ to be comparable. Although simple, Theorem \ref{th:elim-better-than-llr} has an important implication: we can take any existing algorithm that uses LLR stopping, equip it with elimination stopping instead, and obtain a new strategy that is never worse in terms of sample complexity and for which the original theoretical results on the stopping time still hold.

Finally, it is important to note that, while defining the elimination rule in the general form \eqref{eq:elimination-sets} allows us to unify many settings, storing/iterating over all sets $\cP_t^{\mathrm{stp}}(i)$ would be intractable in problems with large number of answers (e.g., top-m identification or thresholding bandits, where the latter is exponential in $K$).
Fortunately, we show in Appendix \ref{app:problems} that this is not needed and efficient implementations exist for these problems that take only polynomial time and memory.

\subsection{Elimination time of alternative pieces}
\label{sub:changing_the_stopping_rule_only}

We now show that 
elimination stopping can indeed discard certain alternative pieces much earlier that the stopping time. 
While all results so far hold for any distribution and bandit structure, in the remaining we focus on Gaussian linear bandits. Other distribution classes beyond Gaussians could be used with minor modifications (see Appendix \ref{sub:beyond_gaussians}) but the Gaussian case simplifies the exposition. Since most existing adaptive sampling rules target the optimal proportions from the lower bound of \cite{garivier2016optimal}, we unify them under the following assumption.
\begin{assumption}\label{asm:sampling-rule}
Consider the concentration events
\begin{align}\label{eq:Et}
E_t := \left\{ \forall s \leq t: L_s(\hat{\theta}_s,\theta) \leq \beta_{t,1/t^2} \right\} \: .
\end{align}
A sampling rule is said to have low information regret if there exists a problem-dependent function $R(\theta,t)$ which is sub-linear in $t$ such that for each time $t$ where $E_t$ holds,
\begin{align}\label{eq:no-regret-property}
\inf_{\lambda \in \Lambda(i^\star(\theta))} \sum_{k\in[K]}N_t^k\KL_k(\theta,\lambda) \geq t H^\star(\theta) - R(\theta,t).
\end{align}
\end{assumption}
The left-hand side of \eqref{eq:no-regret-property} can be understood as the information collected by the sampling rule at time $t$ to discriminate $\theta$ with all its alternatives. Therefore, Assumption \ref{asm:sampling-rule} requires that information to be comparable (up to a low-order term $R(\theta,t)$) with the maximal one from the lower bound.
In Appendix \ref{app:assumptions}, we show that this is satisfied by both Track-and-Stop \cite{garivier2016optimal} and the approach in \cite{degenne2019non}.

Let $H_p(\omega, \theta) := \inf_{\lambda \in \Lambda_p(i^\star(\theta))} \sum_{k\in[K]}\omega^k\KL_k(\theta,\lambda)$, the information that sampling with proportions $\omega$ brings to discriminate $\theta$ from the alternative piece $\Lambda_p(i^\star(\theta))$. Note that $H^\star(\theta) = \max_{\omega\in\Delta_K}\min_{p\in\cP(i^\star(\theta))}H_p(\omega, \theta)$. For $\epsilon\geq 0$, let $\Omega_\epsilon(\theta) := \{ \omega\in\Delta_K \mid \inf_{\lambda \in \Lambda(i^\star(\theta))}  \sum_k \omega^k\KL_k(\theta,\lambda) \geq H^\star(\theta) - \epsilon\}$ be the set of $\epsilon$-optimal proportions.

\begin{theorem}[Piece elimination]\label{th:piece-elim}
The stopping time of any sampling rule having low information regret, combined with LLR stopping, satisfies $\mathbb{E}[\tau] \leq \bar{t} + 2$, where $\bar{t}$ is the first integer such that
\begin{align}\label{eq:llr-stopping-ineq}
t \geq \left(\left(\sqrt{\beta_{t,\delta}} + \sqrt{\beta_{t,1/t^2}}\right)^2 + R(\theta,t)\right) / H^\star(\theta).
\end{align}
When the same sampling rule is combined with elimination stopping, let $\tau_p$ be the time at which $p \in \cP(i^\star(\theta))$ is eliminated. Then, $\mathbb{E}[\tau_p] \leq \min\{\bar{t}_p, \bar{t} \} + 2$, where $\bar{t}_p$ is the first integer such that
\begin{align}\label{eq:elimination_time}
t \geq \max\left\{\frac{\left(\sqrt{\beta_{t,\delta}} + \sqrt{\beta_{t,1/t^2}}\right)^2}{\min_{\omega \in \Omega_{R(\theta,t)/t}(\theta)}H_p(\omega, \theta)}, G(\theta,t) \right\},
\end{align}
with $G(\theta,t) = 0$ for full elimination and $G(\theta,t) = \frac{ 4\beta_{t,1/t^2} + R(\theta,t)}{H^\star(\theta)}$ for selective elimination.
\end{theorem}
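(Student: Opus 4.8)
The plan is to use the standard good-event-plus-peeling technique. I would show that on the high-probability event $E_t$ the relevant criterion (LLR stopping in the first part, elimination of piece $p$ in the second) is already met at time $t$ once $t$ passes the stated threshold, so that $\{\tau > t\}\subseteq E_t^c$ (resp. $\{\tau_p > t\}\subseteq E_t^c$) for $t$ large enough. Writing $\mathbb{E}[\tau] = \sum_{t\ge 0}\mathbb{P}(\tau > t)$ and splitting the sum at the threshold, the tail is bounded by $\sum_{t\ge 1}\mathbb{P}(E_t^c)$. Since $\beta_{s,\delta}$ is nondecreasing in $s$, applying the concentration property~\eqref{eq:concentration-beta} with $\delta = 1/t^2$ gives $\mathbb{P}(E_t^c)\le 1/t^2$, and $\sum_{t\ge1}1/t^2 < 2$; this is exactly the origin of the additive ``$+2$''. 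Everything then reduces to locating, on $E_t$, the first time the criterion fires.

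The two facts I would extract from $E_t$ are: (i) the estimation error is controlled, $\tfrac12\|\hat\theta_t-\theta\|_{V_{N_t}}^2 = L_t(\hat\theta_t,\theta)\le \beta_{t,1/t^2}$; and (ii) by Assumption~\ref{asm:sampling-rule} the empirical proportions $\omega_t := N_t/t$ lie in $\Omega_{R(\theta,t)/t}(\theta)$, since dividing~\eqref{eq:no-regret-property} by $t$ is exactly the defining inequality of that set. Specializing to Gaussian linear bandits, the MLE identity makes the cross term vanish, so $L_t(\hat\theta_t,\lambda) = \tfrac12\|\hat\theta_t-\lambda\|_{V_{N_t}}^2$ exactly, while $\sum_k N_t^k\KL_k(\theta,\lambda) = \tfrac12\|\theta-\lambda\|_{V_{N_t}}^2$. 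The single workhorse is the triangle inequality in the $V_{N_t}$ norm, $\|\hat\theta_t-\lambda\|_{V_{N_t}} \ge \|\theta-\lambda\|_{V_{N_t}} - \|\hat\theta_t-\theta\|_{V_{N_t}}$, that is $\sqrt{2L_t(\hat\theta_t,\lambda)} \ge \sqrt{2\textstyle\sum_k N_t^k\KL_k(\theta,\lambda)} - \sqrt{2\beta_{t,1/t^2}}$, valid for every $\lambda$.

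For the LLR bound I would take the infimum over $\lambda\in\Lambda(i^\star(\theta))$ (the $-\sqrt{2\beta_{t,1/t^2}}$ term being constant in $\lambda$) and invoke Assumption~\ref{asm:sampling-rule} to obtain $\sqrt{2\inf_{\lambda\in\Lambda(i^\star(\theta))}L_t(\hat\theta_t,\lambda)} \ge \sqrt{2(tH^\star(\theta)-R(\theta,t))}-\sqrt{2\beta_{t,1/t^2}}$. Requiring the right-hand side to exceed $\sqrt{2\beta_{t,\delta}}$ and squaring gives precisely the inequality defining $\bar t$; moreover positivity of this infimum forces $\hat\theta_t\notin\Lambda(i^\star(\theta))$, i.e. $i^\star(\hat\theta_t)=i^\star(\theta)$, so the rule fires with the correct answer. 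For piece elimination I would rerun the computation taking the infimum only over the single piece $\Lambda_p(i^\star(\theta))$: here $\inf_{\lambda\in\Lambda_p}\sum_k N_t^k\KL_k(\theta,\lambda) = t\,H_p(\omega_t,\theta) \ge t\min_{\omega\in\Omega_{R(\theta,t)/t}(\theta)}H_p(\omega,\theta)$ by fact (ii), which after the same squaring produces the first term in the maximum defining $\bar t_p$. For full elimination the piece of $i^\star(\theta)$ is refreshed every round, so this alone certifies elimination ($G=0$). For selective elimination the answer $i^\star(\theta)$ is updated only when $i^\star(\hat\theta_t)=i^\star(\theta)$, which I would guarantee on $E_t$ by the same contradiction (a wrong empirical answer places $\hat\theta_t\in\Lambda(i^\star(\theta))$, clashing with (i)--(ii)); this yields the extra threshold $G(\theta,t)$. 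Since $\tau_p$ is also dominated by the LLR stopping time, taking $\bar t_p$ just past the maximum of the two thresholds gives $\{\tau_p>t\}\subseteq E_t^c$ for $t\ge \min\{\bar t_p,\bar t\}$, hence the claimed bound.

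The main obstacle I anticipate is pinning down the selective-elimination threshold $G(\theta,t)$: a bare count already gives $i^\star(\hat\theta_t)=i^\star(\theta)$ whenever $t>(\beta_{t,1/t^2}+R(\theta,t))/H^\star(\theta)$, so recovering the stated constant $4$ requires the more careful margin-based separation, demanding that the $V_{N_t}$-distance from $\theta$ to $\Lambda(i^\star(\theta))$ exceed roughly twice the estimation radius $\sqrt{2\beta_{t,1/t^2}}$, which squares to the $4\beta_{t,1/t^2}$ in $G$ and makes the empirical answer robustly correct at the elimination time. A secondary technical point is to verify that the implicit inequalities defining $\bar t$ and $\bar t_p$, once satisfied at the first integer, persist for all larger $t$ (so that $\{\cdot>t\}\subseteq E_t^c$ holds at every $t$ past the threshold, not merely at it); this follows from the sublinearity of $R(\theta,t)$ and the logarithmic growth of $\beta_{t,\cdot}$, which render the governing functions eventually increasing.
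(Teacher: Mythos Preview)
Your proposal is correct and matches the paper's proof almost step for step: the paper uses the same good-event peeling (their Lemma~\ref{lemma:upper-bound-sum-inf} is exactly your triangle-inequality step, and ``Lemma 19 in \cite{reda2021dealing}'' is the $\sum_t 1/t^2$ tail bound yielding the $+2$), the same observation that $N_t/t\in\Omega_{R(\theta,t)/t}(\theta)$ under $E_t$, and a separate lemma (Lemma~\ref{lem:empirical-vs-true-answer}) forcing $i^\star(\hat\theta_t)=i^\star(\theta)$ for the selective case. On your flagged obstacle: the paper obtains the factor $4$ in $G(\theta,t)$ not through a margin argument but by first requiring the stronger intermediate condition $\inf_{\lambda\in\Lambda(i^\star(\hat\theta_t))}L_t(\hat\theta_t,\lambda)>\beta_{t,1/t^2}$ and then applying the triangle inequality once more, which collects two copies of $\sqrt{\beta_{t,1/t^2}}$ before squaring; your ``bare'' contradiction (wrong answer $\Rightarrow \hat\theta_t\in\Lambda(i^\star(\theta))$, contradicting Assumption~\ref{asm:sampling-rule}) is valid and indeed gives the tighter constant $1$, so you need not work harder to recover the $4$.
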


First, the bound we obtain on the elimination time of pieces in $\cP(i^\star(\theta))$ is not worse than the bound we obtain on the stopping time of LLR stopping.
Second, with elimination stopping, such eliminations can actually happen much sooner. Intuitively, sampling rules with low information regret play arms with proportions that are close to the optimal ones. If all of such ``good'' proportions provide large information for eliminating some piece $p\in\cP(i^\star(\theta))$, then $p$ is eliminated much sooner than the actual stopping time (which requires eliminating the worst-case piece in the same set).

While both elimination rules are provably efficient, with full elimination enjoying slighly better guarantees\footnote{Note that $G(\theta,t)$ for selective elimination contributes only a finite (in $\delta$) sample complexity.}, selective elimination provably never worsens (and possibly improves) the computational complexity over LLR stopping. In all applications we consider, implementing LLR stopping requires one minimization for each of the same alternative pieces we use for elimination stopping.
Therefore, the total number of minimizations required by LLR stopping is $\sum_{t=1}^{\tau_{\mathrm{llr}}} |\cP(i^\star(\hat{\theta}_t))|$ versus $\sum_{t=1}^{\tau_{\mathrm{s.elim}}} |\cP_t^{\mathrm{stp}}(i^\star(\hat{\theta}_t))|$ for selective elimination.
The second is never larger since $\tau_{\mathrm{s.elim}} \leq \tau_{\mathrm{llr}}$ by Theorem \ref{th:elim-better-than-llr} and $\cP_t^{\mathrm{stp}}(i^\star(\hat{\theta}_t)) \subseteq \cP(i^\star(\hat{\theta}_t))$ for all $t$, and much smaller if eliminations happen early, as we shall verify in experiments.
In our linear BAI example we need to perform $(K-1)$ minimizations at each step, one for each sub-optimal arm, in order to implement LLR stopping.
On the other hand, we need only $|\cP_t^{\mathrm{stp}}(i^\star(\hat{\theta}_t))|$ minimizations with selective elimination, one for each active sub-optimal arm, while full elimination takes $\sum_{i\in[K]}|\cP_t^{\mathrm{stp}}(i)|$ to update all the sets.

Note that Theorem \ref{th:piece-elim} does not provide a better bound on $\mathbb{E}[\tau]$ for elimination stopping than for LLR stopping. In fact, when evaluating the bound on $\mathbb{E}[\tau_p]$ for the worst-case piece in $p\in\cP(i^\star(\theta))$, we recover the one on $\mathbb{E}[\tau]$. This is intuitive since the sampling rule is playing proportions that try to eliminate all alternative pieces at once. The following result formalizes this intuition. 
\begin{theorem}\label{th:elim-vs-llr-fixed-sampling}
Suppose that we can write $\beta_{t, \delta} = \log\frac{1}{\delta} + \xi(t, \delta)$ with $\lim_{\delta \to 0}\xi(t, \delta)/\log(1/\delta) = 0$. Then for any sampling rule that satisfies Assumption \ref{asm:sampling-rule},
\begin{align*}
\mathbb{E}[\tau_{\mathrm{llr}}] \le \mathbb{E}[\tau_{\mathrm{elim}}] + f(\theta, \delta) \: .
\end{align*}
with $\lim_{\delta \to 0} f(\theta, \delta)/\log(1/\delta) = 0$. Here $\tau_{\mathrm{elim}}$ can stand for either full or selective elimination.
\end{theorem}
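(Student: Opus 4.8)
The plan is to sandwich the two stopping times between expressions that share the same leading order $\log(1/\delta)/H^\star(\theta)$: an upper bound on $\mathbb{E}[\tau_{\mathrm{llr}}]$ taken directly from Theorem \ref{th:piece-elim}, and a lower bound on $\mathbb{E}[\tau_{\mathrm{elim}}]$ taken from the information-theoretic lower bound \eqref{eq:lower_bound}. Since Theorem \ref{th:elim-better-than-llr} already gives $\tau_{\mathrm{f.elim}}\le\tau_{\mathrm{s.elim}}\le\tau_{\mathrm{llr}}$ almost surely, the gap $\mathbb{E}[\tau_{\mathrm{llr}}]-\mathbb{E}[\tau_{\mathrm{elim}}]$ is nonnegative, and all that remains is to show it is of lower order than $\log(1/\delta)$. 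The key observation that makes this cheap is that the lower bound on $\mathbb{E}[\tau_{\mathrm{elim}}]$ can be obtained \emph{for free} from $\delta$-correctness, without any of the piecewise concentration analysis used in Theorem \ref{th:piece-elim}.

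First I would check that the elimination algorithm is $\delta$-correct. Lemma \ref{lem:delta-correct} gives $\mathbb{P}_\theta(\ihat\ne i^\star(\theta))\le\delta$ as soon as the threshold guarantees the concentration \eqref{eq:concentration-beta}, and a sampling rule with low information regret (Assumption \ref{asm:sampling-rule}) stops almost surely since $\tau_{\mathrm{elim}}\le\tau_{\mathrm{llr}}$ and $\mathbb{E}[\tau_{\mathrm{llr}}]<\infty$ by Theorem \ref{th:piece-elim}. Hence the lower bound \eqref{eq:lower_bound} applies to either elimination variant, giving $\mathbb{E}[\tau_{\mathrm{elim}}]\ge\log(1/(2.4\delta))/H^\star(\theta)$. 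Note this single inequality covers both full and selective elimination, which explains why the statement is phrased uniformly for the two; alternatively one may just use $\tau_{\mathrm{s.elim}}\ge\tau_{\mathrm{f.elim}}$.

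Then I would invoke Theorem \ref{th:piece-elim} to write $\mathbb{E}[\tau_{\mathrm{llr}}]\le\bar t+2$, with $\bar t$ the first integer satisfying \eqref{eq:llr-stopping-ineq}, and define $f(\theta,\delta):=\bar t+2-\log(1/(2.4\delta))/H^\star(\theta)$. Subtracting the two bounds gives $\mathbb{E}[\tau_{\mathrm{llr}}]-\mathbb{E}[\tau_{\mathrm{elim}}]\le f(\theta,\delta)$, so it only remains to verify $f(\theta,\delta)/\log(1/\delta)\to 0$. For this I expand $(\sqrt{\beta_{\bar t,\delta}}+\sqrt{\beta_{\bar t,1/\bar t^2}})^2=\beta_{\bar t,\delta}+2\sqrt{\beta_{\bar t,\delta}\beta_{\bar t,1/\bar t^2}}+\beta_{\bar t,1/\bar t^2}$ and substitute $\beta_{\bar t,\delta}=\log(1/\delta)+\xi(\bar t,\delta)$. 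The leading terms $\log(1/\delta)/H^\star(\theta)$ cancel against the lower bound, and every remaining contribution should be shown to be $o(\log(1/\delta))$: the term $\xi(\bar t,\delta)$ by the assumed vanishing of $\xi(t,\delta)/\log(1/\delta)$, the term $R(\theta,\bar t)$ by sub-linearity of $R$ together with $\bar t=\Theta(\log(1/\delta))$, the cross term $\sqrt{\beta_{\bar t,\delta}\beta_{\bar t,1/\bar t^2}}$ and the term $\beta_{\bar t,1/\bar t^2}$ because the latter threshold grows only like $\log(\bar t^2)=O(\log\log(1/\delta))$, and finally the constants $\log 2.4$ and $2$.

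The main obstacle I anticipate lies entirely in this asymptotic bookkeeping, specifically in justifying that the quantities evaluated at the $\delta$-dependent time $\bar t$ stay lower order: the hypothesis $\lim_{\delta\to0}\xi(t,\delta)/\log(1/\delta)=0$ is stated at fixed $t$, whereas I must apply it along $t=\bar t(\delta)\to\infty$. I would therefore first establish $\bar t=\frac{\log(1/\delta)}{H^\star(\theta)}(1+o(1))$ by a bootstrapping/fixed-point argument on \eqref{eq:llr-stopping-ineq}, using that its right-hand side is $\log(1/\delta)/H^\star(\theta)$ plus terms that are either sub-linear or poly-logarithmic in $t$, and then transfer the fixed-$t$ statement to $t=\bar t$ via the growth-rate control of $\xi$ and of $\beta_{\cdot,1/\cdot^2}$. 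Everything else is routine; in particular no delicate event-by-event concentration argument is required, since the entire lower bound is delivered by $\delta$-correctness.
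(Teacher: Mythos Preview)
Your proposal is correct and follows essentially the same route as the paper: upper-bound $\mathbb{E}[\tau_{\mathrm{llr}}]$ via Theorem~\ref{th:piece-elim}, lower-bound $\mathbb{E}[\tau_{\mathrm{elim}}]$ via $\delta$-correctness and the information-theoretic lower bound~\eqref{eq:lower_bound}, subtract, expand the square, and substitute $\beta_{\bar t,\delta}=\log(1/\delta)+\xi(\bar t,\delta)$ to cancel the leading term. The paper's proof in fact stops at the display of the residual terms and does not spell out the asymptotic verification you flag as the ``main obstacle''; your additional bootstrapping argument that $\bar t=\Theta(\log(1/\delta))$ and the ensuing control of $\xi(\bar t,\delta)$, $R(\theta,\bar t)$, and $\beta_{\bar t,1/\bar t^2}$ is exactly the bookkeeping the paper leaves implicit.
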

See Appendix~\ref{subapp:proof_elim_vs_llr} for $f$. This result shows that when the sampling rule is tailored to the LLR stopping rule, the expected LLR and elimination stopping times differ by at most low-order (in $\log(1/\delta)$) terms. As $\delta\rightarrow 0$ the two expected stopping times converge to the same value $H^\star(\theta)^{-1}\log(1/\delta)$, which is the asymptotically-optimal sample complexity prescribed by the lower bound~\eqref{eq:lower_bound}.

We showed that, for both elimination rules, some pieces of the alternative are discarded sooner than the stopping time, and that the overall sample complexity of the method can only improve over LLR stopping.
However, since the sampling rule of the algorithm was not changed, elimination does not change the computational cost of each sampling step, only the cost of checking the stopping rule.

\subsection{An example}
\label{sub:an_example}

We compare LLR and elimination stopping on a simple example so as to better quantify the elimination times of Theorem~\ref{th:piece-elim} and their computational impact (see Appendix~\ref{app:example} for a full discussion).

\begin{wrapfigure}{r}{4.5cm}
    \includegraphics[width=4.5cm]{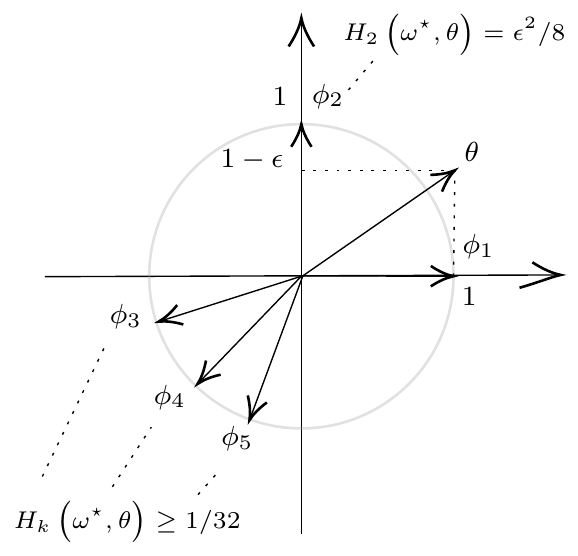}
    \caption{Example of BAI instance with $d=2$ and $K=5$.}\label{fig:example-main}
    \vspace{-0.5cm}
\end{wrapfigure} 

Consider BAI in a Gaussian linear bandit instance with unit variance, $d=2$, and arbitrary number of arms $K \geq 3$ (see Figure \ref{fig:example-main}). The arm features are $\phi_1 = (1,0)^T$, $\phi_2 = (0,1)^T$, and, for all $i = 3,\dots,K$, $\phi_i = (a_i,b_i)^T$ with $a_i,b_i$ arbitrary values in $(-1,0)$ such that $\|\phi_i\|_2=1$. The true parameter is $\theta = (1,1-\varepsilon)^T$, for $\varepsilon \in (0,1/2)$ a possibly very small value. Arm 1 is optimal with mean $\mu_1(\theta) = 1$, while arm 2 is sub-optimal with mean $\mu_2(\theta) = 1-\varepsilon$. For all other arms $i=3,\dots,K$, $\mu_i(\theta) \leq 0$.

Let $\omega \in \Delta_K$ be any allocation. Recall that in BAI each piece index is simply an arm, and $\cP(i^\star(\theta)) = \cP(1) = \{2,\dots,K\}$. Let $k\in\cP(1)$ be any sub-optimal arm. The distance to the $k$-th alternative piece $H_k(\omega, \theta)$ can be computed in closed form as
$H_k(\omega, \theta) = ((\phi_1 - \phi_k)^T \theta)^2/(2\|\phi_1 - \phi_k\|_{V_\omega^{-1}}^2)$.
The optimal allocation is $\omega^\star = \argmax_\omega \min_k H_k(\omega, \theta) = (1/2,1/2,0,\dots,0)^\top$.

The intuition why this example is interesting is as follows. Any correct strategy is required to discriminate between arm 1 and 2 (i.e., to figure out that arm 1 is optimal), which requires roughly $O(1/\varepsilon^2)$ samples from both. An optimal strategy plays these two arms nearly with the same proportions. Since $\phi_1$ and $\phi_2$ form the canonical basis of $\mathbb{R}^2$, the samples collected by this strategy are informative for estimating the mean reward of \emph{every} arm, even those than are not played. Then, since arms $3,\dots,K$ have at least a sub-optimality gap of $1$, an elimination-based strategy discards them with a number of samples not scaling with $1/\varepsilon^2$. This means that a non-elimination strategy runs for $O(1/\varepsilon^2)$ steps over the original problem with $K$ arms, while an elimination-based one quickly reduces the problem to one with only 2 arms. The main impact is computational: since most algorithms need to compute some statistics for each active arm at each round (e.g., closest alternatives, confidence intervals, etc.), the computational complexity of a non-elimination algorithm is at least $O(K/\varepsilon^2)$, while the one of an elimination-based variant is roughly $O(K + 1/\varepsilon^2)$, a potentially very large improvement.

We now quantify the elimination times and computational complexity on this example. Since such quantities depend on the specific sampling rule, we do it for an oracle strategy that samples according to $\omega^\star$. Similar results can be derived for any low information regret sampling rule (see Appendix~\ref{app:example}).
\begin{proposition}
    For any $K \geq 3$ and $\varepsilon \in (0,1/2)$, for any $\delta \in (0,1)$, the oracle strategy combined with LLR stopping satisfies on the example instance
    \begin{align*}
        \mathbb{E}[\tau] \geq \Omega\left(\frac{\log(1/\delta)}{\varepsilon^2}\right).
    \end{align*}
    On the same instance, for the oracle strategy with elimination at stopping and a threshold $\beta_{t,\delta} = \log(1/\delta) + O(\log(t))$, the expected elimination time of any piece (i.e., arm) $k\geq 3$ is
    \begin{align*}
        \mathbb{E}[\tau_k] &\leq \widetilde{O}(\log(1/\delta)) & \text{for full elimination},\\
        \mathbb{E}[\tau_k] &\leq \widetilde{O}\left(\log(1/\delta) + \frac{1}{\varepsilon^2}\right) & \text{for selective elimination}.
    \end{align*}
    Moreover, the expected per-round computation time of the oracle strategy with LLR stopping is $\Omega(K)$, while it is at most $O(K^2\varepsilon^2)$ for full elimination and $O(K\varepsilon^2 + K/\log(1/\delta))$ for selective elimination.
\end{proposition}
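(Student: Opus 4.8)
The plan is to handle the four assertions in turn, using the general lower bound \eqref{eq:lower_bound} for $\mathbb{E}[\tau]$, Theorem~\ref{th:piece-elim} for the elimination times $\tau_k$, and then an amortized count of piece-minimizations for the computational bounds.

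For the lower bound I would first note that the oracle strategy (which plays arms $1$ and $2$, whose features span $\mathbb{R}^2$) combined with a concentration-calibrated LLR rule is $\delta$-correct, so \eqref{eq:lower_bound} applies verbatim and gives $\mathbb{E}[\tau]\ge\log(1/(2.4\delta))/H^\star(\theta)$. It then remains to evaluate $H^\star(\theta)$. Since $\omega^\star=(1/2,1/2,0,\dots,0)^\top$ is optimal, $H^\star(\theta)=\min_{k\ge2}H_k(\omega^\star,\theta)$, and with $V_{\omega^\star}=\tfrac12 I$ the stated closed form yields $H_2(\omega^\star,\theta)=\varepsilon^2/8$ (here $\|\phi_1-\phi_2\|_{V_{\omega^\star}^{-1}}^2=4$), whereas for every $k\ge3$ we have $\mu_1-\mu_k\ge1$ and $\|\phi_1-\phi_k\|_{V_{\omega^\star}^{-1}}^2\le8$, hence $H_k(\omega^\star,\theta)\ge1/16$. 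Thus arm $2$ is the binding piece, $H^\star(\theta)=\varepsilon^2/8$, and $\mathbb{E}[\tau]\ge8\log(1/(2.4\delta))/\varepsilon^2=\Omega(\log(1/\delta)/\varepsilon^2)$.

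For the elimination times of the pieces $k\ge3$ I would invoke Theorem~\ref{th:piece-elim} directly. The oracle tracks $\omega^\star$ up to rounding, so its information regret is $R(\theta,t)=O(1)$, the set $\Omega_{R(\theta,t)/t}(\theta)=\Omega_{O(1/t)}(\theta)$ contracts to $\{\omega^\star\}$, and by continuity of $\omega\mapsto H_k(\omega,\theta)$ around the invertible point $V_{\omega^\star}=\tfrac12 I$ together with $H_k(\omega^\star,\theta)\ge1/16$, one gets $\min_{\omega\in\Omega_{O(1/t)}(\theta)}H_k(\omega,\theta)\ge c$ for a universal constant $c>0$ and all large $t$. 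Plugging $\beta_{t,\delta}=\log(1/\delta)+O(\log t)$ into \eqref{eq:elimination_time} with $G\equiv0$ makes $\bar{t}_k$ the first $t$ with $t\ge(\sqrt{\beta_{t,\delta}}+\sqrt{\beta_{t,1/t^2}})^2/c=\widetilde{O}(\log(1/\delta))$, so $\mathbb{E}[\tau_k]\le\widetilde{O}(\log(1/\delta))$ for full elimination; for selective elimination the extra term $G(\theta,t)=(4\beta_{t,1/t^2}+R(\theta,t))/H^\star(\theta)=O(\log(t)/\varepsilon^2)$ turns the constraint into $t\ge\max\{\widetilde{O}(\log(1/\delta)),\,O(\log(t)/\varepsilon^2)\}$, giving $\mathbb{E}[\tau_k]\le\widetilde{O}(\log(1/\delta)+1/\varepsilon^2)$.

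For the computational bounds I would amortize, reporting the expected total number of piece-minimizations divided by $\mathbb{E}[\tau]=\Theta(\log(1/\delta)/\varepsilon^2)$. LLR stopping performs $|\cP(i^\star(\hat{\theta}_t))|=K-1$ minimizations each round, hence $\Omega(K)$. Selective elimination only touches the active pieces of answer $1$, all of which are eliminable, so the total is $\sum_{t\le\tau}|\cP_t^{\mathrm{stp}}(1)|=\sum_{k=2}^K\min\{\tau_k,\tau\}=\tau+(K-2)\,\widetilde{O}(\log(1/\delta)+1/\varepsilon^2)$; dividing by $\mathbb{E}[\tau]$ gives $O(K\varepsilon^2+K/\log(1/\delta))$. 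The delicate case, and the main obstacle, is full elimination: the formal update \eqref{eq:elimination-sets} keeps, for every wrong answer $i$, the piece associated with an arm that beats arm $i$, and such a piece has infimum LLR equal to $0$ and is therefore never eliminated—so a naive count of $\sum_{i}|\cP_t^{\mathrm{stp}}(i)|$ would be $\Omega(K^2)$ per round and overwhelm the claimed bound. The fix is to use the efficient implementation, which stops processing an answer once a competitor is confirmed to beat it: every answer $i\ge3$ is discarded within $\widetilde{O}(\log(1/\delta))$ rounds because arm $1$ beats it with a constant gap ($H_i(\omega^\star,\theta)\ge1/16$), so all answers $i\ge3$ together contribute only $O(K^2\log(1/\delta))$ minimizations, while the two pieces that persist until $\tau$ (piece $2$ of answer $1$ and piece $1$ of answer $2$) add $O(\tau)$. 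The total $O(K^2\log(1/\delta))+O(\tau)$ divided by $\mathbb{E}[\tau]=\Theta(\log(1/\delta)/\varepsilon^2)$ yields $O(K^2\varepsilon^2)$. Establishing rigorously that this answer-level pruning changes only the bookkeeping and not the stopping behaviour is the part that requires the most care.
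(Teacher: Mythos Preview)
Your proposal is correct and follows essentially the same route as the paper: the lower bound via $H^\star(\theta)=\varepsilon^2/8$, Theorem~\ref{th:piece-elim} for the elimination times, and an amortized count of piece-minimizations (using, as you anticipated, the efficient BAI implementation of full elimination that discards whole answers) for the computational claims. The one place where the paper is slightly more direct is in handling the information regret of the oracle: since the oracle simply alternates between arms $1$ and $2$, at every even $t$ one has $N_t/t=\omega^\star$ exactly, so $R(\theta,t)=0$ and $\Omega_{0}(\theta)=\{\omega^\star\}$, which lets you plug $H_k(\omega^\star,\theta)\ge 1/32$ into \eqref{eq:elimination_time} without any continuity argument about $\Omega_{O(1/t)}(\theta)$.
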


\section{Elimination at sampling}
\label{sec:elimination_at_sampling}

We show how to adapt sampling rules in order to accommodate piece elimination. There are two reasons for doing this: first, adapting the sampling to ignore pieces that have been discarded could reduce the sample complexity; second, the amount of computations needed to update the sampling strategy is often proportional to the number of pieces and decreasing it can reduce the overall time.

We start from an algorithm using LLR stopping, for which we change the stopping rule as above. The sampling strategies that we can adapt are those that aggregate information from each alternative piece.
For example, in linear BAI, methods that mimic the lower bound allocation~\eqref{eq:lower_bound}, like Track-and-Stop \citep{garivier2016optimal}, LinGame \citep{degenne2020gamification}, or FWS \citep{wang2021fast}, and even LinGapE \citep{xu2018fully}, all compute distances or closest points to each piece in the decomposition $\{\lambda \mid \phi_j^\top \lambda \ge \phi_{i^\star(\hat{\theta}_t)}^\top \lambda\}$.
Eliminating pieces at sampling simply means omitting from such computations the arms that were deemed sub-optimal. Algorithm \ref{alg:tas} shows how Track-and-Stop \cite{garivier2016optimal} can be modified to incorporate elimination at sampling and stopping.

\begin{algorithm}
	\small
	\begin{tabularx}{\textwidth}{*{2}{>{\centering\arraybackslash}X}}
	\begin{algorithmic}
	\vspace{-0.8em}
	\While{not stopped}
		\State \vphantom{Set ${\cP}_{t}^{\mathrm{stp}}(i^\star(\hat{\theta}_{t})) = {\cP}_{t-1}^{\mathrm{stp}}(i^\star(\hat{\theta}_{t}))$}
		\For{\textcolor{carmine}{$p \in \cP(i^\star(\hat{\theta}_{t}))$}} \Comment{stopping}
			\State $L_{p,t} = \inf_{\lambda \in \Lambda_p(i^\star(\hat{\theta}_{t}))} L_t(\hat{\theta}_{t}, \lambda)$
			\State \vphantom{\textbf{if} $L_{p,t} > \beta_{t,\delta}$ \textbf{delete} $p$ from ${\cP}_{t}^{\mathrm{stp}}(i^\star(\hat{\theta}_{t}))$}
		\EndFor
		\State \textbf{if} $\textcolor{carmine}{\forall p \in \cP(i^\star(\hat{\theta}_{t}))} : L_{p,t} > \beta_{t,\delta}$ \textbf{then} STOP
		\State $w_t = \argmax_{\omega} \min_{\textcolor{carmine}{p \in \cP(i^\star(\hat{\theta}_{t}))}} H_p(\omega,\hat{\theta}_t)$
		\State \textbf{if} $ \exists k : N_t^k < \sqrt{t}$ pull $k_{t+1} = \argmin_k N_t^k$
		\State \textbf{else} pull $k_{t+1} = \argmin_k (N_t^k - t w_t^k)$
		\State \vphantom{Update $\cP_{t+1}^{\mathrm{smp}}(i^\star(\hat{\theta}_{t}))$ (Algorithm \ref{alg:update-pieces})}
	\EndWhile
	\vspace{-1em}
	\end{algorithmic}
	&
	\begin{algorithmic}
		\vspace{-0.8em}
		\While{not stopped}
		\State Set ${\cP}_{t}^{\mathrm{stp}}(i^\star(\hat{\theta}_{t})) = {\cP}_{t-1}^{\mathrm{stp}}(i^\star(\hat{\theta}_{t}))$
		\For{\textcolor{carmine}{$p \in {\cP}_{t-1}^{\mathrm{stp}}(i^\star(\hat{\theta}_{t}))$}} \Comment{stopping}
			\State $L_{p,t} = \inf_{\lambda \in \Lambda_p(i^\star(\hat{\theta}_{t}))} L_t(\hat{\theta}_{t}, \lambda)$
			\State \textbf{if} $L_{p,t} > \beta_{t,\delta}$ \textbf{delete} $p$ from ${\cP}_{t}^{\mathrm{stp}}(i^\star(\hat{\theta}_{t}))$
		\EndFor
		\State \textbf{if} \textcolor{carmine}{${\cP}_{t}^{\mathrm{stp}}(i^\star(\hat{\theta}_{t})) = \emptyset$} \textbf{then} STOP
		\State $w_t = \argmax_{\omega} \min_{\textcolor{carmine}{p \in \cP_t^{\mathrm{smp}}(i^\star(\hat{\theta}_{t}))}} H_p(\omega,\hat{\theta}_t)$
		\State \textbf{if} $ \exists k : N_t^k < \sqrt{t}$ pull $k_{t+1} = \argmin_k N_t^k$
		\State \textbf{else} pull $k_{t+1} = \argmin_k (N_t^k - t w_t^k)$
		\State Update $\cP_{t+1}^{\mathrm{smp}}(i^\star(\hat{\theta}_{t}))$ (Algorithm \ref{alg:update-pieces})
	\EndWhile
	\vspace{-1em}
	\end{algorithmic}
	\end{tabularx}
	\caption{Track-and-Stop \cite{garivier2016optimal}: vanilla (left) and with selective elimination (right)}\label{alg:tas}
	\end{algorithm}


Similarly to elimination stopping, the idea is to maintain sets of active pieces at sampling $\cP_{t}^{\mathrm{smp}}(i)$ for each $i\in\cI$. Note that these are different from the ones introduced in Section \ref{sec:elimination_stopping_rules} for the stopping rule. The set is updated at each step like we did for the stopping sets, but with a different threshold $\alpha_{t,\delta}$ (see Appendix~\ref{app:proofs_of_sampling} for details). Additionally, we reset it very infrequently at steps $t\in\{\bar{t}_0^{2^j}\}_{j\geq 0}$, where $\bar{t}_0 \ge 2$.
Formally, let us define the helper sets $\tilde{\cP}_{t}^{\mathrm{smp}}(i)$ as $\tilde{\cP}_{0}^{\mathrm{smp}}(i) := \cP(i)$ and
\begin{align*}
\tilde{\cP}_t^{\mathrm{smp}}(i) := \begin{cases}
\tilde{\cP}_{t-1}^{\mathrm{smp}}(i) \cap \overline{\cP}_t(i;\alpha_{t,\delta}) & \text{if } t\notin\{\bar{t}_0^{2^j}\}_{j\geq 0}
\\ \overline{\cP}_t(i;\alpha_{t,\delta}) & \text{otherwise},
\end{cases}
\end{align*}
where $\overline{\cP}_t$ was defined in \eqref{eq:active-pieces-t-only}.
Let $\overline{t}_j := \bar{t}_0^{2^j}$ be the time step at which the $j$-th reset is performed and $j(t) := \lfloor \log_2\log_{\bar{t}_0} t \rfloor$ be the index of the last reset before $t$.
We define ${\cP}_t^{\mathrm{smp}}(i) := \tilde{\cP}_t^{\mathrm{smp}}(i) \cap \tilde{\cP}_{\overline{t}_{j(t)} - 1}^{\mathrm{smp}}(i)$, such that ${\cP}_t^{\mathrm{smp}}(i)$ is the intersection of all active pieces from the second-last reset up to $t$, i.e.,
$
{\cP}_t^{\mathrm{smp}}(i) = \bigcap_{s=\overline{t}_{j(t)-1}}^t \overline{\cP}_s(i;\alpha_{s,\delta}).
$
Since the resets are very infrequent, this definition only drops a small number of rounds from the intersection (less than $\sqrt{t}$). The detailed procedure to update these sets is summarized in Algorithm \ref{alg:update-pieces}. As before, we can instantiate both selective and full elimination.
%

The reason for the resets is two-fold. First, they ensure that the algorithm stops almost surely as required by Definition \ref{def:delta-correct}. In fact, without resets, it might happen with some small (less than $\delta$) probability that pieces containing the true parameter are eliminated, in which case the sampling rule could diverge. Second, they guarantee that the thresholds $(\alpha_{s,\delta})_{s=\overline{t}_{j(t)-1}}^t$ used in ${\cP}_t^{\mathrm{smp}}(i)$ are within a constant factor of each other. This is crucial to relate the LLR of active pieces at different times.

\subsection{Properties}
\label{sub:properties}

We consider a counterpart of Assumption \ref{asm:sampling-rule} for sampling rules combined with piece elimination.

\begin{assumption}\label{asm:sampling-rule-v2}
There exists a sub-linear (in $t$) problem-dependent function $R(\theta,t)$ such that, for each time $t$ where $E_t$ (defined in Equation \ref{eq:Et}) holds,
\begin{align*}
\min_{p\in\cP_{t}^{\mathrm{smp}}(i^\star(\theta))}\! \! \! \! \! H_p(N_t, \theta) 
\geq \max_{\omega\in\Delta_K} \sum_{s=1}^t \min_{p\in\cP_{s-1}^{\mathrm{smp}}(i^\star(\theta))}\! \! \! \! \! H_p(\omega, \theta) {-} R(\theta,t).
\end{align*}
\end{assumption}

Intuitively, the sampling rule maximizes the information for discriminating $\theta$ with all its alternatives from the sequence of active pieces $(\cP_{s-1}^{\mathrm{smp}}(i^\star(\theta)))_{s=1}^t$. We prove in Appendix \ref{app:assumptions} that the algorithms for which we proved Assumption \ref{asm:sampling-rule} also satisfy Assumption \ref{asm:sampling-rule-v2} when their sampling rules are combined with either full or selective elimination.

\begin{theorem}\label{th:sampling-rule-with-elim}
Consider a sampling rule that verifies Assumption \ref{asm:sampling-rule-v2} and uses either full or selective elimination with the sets ${\cP}_t^{\mathrm{smp}}$. Then, Assumption \ref{asm:sampling-rule} holds as well. Moreover, when using the same elimination rule at stopping, such a sampling rule verifies Theorem \ref{th:piece-elim}, i.e., it enjoys the same guarantees as without elimination at sampling.
\end{theorem}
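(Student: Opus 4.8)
The plan is to prove the two parts in sequence, with essentially all of the work in the first claim (that Assumption \ref{asm:sampling-rule} holds): once the elimination-augmented sampling rule is shown to have low information regret, the second statement is immediate, because Theorem \ref{th:piece-elim} is stated for \emph{any} rule satisfying Assumption \ref{asm:sampling-rule} combined with (the same) elimination stopping. So I would first reduce everything to establishing, on the event $E_t$ of \eqref{eq:Et}, the inequality $\min_{p\in\cP(i^\star(\theta))}H_p(N_t,\theta)\geq tH^\star(\theta)-R'(\theta,t)$ for some sub-linear $R'$, where the counts $N_t$ are those generated by the sampling rule with elimination. The natural move is to partition $\cP(i^\star(\theta))$ into the pieces that are \emph{active} at time $t$ (i.e. $p\in\cP_t^{\mathrm{smp}}(i^\star(\theta))$) and those that have been \emph{eliminated}, and to lower bound $H_p(N_t,\theta)$ in each case.

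For the active pieces the argument is short. Assumption \ref{asm:sampling-rule-v2} lower bounds $\min_{p\in\cP_t^{\mathrm{smp}}(i^\star(\theta))}H_p(N_t,\theta)$ by $\max_{\omega\in\Delta_K}\sum_{s=1}^t\min_{p\in\cP_{s-1}^{\mathrm{smp}}(i^\star(\theta))}H_p(\omega,\theta)-R(\theta,t)$. Here I would use that minimizing over a smaller collection of pieces only increases the inner minimum, so $\min_{p\in\cP_{s-1}^{\mathrm{smp}}(i^\star(\theta))}H_p(\omega,\theta)\geq\min_{p\in\cP(i^\star(\theta))}H_p(\omega,\theta)$ for every $s$ and $\omega$; summing over $s$ and taking $\max_\omega$ gives $\max_\omega t\min_{p\in\cP(i^\star(\theta))}H_p(\omega,\theta)=tH^\star(\theta)$. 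Hence every active piece already satisfies $H_p(N_t,\theta)\geq tH^\star(\theta)-R(\theta,t)$.

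The crux is the eliminated pieces, and this is where I expect the main difficulty. If $p\notin\cP_t^{\mathrm{smp}}(i^\star(\theta))=\bigcap_{s=\overline{t}_{j(t)-1}}^t\overline{\cP}_s(i^\star(\theta);\alpha_{s,\delta})$, then $p$ was pruned at some step $s^\star\in[\overline{t}_{j(t)-1},t]$, i.e. $\inf_{\lambda\in\Lambda_p(i^\star(\theta))}L_{s^\star}(\hat{\theta}_{s^\star},\lambda)\geq\alpha_{s^\star,\delta}$. On $E_t$ the estimator concentrates, so a triangle-type inequality on the square-root information (of the same $(\sqrt{\cdot}+\sqrt{\cdot})^2$ shape appearing in Theorem \ref{th:piece-elim}) converts this LLR certificate into a lower bound on the true-parameter information $H_p(N_{s^\star},\theta)$ at the elimination time. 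Since $H_p(\cdot,\theta)$ is the infimum of count-weighted sums of non-negative KL terms, it is non-decreasing (and super-additive) in the counts, so this bound is inherited at $t$: $H_p(N_t,\theta)\geq H_p(N_{s^\star},\theta)$. The reset schedule enters precisely here: it guarantees that the window $[\overline{t}_{j(t)-1},t]$ discards fewer than $\sqrt{t}$ rounds and, as stated after the definition of $\cP_t^{\mathrm{smp}}$, that the thresholds $\{\alpha_{s,\delta}\}_{s=\overline{t}_{j(t)-1}}^t$ remain within a constant factor of one another. I would combine this threshold control with the fact that $p$ was still active, hence covered by Assumption \ref{asm:sampling-rule-v2}, at step $s^\star-1$, so that $H_p(N_{s^\star-1},\theta)$ is already on the order of the optimal information schedule; transferring this to $t$ via monotonicity should yield $H_p(N_t,\theta)\geq tH^\star(\theta)-R'(\theta,t)$ with $R'$ absorbing the dropped rounds and the constant-factor threshold slack while remaining sub-linear. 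Taking the minimum of the active and eliminated bounds then establishes Assumption \ref{asm:sampling-rule}.

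With Assumption \ref{asm:sampling-rule} in hand, the second statement follows with no extra work: the elimination-at-sampling rule is now a low-information-regret rule, so plugging it into Theorem \ref{th:piece-elim} (with the same elimination rule used at stopping) reproduces the bound $\mathbb{E}[\tau]\leq\bar{t}+2$ and the per-piece bounds $\mathbb{E}[\tau_p]\leq\min\{\bar{t}_p,\bar{t}\}+2$, i.e. the same guarantees as without elimination at sampling. The main obstacle, as flagged, is the eliminated-pieces step: the certificate $L_{s^\star}\geq\alpha_{s^\star,\delta}$ is earned at a time $s^\star$ that can be far smaller than $t$, so the entire burden of showing that $H_p(N_t,\theta)$ has nonetheless kept pace with $tH^\star(\theta)$ rests on the interplay between the monotonicity and super-additivity of $H_p$ in the counts and the reset schedule's guarantee that the thresholds in the active window are comparable and that only a lower-order number of rounds is ever discarded.
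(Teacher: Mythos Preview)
Your plan for the active pieces and for the second claim is fine, but the eliminated-pieces step has a genuine gap: the target you set is too strong to be reachable from the ingredients you list. From the elimination certificate at $s^\star$ plus concentration and monotonicity you get $H_p(N_t,\theta)\gtrsim(\sqrt{\alpha_{s^\star,\delta}}-\sqrt{\beta_{t,1/t^2}})^2$, which is of order $\beta_{t,\delta}\approx\log(1/\delta)+O(\log t)$; from Assumption~\ref{asm:sampling-rule-v2} at $s^\star-1$ plus monotonicity you get $H_p(N_t,\theta)\geq (s^\star-1)H^\star(\theta)-R(\theta,s^\star-1)$ with $s^\star$ possibly as small as $\overline{t}_{j(t)-1}\leq\sqrt t$. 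Neither is $\geq tH^\star(\theta)-R'(\theta,t)$ for any sub-linear $R'$: the shortfall is of order $tH^\star(\theta)$, which cannot be ``absorbed'' into $R'$.

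The missing idea is that you should not try to push $H_p(N_t,\theta)$ all the way up to $tH^\star(\theta)$ for eliminated $p$; you only need that an eliminated piece is never the \emph{closest} one. This uses a fact your proposal never invokes: the algorithm has not stopped at $t$, so $\inf_{\lambda\in\Lambda(i^\star(\hat\theta_t))}L_t(\hat\theta_t,\lambda)<\beta_{t,\delta}$, which via concentration upper-bounds $\min_{q\in\cP(i^\star)}H_q(N_t,\theta)$ by roughly $(\sqrt{\beta_{t,\delta}}+\sqrt{\beta_{t,1/t^2}})^2$. Your certificate, together with the reset schedule, gives $H_p(N_t,\theta)\gtrsim\alpha_{s^\star,\delta}$, and $\alpha$ is designed precisely so that this exceeds the preceding quantity, yielding a contradiction if $p$ were the argmin. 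Hence $\min_{p\in\cP(i^\star)}H_p(N_t,\theta)=\min_{p\in\cP_t^{\mathrm{smp}}(i^\star)}H_p(N_t,\theta)$, and your active-pieces argument (Assumption~\ref{asm:sampling-rule-v2} plus $\cP_{s-1}^{\mathrm{smp}}(i^\star)\subseteq\cP(i^\star)$) finishes the job. This is exactly the route the paper takes via Lemma~\ref{lem:closest-alternatives-not-eliminated-in-sampling}; note also that the argument is inherently restricted to times before stopping, which is harmless for the application in Theorem~\ref{th:piece-elim}.
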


The proof is in Appendix~\ref{app:proofs_of_sampling}. Theorem \ref{th:sampling-rule-with-elim} shows that for an algorithm using elimination at sampling and stopping, we get bounds on the times at which pieces of $\Lambda(i^\star(\theta))$ are discarded from the stopping rule which are not worse than those we obtained for the same algorithm without elimination at sampling.
This result is non-trivial. We know that the sampling rule collects information to discriminate $\theta$ with its closest alternatives, and eliminating a piece cannot make the resulting ``optimal'' proportions worse at this task. However, it could make them worse at discriminating $\theta$ with alternatives that are not the closest.
This would imply that the elimination times for certain pieces could actually increase w.r.t. not eliminating at sampling.
Theorem \ref{th:sampling-rule-with-elim} guarantees that this does not happen: eliminating pieces at sampling cannot worsen our guarantees.
We shall see in our experiments that eliminating pieces in both the sampling and stopping rules often yields improved sample complexity. 

\section{Experiments}
\label{sec:experiments}

\begin{figure*}[t]
\centering
	\includegraphics[scale=0.21]{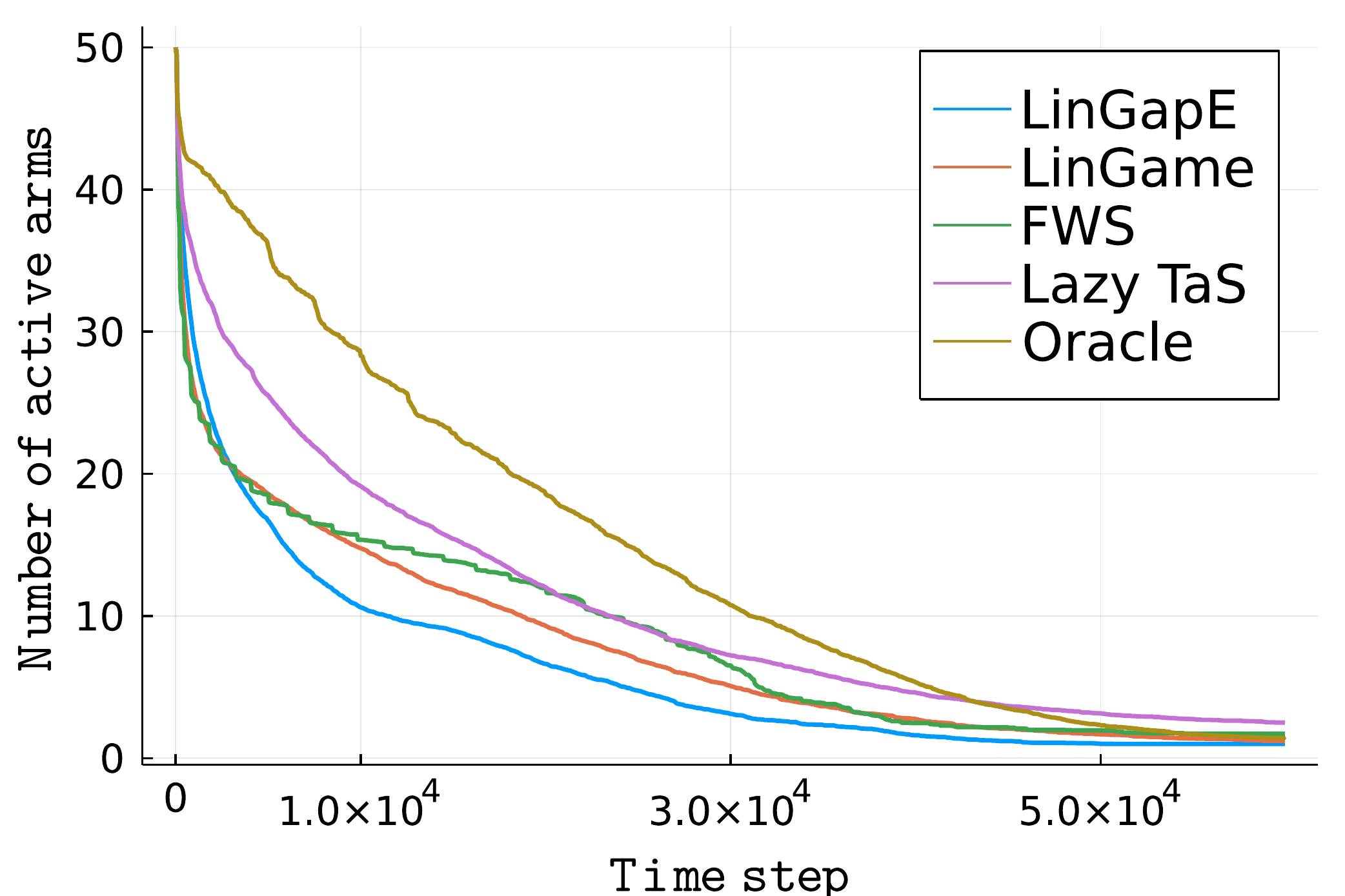}
	\includegraphics[scale=0.21]{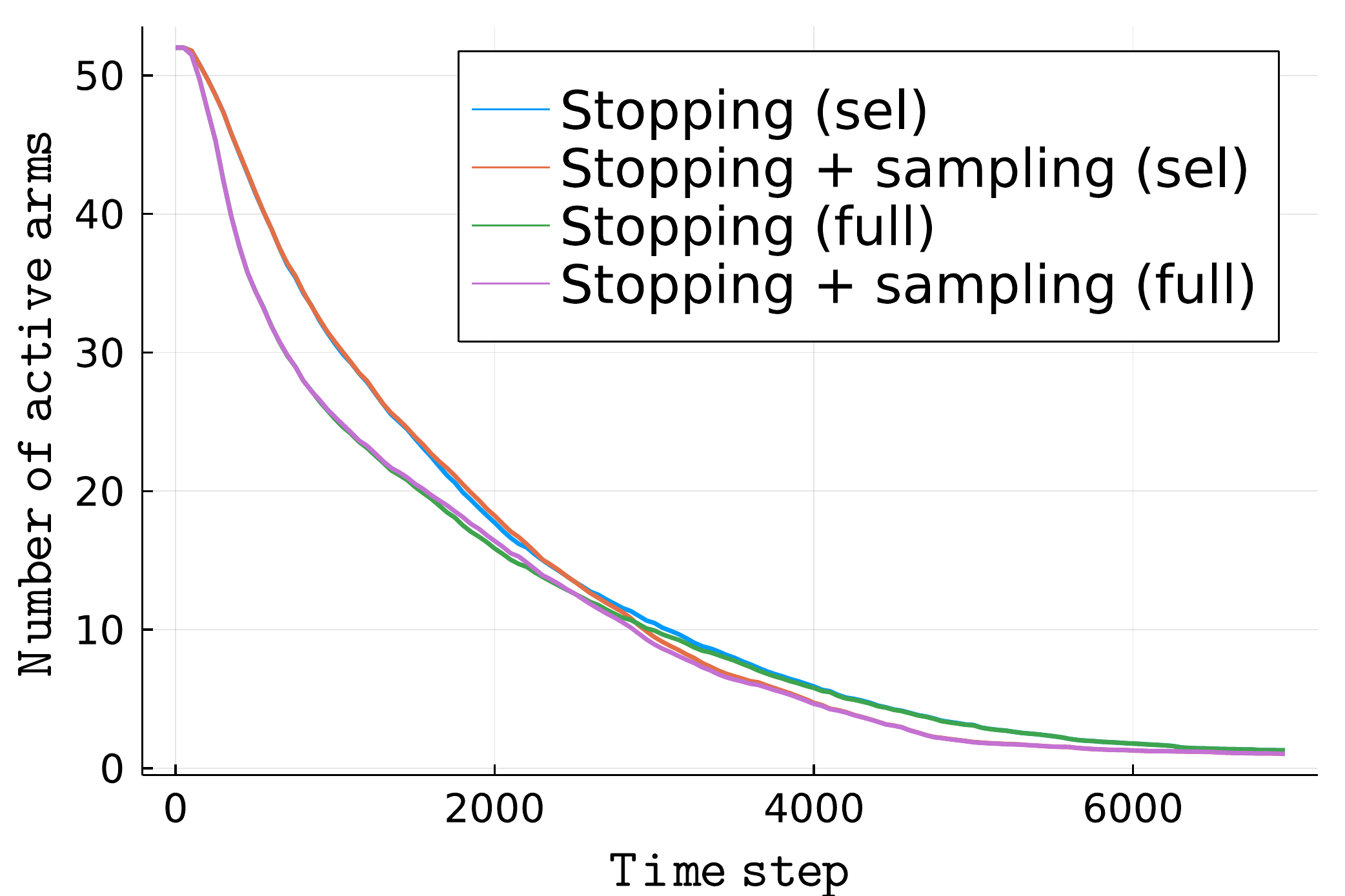}
	\includegraphics[scale=0.21]{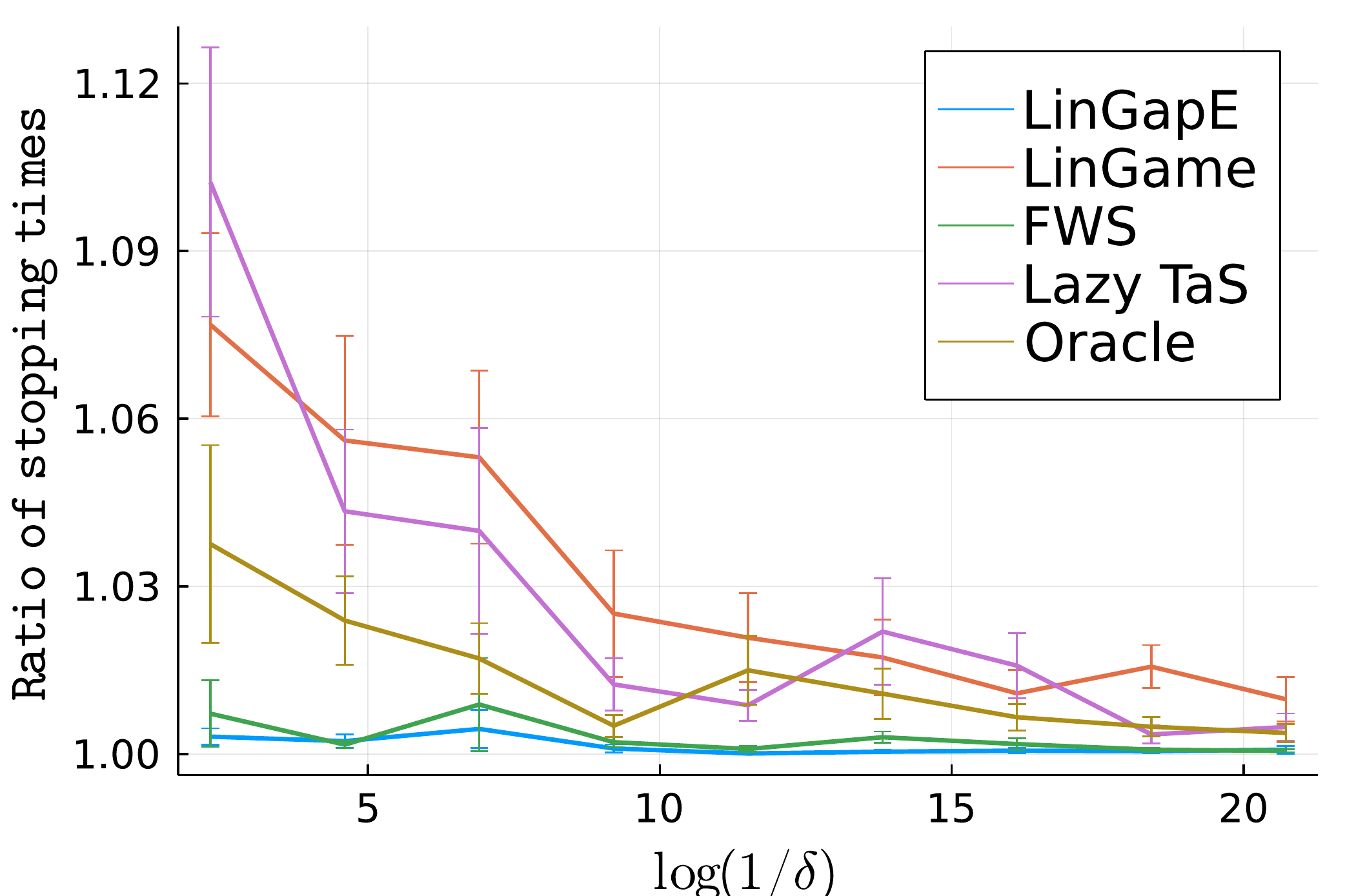}
	\caption{Experiments on linear instances with $K=50$, $d=10$, averaged over 100 runs, with the right plot showing standard deviations. (left) How different adaptive algorithms eliminate arms in BAI when using elimination stopping. (middle) LinGame on BAI when combined with full and selective elimination rules, either only at stopping or both at stopping and at sampling. (right) Ratio between the LLR and elimination stopping times of different algorithms as a function of $\log(1/\delta)$.}
	\label{fig:all}
\end{figure*}

Our experiments aim at addressing the following questions: (1) how do existing adaptive strategies behave when combined with elimination at stopping and (when possible) at sampling? How do they compare with native elimination-based methods? (2) What is the difference between selective and full elimination? (3) How do LLR and elimination stopping compare as a function of $\delta$?\footnote{Our code is available at \url{https://github.com/AndreaTirinzoni/bandit-elimination}.}



We ran experiments on two bandit structures: linear (where $d < K$) and unstructured (where $K=d$ and the arms are the canonical basis of $\mathbb{R}^d$). For each of them, we considered 3 pure exploration problems: BAI, Top-m, and online sign identification (OSI) \citep{carpentier2016tight,ouhamma2021online}, also called thresholding bandits. All experiments use $\delta=0.01$ and are averaged over 100 runs.

We combined adaptive algorithms which are natively based on LLR stopping with our elimination stopping rules and, whenever possible, we extended their sampling rule to use elimination. The selected baselines are the following. For linear BAI, LinGapE \citep{xu2018fully}, LinGame \citep{degenne2020gamification}, Frank-Wolfe Sampling (FWS) \citep{wang2021fast}, Lazy Track-and-Stop (TaS) \citep{jedra2020optimal}, XY-Adaptive \citep{soare2014best}, and RAGE \citep{fiez2019sequential} (the latter two are natively elimination based). For linear Top-m, m-LinGapE \citep{reda2021top}, MisLid \citep{reda2021dealing}, FWS, Lazy TaS\footnote{Lazy TaS, while analyzed only for BAI, can be applied to any problem since it is a variant of Track-and-Stop.}, and LinGIFA \citep{reda2021top}. For linear OSI, LinGapE\footnote{LinGapE was originally proposed only for BAI in \cite{xu2018fully}, but its extension to OSI is trivial.}, LinGame, FWS, and Lazy TaS.
For unstructured instances linear algorithms are still applicable, and we further implemented LUCB \citep{kalyanakrishnan2012pac}, UGapE \citep{gabillon2012best}, and the Racing algorithm \citep{kaufmann2013information} for BAI and Top-m. We also tested an ``oracle'' sampling rule which uses the optimal proportions from the lower bound. Due to space constraints, we present only the results on linear structures. Those on unstructured problems can be found in Appendix~\ref{app:experiments}. 
%
%
The first experiments use randomly generated instances with $K=50$ arms and dimension $d=10$.

\textbf{Comparison of elimination times.} We analyze how different adaptive algorithms eliminate pieces when combined with selective elimination at stopping. To this purpose we focus on BAI, where the sets of pieces can be conveniently reduced to a set of active arms, those that are still likely to be the optimal one.
Figure \ref{fig:all}\emph{(left)} shows how the set of active arms evolves over time for the 5 adaptive baselines.
Notably, many arms are eliminated very quickly, with most baselines able to halve the set of active arms in the first 3000 steps.
The problem size is quickly reduced over time. As we shall see in the last experiment, this will yield significant computational gains.
We further note that the ``oracle'' strategy, which plays fixed proportions, seems the slowest at eliminating arms.
The reason is that the optimal proportions from the lower bound focus on discriminating the ``hardest'' arms, while the extra randomization in adaptive rules might indeed eliminate certain ``easier'' arms sooner.

\begin{table*}[t!]
\centering
\small
\begin{tabular}{@{}clcccccc@{}} 
\toprule
 & & \multicolumn{2}{c}{No elimination (LLR)} & \multicolumn{2}{c}{Elim. stopping} & \multicolumn{2}{c}{Elim. stopping + sampling} \\
\cmidrule(r){3-8}
& Algorithm & Samples & Time & Samples & Time & Samples & Time \\
\cmidrule{1-8}
\multirow{7}{*}{\rotatebox[origin=c]{90}{BAI}} 
& LinGapE & $33.19 \pm 8.7$ & $0.23$ & $33.11 \pm 8.7$ & $0.2$ & $29.89 \pm 8.6$ & $0.18 (-22\%)$ \\
& LinGame & $45.34 \pm 14.2$ & $0.23$ & $43.67 \pm 13.4$ & $0.21$ & $32.49 \pm 8.1$ & $0.18 (-22\%)$ \\
& FWS & $42.26 \pm 60.1$ & $0.73$ & $42.25 \pm 60.1$ & $0.7$ & $32.62 \pm 18.0$ & $0.45 (-38\%)$ \\
& Lazy TaS & $76.33 \pm 65.8$ & $0.15$ & $74.08 \pm 65.8$ & $0.13$ & $64.48 \pm 81.8$ & $0.12 (-20\%)$ \\
& Oracle & $56.36 \pm 9.1$ & $0.05$ & $55.36 \pm 9.3$ & $0.02$ & & \\
& XY-Adaptive & & & & & $87.08 \pm 29.1$ & $0.44$ \\
& RAGE & & & & & $106.87 \pm 30.7$ & $0.02$ \\
\cmidrule{1-8}
\multirow{6}{*}{\rotatebox[origin=c]{90}{Top-m ($m=5$)}} 
& m-LinGapE & $63.69 \pm 11.1$ & $0.56$ & $63.48 \pm 11.0$ & $0.41$ & $59.57 \pm 9.4$ & $0.24 (-57\%)$ \\
& MisLid & $87.77 \pm 20.4$ & $0.55$ & $85.95 \pm 20.5$ & $0.4$ & $69.58 \pm 16.0$ & $0.25 (-55\%)$ \\
& FWS & $78.28 \pm 65.0$ & $3.0$ & $78.23 \pm 65.0$ & $2.85$ & $77.79 \pm 65.0$ & $0.97 (-67\%)$ \\
& Lazy TaS & $161.43 \pm 96.9$ & $0.57$ & $159.86 \pm 96.9$ & $0.43$ & $146.06 \pm 82.6$ & $0.36 (-36\%)$ \\
& Oracle & $102.45 \pm 16.1$ & $0.2$ & $101.53 \pm 16.4$ & $0.08$ & & \\
& LinGIFA & $58.31 \pm 10.8$ & $2.46$ & $58.31 \pm 10.8$ & $2.33$ & & \\
\cmidrule{1-8}
\multirow{5}{*}{\rotatebox[origin=c]{90}{OSI}}
& LinGapE & $17.31 \pm 2.3$ & $0.22$ & $17.29 \pm 2.2$ & $0.19$ & $14.71 \pm 2.0$ & $0.17 (-23\%)$ \\
& LinGame & $23.77 \pm 4.1$ & $0.25$ & $23.05 \pm 3.9$ & $0.21$ & $14.87 \pm 2.0$ & $0.19 (-24\%)$ \\
& FWS & $15.26 \pm 2.0$ & $0.83$ & $15.24 \pm 2.0$ & $0.81$ & $14.99 \pm 2.1$ & $0.56 (-32\%)$ \\
& Lazy TaS & $35.11 \pm 10.2$ & $0.32$ & $33.98 \pm 9.7$ & $0.3$ & $23.51 \pm 5.6$ & $0.24 (-25\%)$ \\
& Oracle & $29.1 \pm 4.8$ & $0.06$ & $28.65 \pm 5.0$ & $0.03$ & & \\
\bottomrule
\end{tabular}
\caption{Experiments on linear instances with $K=50$, $d=20$. The "Time" columns report average times per iteration in milliseconds. The percentage in the last column is the change w.r.t. the time without elimination. Each entry reports the mean across $100$ runs plus/minus standard deviation (which is omitted for compute times due to space constraints). Algorithms for which the third column is missing cannot be combined with elimination at sampling, while algorithms for which the first two columns are missing are natively elimination-based. Samples are scaled down by a factor $10^3$.}\label{tab:lin_all}
\end{table*}

\textbf{Full versus selective elimination.} We 
combine the different algorithms with full and selective elimination, both at sampling and stopping.
Due to space constrains, Figure \ref{fig:all}\emph{(middle)} shows the results only for LinGame (see Appendix~\ref{app:experiments} for the others).
We note that full elimination seems faster at discarding arms in earlier steps, as we would expect theoretically.
However, it never stops earlier than its selective counterpart. Moreover, its computational overhead is not advantageous. Overall, we concluded that our selective elimination rule is the best choice and we shall thus focus on it in the remaining. Finally, we remark that combining the sampling rule with elimination (no matter of what type) seems to discard arms faster in later steps, and could eventually make the algorithm stop sooner. 

\textbf{LLR versus elimination stopping.} We now compare LLR and elimination stopping as a function of $\delta$.
We know from theory that both stopping rules allow to achieve asymptotic optimality. Hence for asymptotically optimal sampling rules the resulting stopping times with LLR and elimination should tend to the same quantity as $\delta\rightarrow 0$.
Figure \ref{fig:all}\emph{(right)}, where we report the ratio between the LLR stopping time and the elimination one for different algorithms, confirms that this is the case.
Some algorithms (LinGapE and FWS) seem to benefit less from elimination stopping than the others, i.e., they achieve smaller ratios of stopping times.
We believe this to be a consequence of their mostly ``greedy'' nature, while the extra randomization of the other algorithms might help in this aspect.

\textbf{Sample complexities and computation times.} We finally compare our baselines in all three exploration tasks, in terms of sample complexity and computation time.
For this experiment, we selected a larger linear instance with $K=50$ and $d=20$, randomly generated (see the protocol in Appendix~\ref{app:experiments}).
From the results in Table \ref{tab:lin_all}, we highlight three points.
(1) The computation times of all adaptive algorithms decrease when using selective elimination stopping instead of LLR and further decrease when also using elimination at sampling. In the case of Top-m (i.e., the hardest combinatorial problem), most adaptive algorithms become at least twice faster with elimination at stopping and sampling instead of LLR.
(2) Elimination at sampling improves the sample complexity of all algorithms.
(3) For BAI, the natively elimination-based algorithm RAGE, which updates its strategy infrequently, is the fastest in terms of computation time but the slowest in terms of samples. Adaptive algorithms using elimination achieve run times that are within an order of magnitude of those of RAGE, while outperforming it in terms of sample complexity by a factor 2 to 3.

\section{Conclusion}
\label{sec:conclusion}

We proposed a selective elimination rule, which successively prunes the pieces of the empirical answer, that can be easily combined with existing adaptive algorithms for general identification problems. We proved that it reduces their computational complexity, it never worsens their sample complexity guarantees, and it provably discards certain answers early. Our experiments on different pure exploration problems and bandit structures show that existing adaptive algorithms often benefit from a reduced sample complexity when combined with selective elimination, while achieving significant gains in computation time. Moreover, they show that selective elimination is overall better (in terms of samples vs time) than its full variant which repeatedly updates the pieces of all answers.



Interesting directions for future work include investigating whether better guarantees on the stopping time can be derived for algorithms combined with elimination as compared to their LLR counterparts, and designing adaptive algorithms which are specifically tailored for elimination.


\bibliographystyle{unsrt}
\bibliography{references}

\begin{thebibliography}{10}

\bibitem{lattimore2020bandit}
Tor Lattimore and Csaba Szepesv{\'a}ri.
\newblock {\em Bandit algorithms}.
\newblock Cambridge University Press, 2020.

\bibitem{even2002pac}
Eyal Even-Dar, Shie Mannor, and Yishay Mansour.
\newblock Pac bounds for multi-armed bandit and markov decision processes.
\newblock In {\em International Conference on Computational Learning Theory},
  pages 255--270. Springer, 2002.

\bibitem{jamieson2014best}
Kevin Jamieson and Robert Nowak.
\newblock Best-arm identification algorithms for multi-armed bandits in the
  fixed confidence setting.
\newblock In {\em 2014 48th Annual Conference on Information Sciences and
  Systems (CISS)}, pages 1--6. IEEE, 2014.

\bibitem{garivier2016optimal}
Aur{\'e}lien Garivier and Emilie Kaufmann.
\newblock Optimal best arm identification with fixed confidence.
\newblock In {\em Conference on Learning Theory}, pages 998--1027, 2016.

\bibitem{chen2015optimal}
Lijie Chen and Jian Li.
\newblock On the optimal sample complexity for best arm identification.
\newblock {\em arXiv preprint arXiv:1511.03774}, 2015.

\bibitem{bubeck2009pure}
S{\'e}bastien Bubeck, R{\'e}mi Munos, and Gilles Stoltz.
\newblock Pure exploration in multi-armed bandits problems.
\newblock In {\em International conference on Algorithmic learning theory},
  pages 23--37. Springer, 2009.

\bibitem{audibert2010best}
Jean-Yves Audibert, S{\'e}bastien Bubeck, and R{\'e}mi Munos.
\newblock Best arm identification in multi-armed bandits.
\newblock In {\em COLT}, pages 41--53. Citeseer, 2010.

\bibitem{gabillon2012best}
Victor Gabillon, Mohammad Ghavamzadeh, and Alessandro Lazaric.
\newblock Best arm identification: A unified approach to fixed budget and fixed
  confidence.
\newblock In {\em NIPS-Twenty-Sixth Annual Conference on Neural Information
  Processing Systems}, 2012.

\bibitem{carpentier2016tight}
Alexandra Carpentier and Andrea Locatelli.
\newblock Tight (lower) bounds for the fixed budget best arm identification
  bandit problem.
\newblock In {\em Conference on Learning Theory}, pages 590--604. PMLR, 2016.

\bibitem{abbasi2018best}
Yasin Abbasi-Yadkori, Peter Bartlett, Victor Gabillon, Alan Malek, and Michal
  Valko.
\newblock Best of both worlds: Stochastic \& adversarial best-arm
  identification.
\newblock In {\em Conference on Learning Theory}, pages 918--949. PMLR, 2018.

\bibitem{kalyanakrishnan2012pac}
Shivaram Kalyanakrishnan, Ambuj Tewari, Peter Auer, and Peter Stone.
\newblock Pac subset selection in stochastic multi-armed bandits.
\newblock In {\em ICML}, volume~12, pages 655--662, 2012.

\bibitem{kaufmann2018sequential}
Emilie Kaufmann, Wouter Koolen, and Aur{\'e}lien Garivier.
\newblock Sequential test for the lowest mean: From thompson to murphy
  sampling.
\newblock {\em arXiv preprint arXiv:1806.00973}, 2018.

\bibitem{degenne2019pure}
R{\'e}my Degenne and Wouter~M Koolen.
\newblock Pure exploration with multiple correct answers.
\newblock {\em arXiv preprint arXiv:1902.03475}, 2019.

\bibitem{xu2018fully}
Liyuan Xu, Junya Honda, and Masashi Sugiyama.
\newblock A fully adaptive algorithm for pure exploration in linear bandits.
\newblock In {\em International Conference on Artificial Intelligence and
  Statistics}, pages 843--851. PMLR, 2018.

\bibitem{degenne2019non}
R{\'e}my Degenne, Wouter~M Koolen, and Pierre M{\'e}nard.
\newblock Non-asymptotic pure exploration by solving games.
\newblock In {\em Advances in Neural Information Processing Systems}, pages
  14492--14501, 2019.

\bibitem{degenne2020gamification}
R{\'e}my Degenne, Pierre M{\'e}nard, Xuedong Shang, and Michal Valko.
\newblock Gamification of pure exploration for linear bandits.
\newblock In {\em International Conference on Machine Learning}, pages
  2432--2442. PMLR, 2020.

\bibitem{wang2021fast}
Po-An Wang, Ruo-Chun Tzeng, and Alexandre Proutiere.
\newblock Fast pure exploration via frank-wolfe.
\newblock In {\em Thirty-Fifth Conference on Neural Information Processing
  Systems}, 2021.

\bibitem{kaufmann2013information}
Emilie Kaufmann and Shivaram Kalyanakrishnan.
\newblock Information complexity in bandit subset selection.
\newblock In {\em Conference on Learning Theory}, pages 228--251. PMLR, 2013.

\bibitem{soare2014best}
Marta Soare, Alessandro Lazaric, and R{\'e}mi Munos.
\newblock Best-arm identification in linear bandits.
\newblock {\em Advances in Neural Information Processing Systems}, 27:828--836,
  2014.

\bibitem{fiez2019sequential}
Tanner Fiez, Lalit Jain, Kevin~G Jamieson, and Lillian Ratliff.
\newblock Sequential experimental design for transductive linear bandits.
\newblock In {\em Advances in Neural Information Processing Systems},
  volume~32, 2019.

\bibitem{tao2018best}
Chao Tao, Sa{\'u}l Blanco, and Yuan Zhou.
\newblock Best arm identification in linear bandits with linear dimension
  dependency.
\newblock In {\em International Conference on Machine Learning}, pages
  4877--4886. PMLR, 2018.

\bibitem{chen2014combinatorial}
Shouyuan Chen, Tian Lin, Irwin King, Michael~R Lyu, and Wei Chen.
\newblock Combinatorial pure exploration of multi-armed bandits.
\newblock {\em Advances in neural information processing systems}, 27:379--387,
  2014.

\bibitem{zaki2020explicit}
Mohammadi Zaki, Avi Mohan, and Aditya Gopalan.
\newblock Explicit best arm identification in linear bandits using no-regret
  learners.
\newblock {\em arXiv preprint arXiv:2006.07562}, 2020.

\bibitem{jedra2020optimal}
Yassir Jedra and Alexandre Proutiere.
\newblock Optimal best-arm identification in linear bandits.
\newblock {\em arXiv preprint arXiv:2006.16073}, 2020.

\bibitem{abbasi2011improved}
Yasin Abbasi-Yadkori, D{\'a}vid P{\'a}l, and Csaba Szepesv{\'a}ri.
\newblock Improved algorithms for linear stochastic bandits.
\newblock In {\em Advances in Neural Information Processing Systems}, pages
  2312--2320, 2011.

\bibitem{magureanu2014lipschitz}
Stefan Magureanu, Richard Combes, and Alexandre Proutiere.
\newblock Lipschitz bandits: Regret lower bound and optimal algorithms.
\newblock In {\em Conference on Learning Theory}, pages 975--999. PMLR, 2014.

\bibitem{ouhamma2021online}
Reda Ouhamma, R{\'e}my Degenne, Pierre Gaillard, and Vianney Perchet.
\newblock {Online Sign Identification: Minimization of the Number of Errors in
  Thresholding Bandits}.
\newblock In {\em {NeurIPS 2021 - 35th International Conference on Neural
  Information Processing Systems}}, NeurIPS 2021 - 35th International
  Conference on Neural Information Processing Systems, pages 1--25, Virtual,
  Canada, December 2021.

\bibitem{reda2021top}
Cl{\'e}mence R{\'e}da, Emilie Kaufmann, and Andr{\'e}e Delahaye-Duriez.
\newblock Top-m identification for linear bandits.
\newblock In {\em International Conference on Artificial Intelligence and
  Statistics}, pages 1108--1116. PMLR, 2021.

\bibitem{reda2021dealing}
Cl{\'e}mence R{\'e}da, Andrea Tirinzoni, and R{\'e}my Degenne.
\newblock Dealing with misspecification in fixed-confidence linear top-m
  identification.
\newblock In {\em 35th Conference on Neural Information Processing Systems},
  Virtual, 2021.

\bibitem{tirinzoni2020asymptotically}
Andrea Tirinzoni, Matteo Pirotta, Marcello Restelli, and Alessandro Lazaric.
\newblock An asymptotically optimal primal-dual incremental algorithm for
  contextual linear bandits.
\newblock {\em Advances in Neural Information Processing Systems},
  33:1417--1427, 2020.

\bibitem{kaufmann2016complexity}
Emilie Kaufmann, Olivier Capp{\'e}, and Aur{\'e}lien Garivier.
\newblock On the complexity of best-arm identification in multi-armed bandit
  models.
\newblock {\em The Journal of Machine Learning Research}, 17(1):1--42, 2016.

\end{thebibliography}

\clearpage
\section*{Checklist}


\begin{enumerate}

\item For all authors...
\begin{enumerate}
  \item Do the main claims made in the abstract and introduction accurately reflect the paper's contributions and scope?
    \answerYes{}
  \item Did you describe the limitations of your work?
    \answerYes{}
  \item Did you discuss any potential negative societal impacts of your work?
    \answerNA{}
  \item Have you read the ethics review guidelines and ensured that your paper conforms to them?
    \answerYes{}
\end{enumerate}

\item If you are including theoretical results...
\begin{enumerate}
  \item Did you state the full set of assumptions of all theoretical results?
    \answerYes{}
        \item Did you include complete proofs of all theoretical results?
    \answerYes{In appendix \ref{app:proofs-elim-stopping} and \ref{app:proofs_of_sampling}.}
\end{enumerate}

\item If you ran experiments...
\begin{enumerate}
  \item Did you include the code, data, and instructions needed to reproduce the main experimental results (either in the supplemental material or as a URL)?
    \answerYes{}
  \item Did you specify all the training details (e.g., data splits, hyperparameters, how they were chosen)?
    \answerYes{In appendix \ref{app:experiments}.}
        \item Did you report error bars (e.g., with respect to the random seed after running experiments multiple times)?
    \answerYes{}
        \item Did you include the total amount of compute and the type of resources used (e.g., type of GPUs, internal cluster, or cloud provider)?
    \answerYes{}
\end{enumerate}

\item If you are using existing assets (e.g., code, data, models) or curating/releasing new assets...
\begin{enumerate}
  \item If your work uses existing assets, did you cite the creators?
    \answerNA{}
  \item Did you mention the license of the assets?
    \answerNA{}
  \item Did you include any new assets either in the supplemental material or as a URL?
    \answerNA{}
  \item Did you discuss whether and how consent was obtained from people whose data you're using/curating?
    \answerNA{}
  \item Did you discuss whether the data you are using/curating contains personally identifiable information or offensive content?
    \answerNA{}
\end{enumerate}

\item If you used crowdsourcing or conducted research with human subjects...
\begin{enumerate}
  \item Did you include the full text of instructions given to participants and screenshots, if applicable?
    \answerNA{}
  \item Did you describe any potential participant risks, with links to Institutional Review Board (IRB) approvals, if applicable?
    \answerNA{}
  \item Did you include the estimated hourly wage paid to participants and the total amount spent on participant compensation?
    \answerNA{}
\end{enumerate}

\end{enumerate}

\clearpage


\appendix

\part{Appendix}


\parttoc
\newpage


\section{Notation}\label{app:notation}

\begin{table*}[h]
\centering
\begin{tabular}{@{}ll@{}} 
\toprule
Symbol & Meaning \\
\cmidrule{1-2}
$[K] = \{1,2,\dots,K\}$ & Set of $K$ arms\\
$\Delta_K$ & $K$-dimensional simplex\\
$d\in\mathbb{N}_{>0}$ & Dimension of parameter space\\
$\cM \subseteq \mathbb{R}^d$ & Set of possible reward parameters\\
$\mathbb{P}_\theta$ & Distribution of observations in bandit $\theta\in\cM$\\
$\nu_k(\theta)$ & Reward distribution of arm $k$ in bandit $\theta\in\cM$\\ 
$\mu_k(\theta) := \mathbb{E}_{x\sim \nu_k(\theta)}[x]$ & Mean reward of arm $k$ in bandit $\theta\in\cM$\\ 
$\cI$ & Set of answers\\
$i^\star(\theta)$ & Correct answer for bandit problem $\theta\in\cM$\\
$\Lambda(i) := \{\lambda\in\cM : i^\star(\lambda) \neq i\}$ & Set of alternatives to answer $i\in\cI$\\
$\cP(i)$ & Set of alternative piece indexes for answer $i\in\cI$\\
$P_i := |\cP(i)|$ & Number of pieces for answer $i\in\cI$\\
$\Lambda_p(i)$ & Piece $p\in\cP(i)$ for answer $i\in\cI$\\
$X_{[t]} := (X_1^{k_1}, \ldots, X_t^{k_t})$  & Vector of $t$ observations\\
$L_t(\theta, \lambda)
:= \log \frac{d \mathbb{P}_\theta}{d \mathbb{P}_\lambda}(X_{[t]})$ & LLR of $t$ observations between $\theta$ and $\lambda$ \\
$\KL_k(\theta,\lambda) := \KL(\nu_k(\theta),\nu_k(\lambda))$ & KL divergence between $\nu_k(\theta)$ and $\nu_k(\lambda)$\\
$\hat{\theta}_t := \argmax_{\lambda\in\cM}d \mathbb{P}_\lambda (X_{[t]})$ & Maximum likelihood estimator for $\theta$\\
$\hat{\mu}_t^k := \frac{1}{N_t^k}\sum_{s=1}^t X_s^{k_s}\indi{k_s=k}$ & Empirical mean of arm $k$ (different from $\mu_k(\hat{\theta}_t))$\\
$H_p(\omega, \theta) := \inf_{\lambda \in \Lambda_p(i^\star)} \sum_{k\in[K]}\omega^k\KL_k(\theta,\lambda)$ & Information of $\omega\in\Delta_K$ for piece $\Lambda_p(i^\star)$\\
$H^\star(\theta) := \max_{\omega\in\Delta_K}\min_{p\in\cP(i^\star)}H_p(\omega, \theta)$ & Optimal constant from the lower bound \eqref{eq:lower_bound}\\
$\Omega_\epsilon(\theta)$ & Set of $\epsilon$-optimal proportions\\
\bottomrule
\end{tabular}
\caption{The notation adopted in this paper.}
\label{tab:notation}
\end{table*}

\section{Identification Problems}\label{app:problems}

In this section, we show that popular identification problems satisfy Assumption \ref{ass:union_of_sets} and are thus suitable for elimination-based algorithms. We shall focus on \emph{Gaussian linear bandits}, where, for any $\theta\in\cM$ and $k\in[K]$, $\nu_k(\theta)$ is Gaussian with unit variance and linear mean $\mu_k(\theta) = \theta^T\phi_k$. For each identification problem, we first show how to decompose the sets of alternatives into pieces for which the closest alternatives can be found efficiently. Moreover, we report the closed-form equations for computing such closest alternatives in Gaussian linear bandits and in the special case of unstructured bandits (where $\cM = \mathbb{R}^K$ and $\phi_k = e_k$, the canonical basis of $\mathbb{R}^k$). Finally, we show how to efficiently implement elimination strategies in each of these identification problems even when enumerating over all possible answers is intractable (e.g., for problems where the number of answers is exponential in the problem dimension).

\paragraph{The LLR in Gaussian linear bandits}

For all identification problems presented later, we need to show that $\inf_{\lambda \in \Lambda_p(i)}L_t(\hat{\theta},\lambda)$ can be computed efficiently for any piece. In Gaussian linear bandits, such a log-likelihood ratio is actually equivalent to a KL divergence (see Corollary \ref{cor:llr-lin-gauss}),
\begin{align*}
L_t(\hat{\theta},\lambda) = \sum_{k\in[K]}N_t^k\KL_k(\hat{\theta}_t,\lambda) = \frac{1}{2}\|\hat{\theta}_t-\lambda\|_{V_t}^2,
\end{align*}
which in turn is a quadratic form weighted by the design matrix $V_t := \sum_{s=1}^t \phi_{k_s}\phi_{k_s}^T$. Therefore, with greater generality, in rest of this section we shall focus on showing that $\inf_{\lambda \in \Lambda_p(i)}\|\theta-\lambda\|_{V_N}^2$ can be computed efficiently for any $\theta,\lambda\in\cM$, piece $\Lambda_p(i)$, and (positive-definite) matrix $V_N := \sum_{k\in[K]} N^k \phi_k\phi_k^T$ with $N\in\mathbb{R}^K_{\geq 0}$. In all cases, this will require minimizing quadratic forms over half-spaces. 

\subsection{Best-arm Identification}

In BAI, the goal is to find the arm with largest mean. The set of answers is therefore $\cI = [K]$ and the correct answer of $\theta\in\cM$ is $i^\star(\theta) = \argmax_{k\in[K]}\theta^T\phi_k$. 

\paragraph{Decomposition into pieces}

For each $i\in\cI$, the set of alternatives $\Lambda(i)$ can be decomposed into half-spaces,
\begin{align*}
\Lambda(i) = \bigcup_{k\in[K], k\neq i} \left\{ \lambda \in \cM : \lambda^T\phi_k > \lambda^T\phi_i\right\}.
\end{align*}
Therefore, we can take $\cP(i) = [K] \setminus \{i\}$ with $P_i = K-1$ and $\Lambda_p(i) = \{\lambda\in\cM : \lambda^T\phi_p > \lambda^T\phi_i\}$ for $p \in [K]\setminus\{i\}$.

\paragraph{Closest alternatives}

%

Take any $j,k\in[K]$ with $j\neq k$. For linear problems, for any $\theta\in\mathbb{R}^d$,
\begin{align*}
\inf_{\lambda \in \Lambda_j(k)}\Vert {\theta} - \lambda \Vert_N^2 = \begin{cases}
	\frac{(\theta^T(\phi_k-\phi_j))^2}{\| \phi_k - \phi_j \|_{V_N^{-1}}^2} &\text{if } \theta^T(\phi_k-\phi_j) \geq 0, \\
	0 &\text{otherwise}.
\end{cases}
\end{align*}
For the special case of unstructured problem, for any $\theta\in\mathbb{R}^K$,
\begin{align*}
\inf_{\lambda \in \Lambda_j(k)}\Vert {\theta} - \lambda \Vert_{V_N}^2 = \begin{cases}
	\frac{N_jN_k}{N_j + N_k}(\theta^T(\phi_k-\phi_j))^2 &\text{if } \theta^T(\phi_k-\phi_j) \geq 0\ \text{and } N_j + N_k > 0, \\
	0 &\text{otherwise}.
\end{cases}
\end{align*}

\paragraph{Efficient implementation}

In BAI, for each answer (i.e., arm) $i\in\cI$, the piece indexes $p\in\cP(i)$ are themselves answers (different than $i$). Implementing the elimination stopping rule in its general form requires storing and iterating over $K(K-1)$ elements (all items in $\cP(i)$ for each $i\in\cI$). However, much better implementations exist that require storing at most $K$ elements (one for each answer). Here we propose two such implementations: the first, for full elimination, is more statistically-efficient, while the second one (for selective elimination) is more computationally-efficient.

\paragraph{Full elimination (statistically-efficient implementation)}

Due to the structure of the problem, whenever we eliminate one piece $\Lambda_p(i)$, we actually know that the mean reward of arm $p$ cannot be better than that of arm $i$. In other words, $p$ cannot be the right answer. Therefore, we can maintain a list of active arms $\cI_t$ which is initialized as $\cI_0 = [K]$ and updated as
\begin{align*}
\mathcal{I}_t := \mathcal{I}_{t-1} \setminus \left\{ j\in\cI_{t-1} \big| \max_{i\neq j} \inf_{\lambda \in \Lambda_j(i)} L_t(\hat{\theta}_t,\lambda) \geq \beta_{t,\delta}\right\}.
\end{align*}
Then, we stop whenever $|\cI_t| = 1$ and return the single arm left active. Due to the inner maximization, this implementation requires performing $|\cI_{t-1}|(K-1)$ minimizations over half-spaces at each step to check elimination.

\begin{proposition}
An algorithm using the statistically-efficient implementation above never discards pieces later than (and thus never stops later than) the full elimination rule of \eqref{eq:elimination-sets} almost surely. 
\end{proposition}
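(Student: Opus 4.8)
The plan is to reduce both procedures to first-passage times of individual pieces and to exploit the fact that in BAI each piece index is itself an arm. First I would record the monotonicity of the two rules. In full elimination the update $\cP_t^{\mathrm{stp}}(i)=\cP_{t-1}^{\mathrm{stp}}(i)\cap\overline{\cP}_t(i;\beta_{t,\delta})$ shows that once a piece leaves $\cP^{\mathrm{stp}}(i)$ it never returns, and likewise $\cI_t\subseteq\cI_{t-1}$ shows that an arm removed from the active set stays removed. Hence both are fully described by first-passage times. For the full rule let $\tau^{\mathrm{full}}_{j,i}:=\inf\{t\ge 1:\inf_{\lambda\in\Lambda_j(i)}L_t(\hat{\theta}_t,\lambda)\ge\beta_{t,\delta}\}$ be the time piece $j$ of answer $i$ is discarded, so $\cP_t^{\mathrm{stp}}(i)=\emptyset$ exactly when $t\ge\max_{j\ne i}\tau^{\mathrm{full}}_{j,i}$, giving a stopping time $\tau_{\mathrm{f.elim}}=\min_{i\in\cI}\max_{j\ne i}\tau^{\mathrm{full}}_{j,i}$. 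For the statistically-efficient rule let $\tau^{\mathrm{SE}}_j:=\inf\{t\ge1:\max_{m\ne j}\inf_{\lambda\in\Lambda_j(m)}L_t(\hat{\theta}_t,\lambda)\ge\beta_{t,\delta}\}$ be the removal time of arm $j$ from $\cI_t$. Because both implementations run on the same samples, they share the estimator $\hat{\theta}_t$ and threshold $\beta_{t,\delta}$, so these infimum-LLR quantities are literally the same objects in the two procedures, which is what makes the times directly comparable.

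The key step is the elementary domination
\[
\tau^{\mathrm{SE}}_j\ \le\ \tau^{\mathrm{full}}_{j,i}\qquad\text{for every }i\ne j .
\]
This holds because, for every $t$, $\max_{m\ne j}\inf_{\lambda\in\Lambda_j(m)}L_t(\hat{\theta}_t,\lambda)\ge\inf_{\lambda\in\Lambda_j(i)}L_t(\hat{\theta}_t,\lambda)$ (the index $i$ is one of the competitors $m$), so whenever the right-hand side crosses $\beta_{t,\delta}$ and triggers $\tau^{\mathrm{full}}_{j,i}$, the left-hand side has already crossed it and triggered $\tau^{\mathrm{SE}}_j$. Operationally, removing arm $j$ only requires that \emph{some} competitor certifiably beats it, i.e.\ that one half-space $\Lambda_j(m)$ be eliminated, which is weaker than eliminating the prescribed half-space $\Lambda_j(i)$. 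I would then interpret the statement as: the statistically-efficient rule discards piece $j$ (for the purpose of \emph{every} answer $i$) no later than full elimination discards piece $j$ of answer $i$, since once arm $j$ leaves $\cI_t$ it is certified not to be optimal and can play no further role in confirming any answer.

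Finally I would deduce the stopping-time comparison from the same domination. Letting $\bar i$ attain the minimum in $\tau_{\mathrm{f.elim}}$, the inequality above gives $\tau^{\mathrm{SE}}_j\le\tau^{\mathrm{full}}_{j,\bar i}\le\tau_{\mathrm{f.elim}}$ for all $j\ne\bar i$. Thus by time $\tau_{\mathrm{f.elim}}$ at least $K-1$ arms have been removed from $\cI_t$, so $|\cI_t|=1$ and the statistically-efficient rule has already stopped; hence its stopping time is at most $\tau_{\mathrm{f.elim}}$, as claimed.

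The main obstacle is not the inequality itself, which is a one-line monotonicity argument, but pinning down the correct correspondence between the two data structures: full elimination stores a flag for each (answer, piece) pair, whereas the statistically-efficient variant stores only the active-arm set $\cI_t$. The crux is to justify that ``arm $j$ removed from $\cI_t$'' is the right counterpart of ``piece $j$ discarded from $\cP^{\mathrm{stp}}(i)$''; this rests on the BAI-specific symmetry that piece indices and answers are both arms, so that eliminating any half-space $\Lambda_j(m)$ certifies arm $j$ is dominated and therefore simultaneously irrelevant to all answers. One should also confirm that the maximum in the removal rule ranges over all $m\ne j$ rather than only currently-active competitors, so that the domination is valid against the specific $i$ appearing in the full rule.
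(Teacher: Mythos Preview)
Your proof is correct and follows essentially the same approach as the paper: the core step in both is the elementary inequality $\max_{m\ne j}\inf_{\lambda\in\Lambda_j(m)}L_t(\hat{\theta}_t,\lambda)\ge\inf_{\lambda\in\Lambda_j(i)}L_t(\hat{\theta}_t,\lambda)$, which shows that whenever full elimination discards piece $j$ of answer $i$, the statistically-efficient rule has already removed arm $j$ from $\cI_t$. The paper states this in one line; your version packages it as a comparison of first-passage times and additionally writes out the stopping-time consequence explicitly, but the argument is the same.
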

\begin{proof}
If the algorithm eliminates a piece $\Lambda_p(i)$ at time $t$ with \eqref{eq:elimination-sets},
\begin{align*}
\beta_{t,\delta} \leq \inf_{\lambda \in \Lambda_p(i)} L_t(\hat{\theta}_t,\lambda) \leq \max_{j\neq p} \inf_{\lambda \in \Lambda_p(j)} L_t(\hat{\theta}_t,\lambda).
\end{align*}
This implies that the elimination condition in the statistically efficient implementation triggers as well.
\end{proof}

\paragraph{Selective elimination (computationally-efficient implementation)} 

An even simpler implementation is to check elimination, at each time $t$, only for pieces related to the empirical optimal arm $i^\star(\hat{\theta}_t)$. That is, we update $\cI_t$ as
\begin{align*}
\mathcal{I}_t := \mathcal{I}_{t-1} \setminus \left\{ j\in\cI_{t-1} \big| \inf_{\lambda \in \Lambda_j(i^\star(\hat{\theta}_t))} L_t(\hat{\theta}_t,\lambda) \geq \beta_{t,\delta}\right\}.
\end{align*}
This requires linear (in $K$) per-round memory and time complexity. Moreover, checking this stopping rule is more time-efficient than the LLR one. The latter requires to perform $K-1$ tests at each step, while the elimination one only performs $O(|\cI_{t-1}|)$ tests at each round $t$, thus becoming faster as arms are eliminated.

\begin{proposition}
An algorithm using the computationally-efficient implementation above never discards pieces later than (and thus never stops later than) the selective elimination rule almost surely. 
\end{proposition}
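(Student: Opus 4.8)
The plan is to mirror the one-line argument already used for the statistically-efficient (full) implementation just above, exploiting that both the computationally-efficient implementation and the general selective rule \eqref{eq:elimination-sets} eliminate a piece through the \emph{same} test. First I would record the identity of the elimination conditions. When the general selective rule updates $\cP_t^{\mathrm{stp}}(i^\star(\hat{\theta}_t))$, a piece $p$ leaves its active set at time $t$ precisely when $p \in \cP_{t-1}^{\mathrm{stp}}(i^\star(\hat{\theta}_t))$ and $\inf_{\lambda \in \Lambda_p(i^\star(\hat{\theta}_t))} L_t(\hat{\theta}_t,\lambda) \ge \beta_{t,\delta}$. This is exactly the condition $\inf_{\lambda \in \Lambda_j(i^\star(\hat{\theta}_t))} L_t(\hat{\theta}_t,\lambda) \ge \beta_{t,\delta}$ (with $j=p$) used to drop arm $p$ from $\cI_t$ in the computationally-efficient implementation.

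Given this identity, I would argue as follows. Fix any arm $p$ and suppose the general selective rule discards it for some answer $i$ at time $t$; then $i^\star(\hat{\theta}_t)=i$ and the test above holds. In the efficient implementation, either $p$ had already been removed from $\cI$ before $t$, or $p \in \cI_{t-1}$ and the identical test removes it at $t$. Since $(\cI_t)_t$ is non-increasing by construction (arms are only deleted, never re-added), in both cases $p \notin \cI_t$. Hence the efficient implementation discards $p$ no later than the general selective rule discards the corresponding piece, which is the first claim. In fact it can only discard $p$ earlier, because it compares $p$ against whatever arm is currently empirically best, giving it more elimination opportunities than the per-answer bookkeeping of \eqref{eq:elimination-sets}.

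For the stopping claim I would specialize this to $t=\tau_{\mathrm{s.elim}}$, where $\cP_{\tau_{\mathrm{s.elim}}}^{\mathrm{stp}}(i^\star(\hat{\theta}_{\tau_{\mathrm{s.elim}}})) = \emptyset$. Letting $i=i^\star(\hat{\theta}_{\tau_{\mathrm{s.elim}}})$, every arm $p \ne i$ has been eliminated from $\cP^{\mathrm{stp}}(i)$ by $\tau_{\mathrm{s.elim}}$, so by the piece-wise claim each such $p$ satisfies $p \notin \cI_{\tau_{\mathrm{s.elim}}}$, giving $\cI_{\tau_{\mathrm{s.elim}}} \subseteq \{i\}$ and thus $|\cI_{\tau_{\mathrm{s.elim}}}| \le 1$. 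Since the efficient rule stops at the first time the active set is a singleton and $(\cI_t)_t$ is non-increasing, it must have stopped by $\tau_{\mathrm{s.elim}}$.

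The main obstacle I anticipate is ruling out the degenerate case $\cI_{\tau_{\mathrm{s.elim}}} = \emptyset$, which would make the singleton stopping test ambiguous and could in principle let the active set jump from size at least two directly to zero. This forces me to check that $|\cI_t| \ge 1$ for all $t$, i.e.\ that the empirically best arm is always retained. This is immediate at the current step under the convention that $i^\star(\hat{\theta}_t)$ is never a removal candidate (its own alternative piece $\Lambda_{i^\star(\hat{\theta}_t)}(i^\star(\hat{\theta}_t))$ is empty), but since the empirical maximizer may change over time one must also verify that a previously best arm is not left permanently deleted; this bookkeeping, rather than any new inequality, is the delicate part of the argument.
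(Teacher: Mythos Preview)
Your proposal is correct and follows the same one-line idea as the paper: if selective elimination drops a piece $p$ of $i^\star(\hat{\theta}_t)$ at time $t$, the identical threshold test on $\inf_{\lambda\in\Lambda_p(i^\star(\hat{\theta}_t))}L_t(\hat{\theta}_t,\lambda)$ removes $p$ from $\cI_t$ in the efficient implementation. The paper stops there; you additionally spell out the consequence for the stopping time and flag the degenerate case $\cI_t=\emptyset$, which the paper does not discuss but which is peripheral to the piece-elimination claim (and is handled by the implicit convention that $i^\star(\hat{\theta}_t)$ is never among the removal candidates at step $t$).
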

\begin{proof}
If the algorithm eliminates a piece of the empirical optimal arm at time $t$ using the selective elimination rule, the arm corresponding to that piece is also eliminated from the set $\cI_t$ above.
\end{proof}


\subsection{Top-m Identification}

In top-$m$ identification, the goal is to find the $m > 0$ arms with largest mean. BAI is therefore a special case of this problem when $m=1$. The set of answers is $\cI = \{\cS \subseteq [K] : |\cS|=m\}$ with size $|\cI| = {K \choose m}$ and the correct answer of $\theta\in\cM$ is $i^\star(\theta) = \argmax_{k\in[K]}^m\theta^T\phi_k$, where we use $\argmax^m : \mathbb{R}^K \mapsto {K \choose m}$ to denote the function returning the set of $m$ largest values. 

\paragraph{Decomposition into pieces}

Let us denote each $i\in\cI$ as a tuple $i = (k_1,\dots, k_m)$ of $m$ arms. Similarly to BAI, it is known that the set of alternatives $\Lambda(i)$ can be decomposed into half-spaces \cite{reda2021dealing},
\begin{align*}
\Lambda(i) = \bigcup_{j\in i, k \in [K] \setminus i} \left\{ \lambda \in \cM : \lambda^T\phi_k > \lambda^T\phi_j\right\}.
\end{align*}
Therefore, we have $\cP(i) = i \times ([K] \setminus i)$ with $P_i = m(K-m)$ and $\Lambda_p(i) = \{\lambda\in\cM : \lambda^T\phi_k > \lambda^T\phi_j\}$ when $p = (j,k)$.

\paragraph{Closest alternatives}

Note that each set $\Lambda_p(i)$ is still a half-space of the same form as the one we have for BAI. Hence, the same closed form expression for the closest alternative derived for BAI can be adopted for top-$m$ identification (see the closed-form expressions in the previous section).

\paragraph{Efficient implementation}

Note that, differently from BAI, here the set of answers is of combinatorial size. It is therefore intractable to store and enumerate all sets of active pieces $\cP_t(i)$. However, thanks to the structure of the problem, this is not necessary and there exists an efficient implementation for the elimination stopping rule. First note that, while there are $m(K-m)$ pieces for each of $K \choose m$ possible answers, the total number of half-spaces is only $K(K-1)$, one for each couple of different arms. With some abuse of notation, let us denote by $\Lambda_{k,j} := \left\{ \lambda \in \cM : \lambda^T\phi_k > \lambda^T\phi_j\right\}$ the half-space associated with arms $k$ and $j$. The elimination stopping rule, which checks whether all the pieces in $\cP(i)$ for some answer $i$ have been discarded, is equivalent to checking whether there exist $m$ arms $k_1,\dots,k_m$ such that $\Lambda_{k,k_l}$ has been eliminated for all $k\notin \{k_1,\dots,k_m\}$ and $l\in[m]$. Therefore, in our implementations we will only store whether each half-space $\Lambda_{k,k_l}$ has been eliminated or not. As before, we now see two possible implementations, one more computationally efficient and the other more statistically efficient.

\paragraph{Full elimination (statistically-efficient implementation)}

The idea is to check, at each time step $t$, the elimination condition for all half-spaces which have not been previously discarded. In particular, for each arm $j\in[K]$ we keep a set $\cS_t(j)$ storing those arms which are ``worse'' than $j$. Formally, we initially set $\cS_0(j) = \emptyset$ and update it as
\begin{align*}
\cS_t(j) := \begin{cases}
	\cS_{t-1}(j) \cup \left\{ k\notin  \cS_{t-1}(j)\cup\{j\} \big| \inf_{\lambda \in \Lambda_{k,j}} L_t(\hat{\theta}_t,\lambda) \geq \beta_{t,\delta}\right\} &\text{if } |\cS_{t-1}(j)| < K-m, \\
	\cS_{t-1}(j) &\text{otherwise}.
\end{cases}
\end{align*}
That is, when a half-space $\Lambda_{k,j}$ is eliminated, we conclude that arm $k$ is ``worse'' than arm $j$ and thus add the former to $\cS_t(j)$. In order to decide when to stop, we use the following intuition: whenever we find that $|\cS_t(j)| \geq K-m$ for some arm $j\in[K]$, then we know that $j$ must be in the top-m arms of $\theta$ and we can thus stop updating the set $\cS_t(j)$. Therefore, we can stop whenever there exist $m$ arms satisfying this property. This can be checked efficiently by keeping track of how many arms reach the condition $|\cS_t(j)| \geq K-m$ and stopping when the number of such arms reaches $m$. This approach takes $O(K(K-m))$ memory in the worst-case to store the sets $\cS_t(j)$. At each step, it performs exactly $\sum_{j: |\cS_{t-1}(j)| < K-m} (K - |\cS_{t-1}(j)| - 1)$ minimizations over half-spaces to check the elimination conditions, which gives $O(K(K-1))$ time complexity in the worst-case.

\begin{proposition}
An algorithm using the statistically-efficient implementation above never discards pieces later than (and thus never stops later than) the full elimination rule of \eqref{eq:elimination-sets} almost surely.
\end{proposition}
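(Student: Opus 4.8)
The plan is to prove the pathwise inequality $\tau^{\mathrm{eff}} \le \tau^{\mathrm{full}}$ between the stopping time $\tau^{\mathrm{eff}}$ of the statistically-efficient implementation and the stopping time $\tau^{\mathrm{full}}$ of the full elimination rule \eqref{eq:elimination-sets}, and to argue simultaneously that no piece is discarded later. The starting point is that both procedures act on exactly the same atomic objects, the half-spaces $\Lambda_{k,j}$: the full rule discards a piece $(j,k)$ of an answer $i$ (with $j\in i$, $k\notin i$) at the first time $t$ such that $\inf_{\lambda\in\Lambda_{k,j}}L_t(\hat{\theta}_t,\lambda)\ge\beta_{t,\delta}$, and the efficient rule uses this very condition to add $k$ to $\cS_t(j)$. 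The only discrepancy is the guard $|\cS_{t-1}(j)| < K-m$, which freezes $\cS(j)$ once it has reached size $K-m$; the key structural fact I would exploit is that $\cS_t(j)$ is nondecreasing in $t$, and once $|\cS_t(j)|\ge K-m$ it remains so.

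First I would characterize the full-rule stopping event. Unrolling \eqref{eq:elimination-sets} gives $\cP_t^{\mathrm{stp}}(i)=\bigcap_{s=1}^t\overline{\cP}_s(i;\beta_{s,\delta})$, so the triggering condition $\cP_{\tau^{\mathrm{full}}}^{\mathrm{stp}}(i)=\emptyset$ for the answer $i=\{j_1,\dots,j_m\}$ means that every one of the $m(K-m)$ half-spaces $\Lambda_{k,j_l}$ (for $j_l\in i$, $k\notin i$) was eliminated at some time $\le\tau^{\mathrm{full}}$. Fixing $j_l\in i$ and its $K-m$ associated arms $\{k:k\notin i\}$ with respective first-elimination times $s_k\le\tau^{\mathrm{full}}$, I would then show $|\cS_{\tau^{\mathrm{full}}}(j_l)|\ge K-m$ by the monotonicity above: at each $s_k$ either $\cS(j_l)$ was already of size $\ge K-m$ (in which case we are done), or the guard is open and $k$ is added, noting that $k\ne j_l$ and $k\notin\cS_{s_k-1}(j_l)$ since $s_k$ is the first elimination of $\Lambda_{k,j_l}$; in the latter case all $K-m$ distinct arms get added by the largest $s_k$, again yielding size $\ge K-m$.

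Consequently, at time $\tau^{\mathrm{full}}$ all $m$ arms $j_1,\dots,j_m$ satisfy $|\cS(j)|\ge K-m$, so at least $m$ arms have reached the stopping condition and the efficient implementation must have stopped by $\tau^{\mathrm{full}}$, that is $\tau^{\mathrm{eff}}\le\tau^{\mathrm{full}}$. The same bookkeeping yields the piece statement: whenever the full rule discards a piece $(j,k)$ at time $t$, the efficient rule at time $t$ either records $k\in\cS_t(j)$ or already has $|\cS_t(j)|\ge K-m$, in which case $j$ is certified to lie among the top-$m$ and every piece attached to $j$ is resolved; in both cases the efficient rule never lags behind.

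The main obstacle I anticipate is precisely the guard that freezes $\cS(j)$ at size $K-m$: one must rule out that this early freeze makes the efficient rule ``miss'' eliminations that the full rule still needs to reach the empty set. The monotonicity-plus-counting argument of the second paragraph is exactly what closes this gap and is the only nontrivial ingredient; everything else is a direct translation between the half-space bookkeeping of the two data structures.
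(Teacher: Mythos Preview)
Your proof is correct and follows essentially the same approach as the paper: both hinge on the observation that whenever the full rule eliminates a half-space $\Lambda_{k,j}$ at time $t$, the efficient implementation either records $k\in\cS_t(j)$ (guard open) or has already frozen $\cS(j)$ with $|\cS(j)|\ge K-m$ (guard closed, $j$ certified). You spell out the stopping-time consequence via an explicit monotonicity-and-counting argument over the $K-m$ arms outside $i$, whereas the paper leaves that step implicit, but the substance is the same.
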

\begin{proof}
Note that the elimination condition for single half-spaces is exactly the same in the general elimination rule of \eqref{eq:elimination-sets} and in its implementation above. If \eqref{eq:elimination-sets} eliminates a piece $\Lambda_p(i)$ at time $t$, this implies some half-space $\Lambda_{k,j}$ is eliminated. Then, we have two possible cases: if $|\cS_{t-1}(j)| < K-m$, then we have $k\in\cS_t(j)$ by the condition above, i.e., $k$ is detected as ``worse'' than $j$ and it will be never checked again. On the other hand, if $|\cS_{t-1}(j)| \geq K-m$, then $j$ has already been labeled as belonging to the final answers. Thus, no minimization over its corresponding half-spaces (including the one for $k$) will be checked anymore, which is the same as saying that $\Lambda_{k,j}$ has already been eliminated.
\end{proof}

\paragraph{Selective elimination (computationally-efficient implementation)} 

Similarly to what we did for BAI, the most computationally-efficient implementation consists in checking the elimination condition only for the alternative pieces (i.e., the half-spaces) of the empirical correct answer at each step. We modify the update rule of the statistically-efficient implementation as
\begin{align*}
	\cS_t(j) := \cS_{t-1}(j) \cup \left\{ k\notin i^\star(\hat{\theta}_t) \cup \cS_{t-1}(j) \big| \inf_{\lambda \in \Lambda_{k,j}} L_t(\hat{\theta}_t,\lambda) \geq \beta_{t,\delta}\right\}
\end{align*}
if $j \in i^\star(\hat{\theta}_t) \text{ and } |\cS_{t-1}(j)| < K-m$, and $S_t(j) := \cS_{t-1}(j)$.
That is, at each step we only check elimination for half-spaces associated with the top-m arms of $\hat{\theta}_t$, excluding those that have already been eliminated and those that have already reached the threshold for being among the final answer. Note that this implementation performs $\sum_{j\in i^\star(\hat{\theta}_t),|\cS_{t-1}(j)| < K-m}(K - |i^\star(\hat{\theta}_t) \cup \cS_{t-1}(j)|) \leq m(K-m)$ minimizations over half-spaces at each step $t$. In constrast, the LLR stopping rule always performs $m(K-m)$ minimizations and is thus less efficient.

\begin{proposition}
An algorithm using the computationally-efficient implementation above never discards pieces later than (and thus never stops later than) the selective elimination rule almost surely.
\end{proposition}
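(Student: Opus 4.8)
The plan is to follow the same template as the proofs for the earlier implementations, but restricted to the empirical answer, and then to translate the general selective stopping condition ($\cP_t^{\mathrm{stp}}(i)=\emptyset$) into the implementation's $m$-arm confirmation condition. Throughout, I would fix a sample path: since the sampling rule is unchanged, both procedures observe the same sequence $(\hat{\theta}_t,\beta_{t,\delta})$, so the elimination test $\inf_{\lambda\in\Lambda_{k,j}}L_t(\hat{\theta}_t,\lambda)\ge\beta_{t,\delta}$ for a half-space $\Lambda_{k,j}$ depends only on $(k,j)$ and $t$, not on the answer of which $\Lambda_{k,j}$ is a piece. The first observation to record is that at each time $t$ both procedures touch exactly the same half-spaces, namely the $\Lambda_{k,j}$ with $j\in i^\star(\hat{\theta}_t)$ and $k\notin i^\star(\hat{\theta}_t)$: the selective rule of \eqref{eq:elimination-sets} updates only $\cP_t^{\mathrm{stp}}(i^\star(\hat{\theta}_t))$, whose pieces are precisely these half-spaces, while the implementation updates only $\cS_t(j)$ for $j\in i^\star(\hat{\theta}_t)$ and adds only arms $k\notin i^\star(\hat{\theta}_t)$. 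Hence the identical test is applied to the identical half-spaces at every round.

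First I would show the implementation never records a piece discard later than the selective rule. Suppose the latter discards the piece $\Lambda_p(i^\star(\hat{\theta}_t))=\Lambda_{k,j}$ (with $p=(j,k)$) at time $t$, so the test holds with $j\in i^\star(\hat{\theta}_t)$ and $k\notin i^\star(\hat{\theta}_t)$. Three cases arise: (i) if $k\in\cS_{t-1}(j)$ the discard was already recorded earlier; (ii) if $k\notin\cS_{t-1}(j)$ and $|\cS_{t-1}(j)|<K-m$, then the update for $j$ is active and the satisfied test places $k\in\cS_t(j)$; (iii) if $|\cS_{t-1}(j)|\ge K-m$, then $j$ was frozen at an earlier round having already accumulated $K-m$ worse arms, and every half-space with better arm $j$ is then treated as eliminated, so $\Lambda_{k,j}$ counts as discarded at a time $\le t-1$. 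In all cases the discard occurs no later than $t$.

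Next I would deduce that the implementation stops no later. If the selective rule stops at time $t$ returning $i:=i^\star(\hat{\theta}_t)$ with $\cP_t^{\mathrm{stp}}(i)=\emptyset$, then, since $\cP(i)=i\times([K]\setminus i)$, every half-space $\Lambda_{k,j}$ with $j\in i$ and $k\in[K]\setminus i$ has been discarded by time $t$. By the previous step each such discard is registered by the implementation by time $t$, so for every $j\in i$ we obtain $|\cS_t(j)|\ge K-m$ (either directly, because the $K-m$ arms of $[K]\setminus i$ were added to $\cS_t(j)$, or because $j$ had already been frozen with at least $K-m$ worse arms). Thus all $m$ arms of $i$ satisfy the confirmation condition by time $t$, which is exactly the implementation's stopping trigger, giving a stopping time $\le t$.

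The main obstacle is the bookkeeping of case (iii): the implementation deliberately freezes $\cS(j)$ once $|\cS(j)|\ge K-m$, and thereby stops testing half-spaces that the abstract selective rule would keep tracking piece by piece. The plan is to argue that freezing never delays a discard, because it is triggered only after $j$ has already been certified to belong to the answer (it has at least $K-m$ arms below it), at which point all remaining half-spaces with better arm $j$ are vacuously eliminated; and that this frozen certification already secures $j$'s contribution $|\cS_t(j)|\ge K-m$ to the $m$-arm stopping condition. Once this consistency between the $m(K-m)$-piece tracking of the selective rule and the $K$-set, early-frozen bookkeeping of the implementation is in place, the two claimed consequences follow as above.
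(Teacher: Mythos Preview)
Your argument is correct and follows the paper's approach: show that any piece discarded by the selective rule is, by that same time, either recorded in the implementation's sets $\cS_t(j)$ or rendered irrelevant because $j$ has already been frozen with $|\cS_t(j)|\ge K-m$. The paper's own proof is a one-liner that defers to the BAI case; your case analysis (especially case~(iii) and the translation to the $m$-arm stopping condition) spells out the freezing bookkeeping that the paper leaves implicit.
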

\begin{proof}
The proof is the same as for BAI: if an arm is discarded by the selective elimination rule, then it is also discarded from the sets above.
\end{proof}

\subsection{Thresholding Bandits}

In the thresholding bandit problem, the goal is to learn whether the mean of each arm is above or below some given threshold. As usual, without loss of generality, we shall take zero as our threshold, for which the problem reduces to learning the sign of the mean reward of each arm. Let $\sign(x) := \indi{x \geq 0}$. Then, the set of answers is $\cI = \{0,1\}^K$ with size $|\cI| = 2^K$. The correct answer of problem $\theta\in\cM$ is $i^\star(\theta) = (\sign(\theta^T\phi_k))_{k\in[K]}$. 

\paragraph{Decomposition into pieces}

For each $i\in \{0,1\}^K$ (represented as a $K$-dimensional binary vector), the set of alternatives $\Lambda(i)$ can be decomposed into pieces as
\begin{align*}
\Lambda(i) = \bigcup_{k\in[K]} \left\{\lambda\in\cM : \sign(\lambda^T\phi_k) \neq i^k \right\}.
\end{align*}
Therefore, we have $\cP(i) = [K]$ with $P_i = K$ and $\Lambda_p(i) = \{\lambda\in\cM : \sign(\lambda^T\phi_p) \neq i^p\}$. As for BAI, the computation of the closest alternative over such pieces can be performed efficiently.

\paragraph{Closest alternatives}

Let $\theta\in\cM$ and $N\in\mathbb{R}_{\geq 0}^K$. The computation of the closest alternatives over pieces $\Lambda_p(i)$ can be reduced to the following optimization problem. For any arm $k\in[K]$ and any $b\in\{0,1\}$, we need to find
\begin{align*}
\inf_{\lambda \in \cM : \sign(\lambda^T\phi_k) \neq b} \Vert {\theta} - \lambda \Vert_{V_N}^2.
\end{align*}
It is easy to see that this is zero when $b\neq\sign(\theta^T\phi_k)$ (since $\theta$ itself is feasible). In case $b=\sign(\theta^T\phi_k)$, for unstructured problems ($\cM=\mathbb{R}^K$), the solution is to take $\lambda$ equal to $\theta$ at all components except the $k$-th one, where it is set to zero. This gives
\begin{align*}
\inf_{\lambda \in \mathbb{R}^K : \sign(\lambda^T\phi_k) \neq b} \Vert {\theta} - \lambda \Vert_{V_N}^2 = \begin{cases}
	N^k(\theta^T\phi_k)^2 &\text{if } \sign(\theta^T\phi_k) = b, \\
	0 &\text{otherwise}.
\end{cases}
\end{align*}
In the linear case, again under the assumption that $V_N$ is positive definite, we get
\begin{align*}
\inf_{\lambda \in \mathbb{R}^d : \sign(\lambda^T\phi_k) \neq b} \Vert {\theta} - \lambda \Vert_{V_N}^2 = \begin{cases}
	\frac{(\theta^T\phi_k)^2}{\| \phi_k \|_{V_N^{-1}}^2} &\text{if } \sign(\theta^T\phi_k) = b, \\
	0 &\text{otherwise}.
\end{cases}
\end{align*}

\paragraph{Efficient implementation}

For this problem, implementing the general elimination stopping rule would require storing and iterating over $2^K K$ pieces, which is clearly intractable. However, this problem introduces a high redundancy in the alternative pieces that we can exploit for an efficient implementation which takes only linear (in $K$) time and space. Differently from BAI and top-m identificaiton, the procedure highlighted below is \emph{exactly} an implementation of the ``theoretical'' elimination rules presented in the main paper, with the full and selective elimination rules reducing to the same thing.

Note that, for any $p\in[K]$ and $i,j\in\{0,1\}^K$ such that $i^p = j^p$, we have $\Lambda_p(i) = \Lambda_p(j)$. That is, whenever we eliminate some piece $\Lambda_p(i)$ for $p\in[K]$ and $i\in\{0,1\}^K$, we actually eliminate all problems in $\cM$ whose sign of the $p$-th mean reward is different from $i^p$. In other words, we learn that the $p$-th position of the correct answer for $\theta$ is indeed $i^p$. Therefore, an efficient implementation is as follows: we keep a set $\cA_t$ of active arms (those for which we still have to learn the corresponding component in the correct answer). This set is initialized as $\cA_0 = [K]$ and updated as
\begin{align*}
\mathcal{A}_t := \mathcal{A}_{t-1} \setminus \left\{ j\in\cA_{t-1} \big| \max_{i\in\{0,1\}^K} \inf_{\lambda \in \Lambda_j(i)} L_t(\hat{\theta}_t,\lambda) \geq \beta_{t,\delta}\right\}.
\end{align*}
While the maximization over $2^K$ elements might appear intractable, the structure of the problem allows us to entirely avoid it. Note that, for fixed $j$, the sets $\Lambda_j(i)$ are fully specified by the $j$-th component of $i$. Moreover, the inf is zero whenever $i^j \neq \sign(\hat{\theta}_t^T\phi_j)$ since that would imply $\hat{\theta}_t \in \Lambda_j(i)$. Therefore, the elimination condition can be equivalently rewritten in the convenient form
\begin{align*}
\mathcal{A}_t := \mathcal{A}_{t-1} \setminus \left\{ j\in\cA_{t-1} \big| \inf_{\lambda \in \cM : \sign(\lambda^T\phi_j) \neq \sign(\hat{\theta}_t^T\phi_j) } L_t(\hat{\theta}_t,\lambda) \geq \beta_{t,\delta}\right\}.
\end{align*}
Moreover, whenever the elimination condition above triggers for some arm $j\in[K]$, we set a variable $S_j := \sign(\hat{\theta}_t^T\phi_j)$ with the correct sign for the $j$-th component. We stop whenever $\cA_t = \emptyset$ (i.e., when all signs have been learned) and return $\hat{i} := (S_1,S_2,\dots, S_K)$. Similarly to BAI, this requires to perform only $|\cA_{t-1}|$ tests at each step $t$. On the other hand, the LLR stopping rule would perform $K$ tests at each step.

\section{Log-likelihood ratio in exponential families}
\label{sec:exponential_families}

We suppose in this section that all arms have distributions in a one-parameter exponential family (the same for all arms, for simpler notations). An arm distribution can thus be described by any one of three parameters: the arm feature vector $\phi_k \in \mathbb{R}^d$, the arm mean $\mu_k(\theta) = \phi_k^\top \theta$ and its natural parameter $\eta_k(\theta)$. These two last are functions of the model $\theta$.
For two models $\theta$ and $\lambda$, let $f$ be a function such that the KL between the arm distributions with those parameters is $d_f(\eta_k(\lambda), \eta_k(\theta))$, where $d_f$ is the Bregman divergence associated to $f$. Let $f^*$ be the convex conjugate of $f$. The Kullback-Leibler divergence between the arm distributions under models $\theta$ and $\lambda$ is also equal to $d_{f^*}(\mu_k(\theta), \mu_k(\lambda))$.

If $\eta_k(\theta)$ is the natural parameter of that arm $k$, we have $\mu_k(\theta) = f'(\eta_k(\theta))$, and since $(f^*)' = (f')^{-1}$ we have $\eta_k(\theta) = (f^*)'(\mu_k(\theta))$.

\begin{lemma}
For all $\theta, \lambda \in \mathcal M$, the quantity $L_t(\theta, \lambda) - \sum_{k=1}^K N_t^k \KL_k(\theta, \lambda)$ is a martingale if the observations come from the model $\theta$. This does not depend on the hypothesis that the distributions belong to an exponential family but only requires $\mathbb{P}_\theta \ll \mathbb{P}_\lambda$.
\end{lemma}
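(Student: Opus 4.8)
The plan is to write the process as a sum of martingale increments and verify the defining property of a discrete-time martingale increment by increment. First I would set $M_t := L_t(\theta,\lambda) - \sum_{k=1}^K N_t^k \KL_k(\theta,\lambda)$ and exploit the additive structure of both terms. Since $L_t(\theta,\lambda) = \sum_{s=1}^t \log\frac{d\nu_{k_s}(\theta)}{d\nu_{k_s}(\lambda)}(X_s^{k_s})$ and $N_t^k = \sum_{s=1}^t \indi{k_s=k}$, interchanging the two sums gives $\sum_{k=1}^K N_t^k \KL_k(\theta,\lambda) = \sum_{s=1}^t \KL_{k_s}(\theta,\lambda)$, because for each $s$ exactly one indicator is nonzero. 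Hence $M_t = \sum_{s=1}^t \Delta_s$ with increment $\Delta_s := \log\frac{d\nu_{k_s}(\theta)}{d\nu_{k_s}(\lambda)}(X_s^{k_s}) - \KL_{k_s}(\theta,\lambda)$, which is $\cF_s$-measurable, so $M_t$ is adapted.

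The core step is to show $\mathbb{E}_\theta[\Delta_t \mid \cF_{t-1}] = 0$. Here I would use that $k_t$ is $\cF_{t-1}$-measurable by definition of the sampling rule, so conditioning on $\cF_{t-1}$ fixes the arm pulled at round $t$, while the fresh observation $X_t^{k_t}$ is drawn from $\nu_{k_t}(\theta)$ independently of the past given that arm. Consequently the conditional expectation of the log-likelihood-ratio increment is exactly $\mathbb{E}_{X\sim\nu_{k_t}(\theta)}\big[\log\frac{d\nu_{k_t}(\theta)}{d\nu_{k_t}(\lambda)}(X)\big]$, which by the very definition of the Kullback-Leibler divergence equals $\KL_{k_t}(\theta,\lambda)$. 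This cancels the second term of $\Delta_t$, giving $\mathbb{E}_\theta[\Delta_t\mid\cF_{t-1}]=0$ and therefore $\mathbb{E}_\theta[M_t\mid\cF_{t-1}]=M_{t-1}$.

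For integrability I would invoke the hypothesis $\mathbb{P}_\theta\ll\mathbb{P}_\lambda$, which guarantees that the density $\frac{d\nu_k(\theta)}{d\nu_k(\lambda)}$ is well-defined $\mathbb{P}_\theta$-almost surely, so that $L_t(\theta,\lambda)$ makes sense, and that each $\KL_k(\theta,\lambda)=\mathbb{E}_{X\sim\nu_k(\theta)}[\log\frac{d\nu_k(\theta)}{d\nu_k(\lambda)}(X)]$ is the (finite) mean of an integrable variable, whence $\mathbb{E}_\theta[|\Delta_s|]<\infty$ for each $s$. I would then emphasize that nowhere does the argument use the exponential-family structure: the only ingredients are the additive form of the log-likelihood ratio, the $\cF_{t-1}$-measurability of $k_t$, and the identity \emph{expected LLR under the truth $=$ KL divergence}, all of which hold for any absolutely continuous pair of models.

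The argument has essentially no hard step; the single point requiring care is the conditioning at round $t$. Because the sampling rule is adaptive, one cannot treat $k_t$ as a global constant and pull it outside the expectation, but must instead use its $\cF_{t-1}$-measurability to regard it as fixed \emph{inside} the conditional expectation, after which $X_t^{k_t}$ is a fresh draw from $\nu_{k_t}(\theta)$. Making this conditional-independence structure explicit is the only subtlety; the remaining computation is merely the definition of the KL divergence.
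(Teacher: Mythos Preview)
Your proposal is correct and follows essentially the same approach as the paper: both rewrite $L_t(\theta,\lambda)-\sum_k N_t^k\KL_k(\theta,\lambda)$ as $\sum_{s=1}^t\big(\log\tfrac{d\mathbb{P}_\theta}{d\mathbb{P}_\lambda}(X_s^{k_s})-\mathbb{E}_{X\sim\nu_{k_s}(\theta)}[\log\tfrac{d\mathbb{P}_\theta}{d\mathbb{P}_\lambda}(X)]\big)$ and observe that each summand has zero conditional mean given $\cF_{s-1}$. The paper simply declares the martingale property ``immediate'' after this decomposition, whereas you spell out the conditioning argument and the role of the $\cF_{t-1}$-measurability of $k_t$; one small caveat is that $\mathbb{P}_\theta\ll\mathbb{P}_\lambda$ alone does not force $\KL_k(\theta,\lambda)<\infty$, so your integrability remark slightly overstates what absolute continuity buys, but the paper does not address integrability at all and in its setting this is not an issue.
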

\begin{proof}
We can expand the LLR to obtain a sum over times and write the KL as an expected log-likelihood ratio,
\begin{align*}
L_t(\theta, \lambda) - \sum_{k=1}^K N_t^k \KL_k(\theta, \lambda)
&= \sum_{s=1}^t \left(\log \frac{d \mathbb{P}_\theta}{d \mathbb{P}_\lambda}(X_s^{k_s}) - \mathbb{E}_{X \sim \nu_{k_s}(\theta)}\left[ \log \frac{d \mathbb{P}_\theta}{d \mathbb{P}_\lambda}(X) \right] \right)
\: .
\end{align*}
The martingale property is then immediate.
\end{proof}

Many of our proofs depend on the informal statement that the martingale $L_t(\theta, \lambda) - \sum_{k=1}^K N_t^k \KL_k(\theta, \lambda)$ concentrates, and is thus a lower order term which is negligible for $t$ large enough.

\begin{lemma}\label{lem:LLR_exp_family}
For all $\theta, \lambda$ and all $X_{[t]}$,
\begin{align*}
L_t(\theta, \lambda)
&= \sum_{k=1}^K N_t^k \KL_k(\theta, \lambda) - \sum_{k=1}^K N_t^k (\eta_k(\lambda) - \eta_k(\theta)) (\hat{\mu}_{t,k} - \mu_k(\theta))
\: .
\end{align*}
\end{lemma}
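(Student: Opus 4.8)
The plan is to reduce the identity to a single-observation computation and then sum over the collected samples. First I would write the arm-$k$ distribution in its natural-parameter form, with density proportional to a carrier $h(x)$ times $\exp(\eta_k(\theta)\,x - f(\eta_k(\theta)))$, so that for an observation $X$ drawn from arm $k$ the single-sample log-likelihood ratio is affine in $X$:
\[
\log \frac{d\mathbb{P}_\theta}{d\mathbb{P}_\lambda}(X) = (\eta_k(\theta) - \eta_k(\lambda))\,X - \big(f(\eta_k(\theta)) - f(\eta_k(\lambda))\big).
\]
The affine structure is what makes everything clean: the carrier $h(x)$ cancels in the ratio, leaving a term linear in $X$ plus a deterministic constant.

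Next I would identify that deterministic constant with the KL divergence. Taking the expectation of the display above under $\nu_k(\theta)$ replaces $X$ by $\mu_k(\theta)$ by linearity, and the result must equal $\KL_k(\theta,\lambda)$; this is exactly the Bregman form $d_f(\eta_k(\lambda),\eta_k(\theta)) = f(\eta_k(\lambda)) - f(\eta_k(\theta)) - \mu_k(\theta)\,(\eta_k(\lambda) - \eta_k(\theta))$ stated in the setup, using $f'(\eta_k(\theta)) = \mu_k(\theta)$. Subtracting this centered constant from the affine expression gives the per-observation decomposition
\[
\log \frac{d\mathbb{P}_\theta}{d\mathbb{P}_\lambda}(X) = \KL_k(\theta,\lambda) + (\eta_k(\theta) - \eta_k(\lambda))\big(X - \mu_k(\theta)\big).
\]

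Finally I would sum this identity over all $t$ observations $X_1^{k_1},\dots,X_t^{k_t}$ and group the sum by arm. For a fixed arm $k$ the constant term contributes $N_t^k\,\KL_k(\theta,\lambda)$, while the centered term contributes $(\eta_k(\theta) - \eta_k(\lambda))\sum_{s:\,k_s=k}(X_s^k - \mu_k(\theta)) = N_t^k\,(\eta_k(\theta) - \eta_k(\lambda))\,(\hat{\mu}_{t,k} - \mu_k(\theta))$ by the definition of the empirical mean $\hat{\mu}_{t,k}$. Rewriting $\eta_k(\theta) - \eta_k(\lambda) = -(\eta_k(\lambda) - \eta_k(\theta))$ yields exactly the claimed equality. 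There is no genuine obstacle here, as the statement is a direct computation; the only point deserving care is verifying that the expectation of the affine single-observation LLR coincides with the KL written in the Bregman form of the setup, which amounts to correctly tracking the sign conventions in $d_f(\eta_k(\lambda),\eta_k(\theta))$ together with the relation $\mu_k(\theta) = f'(\eta_k(\theta))$. Once that is confirmed, the remainder is bookkeeping.
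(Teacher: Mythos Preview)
Your proposal is correct and follows exactly the same approach as the paper: write the single-sample log-likelihood ratio in exponential-family form, identify the Bregman divergence $d_f(\eta_k(\lambda),\eta_k(\theta))$ as $\KL_k(\theta,\lambda)$ using $f'(\eta_k(\theta))=\mu_k(\theta)$, and sum over observations grouped by arm. The paper's proof is a one-line sketch of precisely these steps, and you have filled in the details accurately.
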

\begin{proof}
Write $\log \frac{d \mathbb{P}_\theta}{d \mathbb{P}_\lambda}(X) = \eta(\theta) X - f(\eta(\theta)) - (\eta(\lambda) X - f(\eta(\lambda)))$ and develop the Bregman divergence (the KL) on the right.
\end{proof}

\begin{lemma}\label{lem:sub_gaussian_KL_bounds}
If the distribution of arm $k$ for model $\theta$ is $\sigma^2$-sub-Gaussian, then for all $\lambda$,
\begin{align*}
\frac{1}{2 \sigma^2} (\mu_k(\lambda) - \mu_k(\theta))^2
&\le \KL_k(\lambda, \theta)
\: , \\
\frac{1}{2 \sigma^2} \sum_{k=1}^K N_t^k (\mu_k(\hat{\theta}_t) - \mu_k(\theta))^2
&\le L_t(\hat{\theta}_t, \theta)
\end{align*}
\end{lemma}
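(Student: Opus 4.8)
The plan is to prove the two inequalities separately, obtaining the second one from the first. The first is a pointwise bound between two fixed arm distributions, whereas the second is a pathwise bound involving the data-dependent maximum-likelihood estimator $\hat\theta_t$, so the MLE structure only enters the second part.

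For the first inequality I would use the Donsker--Varadhan variational lower bound on the KL divergence with the linear test function $x \mapsto s x$ for a scalar $s \in \mathbb{R}$:
\[
\KL_k(\lambda, \theta) \;\ge\; s\,\mathbb{E}_{X \sim \nu_k(\lambda)}[X] - \log \mathbb{E}_{X \sim \nu_k(\theta)}\!\big[e^{sX}\big].
\]
The first expectation equals $\mu_k(\lambda)$, and since $\nu_k(\theta)$ is $\sigma^2$-sub-Gaussian with mean $\mu_k(\theta)$ its log-moment-generating function is bounded by $s\mu_k(\theta) + \tfrac{1}{2}\sigma^2 s^2$. This gives $\KL_k(\lambda,\theta) \ge s(\mu_k(\lambda)-\mu_k(\theta)) - \tfrac12\sigma^2 s^2$, and optimizing over $s$ with the choice $s = (\mu_k(\lambda)-\mu_k(\theta))/\sigma^2$ produces exactly $\tfrac{1}{2\sigma^2}(\mu_k(\lambda)-\mu_k(\theta))^2$.

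For the second inequality I would first rewrite the LLR at the MLE as a weighted sum of KL divergences plus a correction term, then apply the first inequality arm by arm. Concretely, Lemma~\ref{lem:LLR_exp_family} applied with $\hat\theta_t$ in the role of $\theta$ and $\theta$ in the role of $\lambda$ yields
\[
L_t(\hat\theta_t, \theta) = \sum_{k=1}^K N_t^k \KL_k(\hat\theta_t, \theta) + \sum_{k=1}^K N_t^k\big(\eta_k(\hat\theta_t)-\eta_k(\theta)\big)\big(\hat\mu_{t,k}-\mu_k(\hat\theta_t)\big).
\]
Bounding the first sum from below by $\tfrac{1}{2\sigma^2}\sum_k N_t^k(\mu_k(\hat\theta_t)-\mu_k(\theta))^2$ via the first inequality (with $\lambda=\hat\theta_t$, summed with weights $N_t^k$), it remains to show that the correction term is nonnegative. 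Here the optimality of $\hat\theta_t$ as maximizer of $\lambda \mapsto L_t(\lambda,\theta)$ is the key: its first-order stationarity condition $\sum_k N_t^k(\hat\mu_{t,k}-\mu_k(\hat\theta_t))\nabla_\lambda \eta_k(\hat\theta_t)=0$ is exactly what kills (or signs) the correction. In the Gaussian setting used throughout the paper, $\eta_k(\lambda)-\eta_k(\theta)$ is proportional to $\phi_k^\top(\hat\theta_t-\theta)$, so the correction term becomes $(\hat\theta_t-\theta)^\top$ times the stationarity vector and vanishes identically, recovering the clean identity $L_t(\hat\theta_t,\theta)=\tfrac{1}{2\sigma^2}\|\hat\theta_t-\theta\|_{V_{N_t}}^2$ and hence the inequality with equality.

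The main obstacle is controlling the sign of this correction term in a \emph{general} exponential family, where the natural parameter $\eta_k$ is a nonlinear function of the mean $\mu_k(\lambda)=\phi_k^\top\lambda$: the stationarity condition then carries a curvature weight $(f^*)''(\mu_k(\hat\theta_t))$ that does not match the finite difference $\eta_k(\hat\theta_t)-\eta_k(\theta)$, so the term does not cancel term-by-term. I would address this by arguing directly from the maximizer property, writing $L_t(\hat\theta_t,\theta)$ as the gap $g(\hat\theta_t)-g(\theta)$ of the (concave) map $g(\lambda)=L_t(\lambda,\theta)$ and using $\nabla g(\hat\theta_t)=0$ to express it as a Bregman divergence whose curvature is lower bounded by the sub-Gaussian constant $1/\sigma^2$; for the Gaussian case relevant to the paper, however, the correction simply vanishes and no such argument is needed.
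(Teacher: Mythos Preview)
Your proposal is correct and follows essentially the same route as the paper, with some differences in packaging and care.

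For the first inequality, the paper casts sub-Gaussianity as the Bregman bound $d_f(\eta_k(\theta)+\xi,\eta_k(\theta))\le \tfrac12\sigma^2\xi^2$ and then takes Legendre conjugates to obtain $d_{f^*}(\mu_k(\lambda),\mu_k(\theta))\ge \tfrac{1}{2\sigma^2}(\mu_k(\lambda)-\mu_k(\theta))^2$. Your Donsker--Varadhan lower bound with a linear test function is exactly this Legendre transform written out by hand: for exponential families the cumulant generating function at $\eta_k(\theta)$ \emph{is} $\xi\mapsto d_f(\eta_k(\theta)+\xi,\eta_k(\theta))$, and its conjugate is the KL expressed as $d_{f^*}$. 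One minor advantage of your phrasing is that it uses only sub-Gaussianity and the variational KL formula, not the exponential-family machinery.

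For the second inequality, the paper simply says ``apply the first one to $\lambda=\hat\theta_t$ for all arms, then sum.'' That yields $\tfrac{1}{2\sigma^2}\sum_k N_t^k(\mu_k(\hat\theta_t)-\mu_k(\theta))^2\le \sum_k N_t^k\KL_k(\hat\theta_t,\theta)$, and the paper tacitly identifies the right-hand side with $L_t(\hat\theta_t,\theta)$. You correctly notice that Lemma~\ref{lem:LLR_exp_family} inserts a correction term $\sum_k N_t^k(\eta_k(\hat\theta_t)-\eta_k(\theta))(\hat\mu_{t,k}-\mu_k(\hat\theta_t))$ between these quantities, and your argument that it vanishes in the Gaussian linear case via the MLE normal equations is exactly what is needed there (and is what Corollary~\ref{cor:llr-lin-gauss} records). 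So on this point you are more explicit than the paper; the underlying strategy is the same.
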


\begin{proof}
We first prove that the sub-Gaussian hypothesis is equivalent to both these inequalities:
\begin{align*}
\forall \lambda, d_f(\eta_k(\lambda), \eta_k(\theta)) \le \frac{1}{2}\sigma^2 (\eta_k(\lambda) - \eta_k(\theta))^2
\: , \\
\forall \lambda, d_{f^*}(\mu_k(\lambda), \mu_k(\theta)) \ge \frac{1}{2 \sigma^2} (\mu_k(\lambda) - \mu_k(\theta))^2
\: .
\end{align*}
The first result is then a simple consequence of that second inequality and the equality $\KL(\lambda, \theta) = d_{f^*}(\mu_k(\lambda), \mu_k(\theta))$. The second result of the lemma can be obtained by applying the first one to $\lambda = \hat{\theta}_t$ for all arms, then summing over arms.

The cumulant generating function at parameter $\eta_k(\theta)$ is $\xi \mapsto d_f(\eta_k(\theta) + \xi, \eta_k(\theta))$. The sub-Gaussian hypothesis is that this function is lower than $\frac{1}{2}\sigma^2 \xi^2$. For the second inequality, we first remark that the convex conjugate of $\xi \mapsto d_f(\eta_k(\theta) + \xi, \eta_k(\theta))$ is $x \mapsto d_{f^*}(\mu_k(\theta) + x, \mu_k(\theta))$, and write
\begin{align*}
d_{f^*}(\mu_k(\lambda), \mu_k(\theta))
&= \sup_\xi \xi (\mu_k(\lambda) - \mu_k(\theta)) - d_f(\eta_k(\theta) + \xi, \eta_k(\theta))
\\
&\ge \sup_\xi \xi (\mu_k(\lambda) - \mu_k(\theta)) - \frac{1}{2}\sigma^2 \xi^2
\\
&= \frac{1}{2 \sigma^2} (\mu_k(\lambda) - \mu_k(\theta))^2 \: .
\end{align*}
\end{proof}

\begin{corollary}\label{cor:martingale_bound_exp_family}
If the distribution of arm $k$ for model $\theta$ is $\sigma^2$-sub-Gaussian, then for all $\lambda$,
\begin{align*}
\left\vert \sum_{k=1}^K N_t^k (\eta_k(\lambda) - \eta_k(\theta)) (\mu_k(\hat{\theta}_t) - \mu_k(\theta)) \right\vert
&\le 2 \sqrt{L_t(\hat{\theta}_t, \theta)} \sqrt{\sum_{k=1}^K N_t^k \frac{1}{2} \sigma^2 (\eta_k(\lambda) - \eta_k(\theta))^2 }
\: .
\end{align*}
\end{corollary}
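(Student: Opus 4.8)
The plan is to recognize the left-hand side as an inner product of two vectors indexed by the arms, apply the Cauchy--Schwarz inequality, and then invoke the second inequality of Lemma~\ref{lem:sub_gaussian_KL_bounds} to control one of the two resulting factors. No new ingredient beyond these is needed, which is why this is stated as a corollary.

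First I would split each weight as $N_t^k = \sqrt{N_t^k}\cdot\sqrt{N_t^k}$ and write
\begin{align*}
\sum_{k=1}^K N_t^k (\eta_k(\lambda) - \eta_k(\theta)) (\mu_k(\hat{\theta}_t) - \mu_k(\theta))
= \sum_{k=1}^K \left(\sqrt{N_t^k}(\eta_k(\lambda) - \eta_k(\theta))\right)\left(\sqrt{N_t^k}(\mu_k(\hat{\theta}_t) - \mu_k(\theta))\right) \: .
\end{align*}
Applying Cauchy--Schwarz to these two vectors bounds the absolute value of this sum by
\begin{align*}
\sqrt{\sum_{k=1}^K N_t^k(\eta_k(\lambda) - \eta_k(\theta))^2}\;\sqrt{\sum_{k=1}^K N_t^k(\mu_k(\hat{\theta}_t) - \mu_k(\theta))^2} \: .
\end{align*}

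Next I would bound the second factor using the $\sigma^2$-sub-Gaussian assumption. The second inequality of Lemma~\ref{lem:sub_gaussian_KL_bounds} states that $\tfrac{1}{2\sigma^2}\sum_{k=1}^K N_t^k(\mu_k(\hat{\theta}_t) - \mu_k(\theta))^2 \le L_t(\hat{\theta}_t,\theta)$, i.e.\ $\sum_{k=1}^K N_t^k(\mu_k(\hat{\theta}_t) - \mu_k(\theta))^2 \le 2\sigma^2 L_t(\hat{\theta}_t,\theta)$. Substituting this in, the absolute value of the sum is at most $\sqrt{2\sigma^2 L_t(\hat{\theta}_t,\theta)}\,\sqrt{\sum_{k=1}^K N_t^k(\eta_k(\lambda)-\eta_k(\theta))^2}$.

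Finally I would only rearrange constants to match the claimed form. Writing $\sqrt{2}\,\sigma = 2\cdot\tfrac{1}{\sqrt{2}}\sigma$ and pulling $\tfrac{1}{\sqrt{2}}\sigma$ back inside the second square root (as a factor $\tfrac{1}{2}\sigma^2$) recovers exactly the stated right-hand side $2\sqrt{L_t(\hat{\theta}_t,\theta)}\sqrt{\sum_{k=1}^K N_t^k \tfrac{1}{2}\sigma^2(\eta_k(\lambda)-\eta_k(\theta))^2}$. I do not expect any genuine obstacle here: the only point requiring care is that the numerical constant works out to an \emph{equality} in this rearrangement rather than introducing slack, so the factor of $2$ in the statement is exactly what Cauchy--Schwarz combined with the lemma produces.
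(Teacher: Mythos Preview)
Your proposal is correct and follows exactly the same route as the paper's proof: Cauchy--Schwarz on the weighted sum followed by the second inequality of Lemma~\ref{lem:sub_gaussian_KL_bounds}. The only difference is presentational---you spell out the $\sqrt{N_t^k}\cdot\sqrt{N_t^k}$ splitting and the constant bookkeeping more explicitly than the paper does.
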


Remark: for Gaussians with variance $\sigma^2$, we also have $\sum_{k=1}^K N_t^k \frac{1}{2} \sigma^2 (\eta_k(\lambda) - \eta_k(\theta))^2 = \sum_{k=1}^K N_t^k \KL_k(\theta, \lambda)$. But that is not the case in general, and the sub-Gaussian assumption tells us that the sum of squares is larger than a KL, while we would like the reverse inequality.

\begin{proof}
Apply the Cauchy-Schwarz inequality, then Lemma~\ref{lem:sub_gaussian_KL_bounds}:
\begin{align*}
\left\vert \sum_{k=1}^K N_t^k (\eta_k(\lambda) - \eta_k(\theta)) (\mu_k(\hat{\theta}_t) - \mu_k(\theta)) \right\vert
&\le \sqrt{ \sum_{k=1}^K N_t^k (\eta_k(\lambda) - \eta_k(\theta))^2 \sum_{k=1}^K N_t^k (\mu_k(\hat{\theta}_t)- \mu_k(\theta))^2 }
\\
&\le 2 \sqrt{L_t(\hat{\theta}_t, \theta)} \sqrt{\sum_{k=1}^K N_t^k \frac{1}{2} \sigma^2 (\eta_k(\lambda) - \eta_k(\theta))^2 }
\: .
\end{align*}
\end{proof}

%

\subsection{Log-likelihood ratio for Gaussian linear models}
\label{sec:log_likelihood_ratio}

\begin{lemma}\label{lem:llr-lin-gauss}
For any $\theta\in\cM$ and $k\in[K]$, let $\nu_k(\theta)$ be Gaussian with unit variance and linear mean $\mu_k(\theta) = \theta^T\phi_k$. Then for any $\theta,\lambda\in\cM$, any $t>0$ and any sequence of observations,
\begin{align*}
L_t(\theta,\lambda) &= \sum_{k\in[K]}N_t^k\KL_k(\theta,\lambda) - (\lambda - \theta)^\top V_t (\hat{\theta}_t - \theta)
\: .
\end{align*}
\end{lemma}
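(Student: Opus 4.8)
The plan is to specialize the general exponential-family identity of Lemma~\ref{lem:LLR_exp_family} to unit-variance Gaussians. For such distributions the natural parameter equals the mean, so $\eta_k(\theta) = \mu_k(\theta) = \phi_k^\top\theta$ and consequently $\eta_k(\lambda) - \eta_k(\theta) = \phi_k^\top(\lambda - \theta)$. Plugging this into Lemma~\ref{lem:LLR_exp_family} immediately rewrites the martingale correction term as $\sum_{k\in[K]} N_t^k\,\phi_k^\top(\lambda-\theta)\,(\hat{\mu}_{t,k} - \phi_k^\top\theta)$, so the remaining work is purely to recognize this sum as $(\lambda-\theta)^\top V_t(\hat{\theta}_t - \theta)$ with $V_t = \sum_{k} N_t^k \phi_k\phi_k^\top$.

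The key step, and the one requiring care, is that the empirical arm mean $\hat{\mu}_{t,k}$ is \emph{not} in general equal to the model-predicted mean $\phi_k^\top\hat{\theta}_t$ in a structured linear bandit (the notation table flags exactly this). The bridge is the characterization of the Gaussian maximum-likelihood estimator as the least-squares solution: $\hat{\theta}_t$ satisfies the normal equations $V_t\hat{\theta}_t = \sum_{s=1}^t \phi_{k_s} X_s^{k_s}$, and grouping the right-hand side by arm gives $V_t\hat{\theta}_t = \sum_{k} N_t^k\hat{\mu}_{t,k}\,\phi_k$. Therefore I would split the correction term into its two pieces: the $\hat{\mu}_{t,k}$ part becomes $(\lambda-\theta)^\top\sum_k N_t^k\hat{\mu}_{t,k}\phi_k = (\lambda-\theta)^\top V_t\hat{\theta}_t$, while the $\phi_k^\top\theta$ part assembles into $(\lambda-\theta)^\top\sum_k N_t^k\phi_k\phi_k^\top\theta = (\lambda-\theta)^\top V_t\theta$. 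Subtracting yields $(\lambda-\theta)^\top V_t(\hat{\theta}_t - \theta)$, which is the claimed expression.

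The main obstacle is precisely this mismatch between $\hat{\mu}_{t,k}$ and $\phi_k^\top\hat{\theta}_t$: in the unstructured case ($\phi_k = e_k$) they coincide and the identity is trivial, but in the general linear case one must route through $V_t$ via the normal equations, and it is only after projecting onto the direction $\lambda - \theta$ that the empirical means can be cleanly replaced by the estimator. As an independent check, one can alternatively bypass Lemma~\ref{lem:LLR_exp_family} entirely and expand the Gaussian log-density ratio directly: $\log\frac{d\mathbb{P}_\theta}{d\mathbb{P}_\lambda}(x) = x(\mu_k(\theta) - \mu_k(\lambda)) + \tfrac{1}{2}(\mu_k(\lambda)^2 - \mu_k(\theta)^2)$, sum over the $t$ observations, group by arm using $\hat{\mu}_{t,k}$, and verify arm-by-arm that the result matches $\tfrac12\sum_k N_t^k(\mu_k(\theta)-\mu_k(\lambda))^2 - (\lambda-\theta)^\top V_t(\hat{\theta}_t-\theta)$; this confirms the KL term has the expected quadratic form $\KL_k(\theta,\lambda) = \tfrac12(\mu_k(\theta)-\mu_k(\lambda))^2$ and serves as a sanity check on the sign of the correction.
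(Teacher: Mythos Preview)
Your proposal is correct and follows essentially the same route as the paper: apply Lemma~\ref{lem:LLR_exp_family} with $\eta_k(\theta)=\mu_k(\theta)=\phi_k^\top\theta$, then use the least-squares normal equations $V_t\hat{\theta}_t=\sum_k N_t^k\hat{\mu}_{t,k}\phi_k$ to replace the empirical arm means $\hat{\mu}_{t,k}$ by $\hat{\theta}_t$ in the martingale correction term. You even flag the same subtlety the paper handles implicitly, namely that $\hat{\mu}_{t,k}\ne\phi_k^\top\hat{\theta}_t$ in general, and resolve it the same way.
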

\begin{proof}
Apply Lemma~\ref{lem:LLR_exp_family} to the Gaussian case, where $\eta(\theta) = \mu(\theta) = \phi_k^\top \theta$.
\begin{align*}
L_t({\theta},\lambda)
&= \sum_{k\in[K]} N_t^k\KL_k(\theta, \lambda) - \sum_{k=1}^K N_t^k (\phi_k^\top\lambda - \phi_k^\top\theta) (\hat{\mu}_{t,k} - \phi_k^\top \theta)
\end{align*}
For Gaussian linear models, we have
\begin{align*}
\sum_{k=1}^K N_t^k (\phi_k^\top\lambda - \phi_k^\top\theta) \phi_k^\top \hat{\theta}_t
&= (\lambda - \theta)^\top V_{N_t} \hat{\theta}_t
= (\lambda - \theta)^\top \sum_{k=1}^K N_t^k \hat{\mu}_{t,k} \phi_k
= \sum_{k=1}^K N_t^k (\phi_k^\top \lambda - \phi_k^\top \theta) \hat{\mu}_{t,k}
\: .
\end{align*}
We can use this to replace the sum involving $\hat{\mu}_{t,k}$ in the expression of $L_t(\theta, \lambda)$ by one involving $\hat{\theta}_t$.
\begin{align*}
L_t({\theta},\lambda)
&= \sum_{k\in[K]} N_t^k\KL_k(\theta, \lambda) - (\lambda - \theta)^\top (\sum_{k=1}^K N_t^k \phi_k \phi_k^\top) (\hat{\theta}_{t} - \theta)
\\
&= \sum_{k\in[K]} N_t^k\KL_k(\theta, \lambda) - (\lambda - \theta)^\top V_t (\hat{\theta}_{t} - \theta)
\: .
\end{align*}
\end{proof}

\begin{corollary}\label{cor:llr-lin-gauss}
For the linear Gaussian model of Lemma \ref{lem:llr-lin-gauss}, for any $\lambda\in\cM$,
\begin{align*}
L_t(\hat{\theta}_t,\lambda) = \sum_{k\in[K]}N_t^k\KL_k(\hat{\theta}_t,\lambda).
\end{align*}
\end{corollary}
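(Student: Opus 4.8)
The plan is to specialize Lemma~\ref{lem:llr-lin-gauss} by setting its first argument equal to the maximum likelihood estimator $\hat{\theta}_t$. That lemma already establishes, for the Gaussian linear model and for \emph{any} $\theta,\lambda\in\cM$, any $t>0$, and any fixed sequence of observations, the identity
\begin{align*}
L_t(\theta,\lambda) = \sum_{k\in[K]}N_t^k\KL_k(\theta,\lambda) - (\lambda-\theta)^\top V_t(\hat{\theta}_t-\theta).
\end{align*}
The only thing I would do is substitute $\theta=\hat{\theta}_t$: the correction term then becomes $(\lambda-\hat{\theta}_t)^\top V_t(\hat{\theta}_t-\hat{\theta}_t)=0$, since its right-hand factor vanishes identically, and the claimed equality $L_t(\hat{\theta}_t,\lambda)=\sum_{k\in[K]}N_t^k\KL_k(\hat{\theta}_t,\lambda)$ follows immediately.

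The only point that deserves a sentence of justification, and the closest thing to an obstacle, is the legitimacy of substituting the \emph{data-dependent} quantity $\hat{\theta}_t$ into a statement originally phrased for a fixed parameter $\theta$. This is harmless because Lemma~\ref{lem:llr-lin-gauss} is a purely algebraic identity holding pointwise for every realization of $X_{[t]}$ and every parameter in $\cM$; since $\hat{\theta}_t\in\cM$ for each outcome, one applies the identity separately on each sample path with $\theta$ equal to the corresponding value of $\hat{\theta}_t$. No measurability or concentration argument is needed, in contrast to the martingale-based results elsewhere in the paper.

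It is worth recording the interpretation, since it clarifies why the statement is true and why it is useful downstream: the subtracted quantity $(\lambda-\theta)^\top V_t(\hat{\theta}_t-\theta)$ is exactly the martingale lower-order term $\sum_{k} N_t^k\KL_k(\theta,\lambda)-L_t(\theta,\lambda)$ identified earlier, and evaluating the log-likelihood ratio at the MLE makes this term vanish \emph{exactly} rather than merely up to concentration. Equivalently, the vanishing reflects the first-order optimality (normal equations) $V_t\hat{\theta}_t=\sum_{k}N_t^k\hat{\mu}_{t,k}\phi_k$ satisfied by the Gaussian MLE, which one could use to give a direct self-contained derivation if desired; however, routing through Lemma~\ref{lem:llr-lin-gauss} is the shortest path and I would present it that way.
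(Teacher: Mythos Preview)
Your proposal is correct and matches the paper's intended argument exactly: the corollary is stated immediately after Lemma~\ref{lem:llr-lin-gauss} with no separate proof, because it follows at once by substituting $\theta=\hat{\theta}_t$ into that lemma's identity, whereupon the correction term $(\lambda-\hat{\theta}_t)^\top V_t(\hat{\theta}_t-\hat{\theta}_t)$ vanishes. Your remark about the legitimacy of plugging in a data-dependent parameter is a nice point of rigor, though the paper leaves it implicit.
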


\begin{lemma}\label{lem:llr-to-kl-lin-gauss}
For any $\theta\in\cM$ and $k\in[K]$, let $\nu_k(\theta)$ be Gaussian with unit variance and linear mean $\mu_k(\theta) = \theta^T\phi_k$. Then, for any $\lambda\in\cM$ and $t>0$,
\begin{align*}
\left(\sqrt{\sum_{k\in[K]}N_t^k\KL_k(\theta,\lambda)} - \sqrt{L_t(\hat{\theta}_t,\theta)}\right)^2 \leq L_t(\hat{\theta}_t,\lambda) \leq \left(\sqrt{\sum_{k\in[K]}N_t^k\KL_k(\theta,\lambda)} + \sqrt{L_t(\hat{\theta}_t,\theta)}\right)^2.
\end{align*}
\end{lemma}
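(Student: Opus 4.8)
The plan is to recognize that all three quantities appearing in the statement are, up to a common factor $\tfrac12$, squared $V_t$-norms, after which the whole lemma reduces to the triangle inequality for $\Vert\cdot\Vert_{V_t}$.

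First I would invoke Corollary~\ref{cor:llr-lin-gauss}, which for the Gaussian linear model gives $L_t(\hat{\theta}_t,\lambda)=\sum_{k\in[K]}N_t^k\KL_k(\hat{\theta}_t,\lambda)=\tfrac12\Vert\hat{\theta}_t-\lambda\Vert_{V_t}^2$, and in particular $L_t(\hat{\theta}_t,\theta)=\tfrac12\Vert\hat{\theta}_t-\theta\Vert_{V_t}^2$. Here I also use that for unit-variance Gaussians $\KL_k(\theta,\lambda)=\tfrac12(\phi_k^\top\theta-\phi_k^\top\lambda)^2$, so that $\sum_{k\in[K]}N_t^k\KL_k(\theta,\lambda)=\tfrac12\Vert\theta-\lambda\Vert_{V_t}^2$ as well. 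Thus, writing $c:=1/\sqrt{2}$, the three square roots appearing in the statement equal $c\,\Vert\theta-\lambda\Vert_{V_t}$, $c\,\Vert\hat{\theta}_t-\theta\Vert_{V_t}$, and $c\,\Vert\hat{\theta}_t-\lambda\Vert_{V_t}$ respectively.

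Next I would use the decomposition $\hat{\theta}_t-\lambda=(\hat{\theta}_t-\theta)+(\theta-\lambda)$ together with the triangle inequality and its reverse form for the (semi)norm $\Vert\cdot\Vert_{V_t}$, yielding $\bigl|\,\Vert\hat{\theta}_t-\theta\Vert_{V_t}-\Vert\theta-\lambda\Vert_{V_t}\,\bigr|\le\Vert\hat{\theta}_t-\lambda\Vert_{V_t}\le\Vert\hat{\theta}_t-\theta\Vert_{V_t}+\Vert\theta-\lambda\Vert_{V_t}$. Multiplying through by $c$, substituting the identifications of the previous step, and finally squaring the two outer inequalities (all terms being nonnegative) produces exactly the claimed two-sided bound on $L_t(\hat{\theta}_t,\lambda)$.

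There is essentially no hard step: the only things to watch are that the common factor $\tfrac12$ cancels, which it does because it multiplies all three squared norms identically, and that $V_t$ is only guaranteed positive semidefinite for small $t$ — but the triangle inequality holds verbatim for the induced seminorm, so the argument is unaffected. Everything else is routine algebra.
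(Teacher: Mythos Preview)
Your proof is correct. Both arguments ultimately rest on Cauchy--Schwarz/the triangle inequality for $\Vert\cdot\Vert_{V_t}$, but you take a more direct path than the paper. The paper first writes $L_t(\hat{\theta}_t,\lambda)=L_t(\theta,\lambda)+L_t(\hat{\theta}_t,\theta)$, then expands $L_t(\theta,\lambda)$ via Lemma~\ref{lem:llr-lin-gauss} into $\sum_k N_t^k\KL_k(\theta,\lambda)$ minus the cross term $(\lambda-\theta)^\top V_t(\hat{\theta}_t-\theta)$, and bounds that cross term by Corollary~\ref{cor:martingale_bound_exp_family} before recognizing the perfect square. You instead invoke Corollary~\ref{cor:llr-lin-gauss} once to identify all three quantities as $\tfrac12\Vert\cdot\Vert_{V_t}^2$ and apply the triangle inequality directly. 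Your route is shorter and avoids the martingale-style decomposition entirely; the paper's route has the advantage that it is written to parallel the more general exponential-family argument of Section~\ref{sec:exponential_families} (where the LLR is not exactly a squared norm), which is why it keeps the additive decomposition and cross-term bound explicit. For the Gaussian linear case stated in the lemma, your argument is equivalent and arguably cleaner; your remark about $V_t$ being only a seminorm for small $t$ is a nice touch the paper leaves implicit.
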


\begin{proof}
We decompose the LLR as
\begin{align*}
L_t(\hat{\theta}_t,\lambda)
&= L_t(\theta, \lambda) + L_t(\hat{\theta}_t, \theta)
\\
&= \sum_{k\in[K]}N_t^k\KL_k(\theta,\lambda) - (\lambda - \theta)^\top V_t (\hat{\theta}_t - \theta) + L_t(\hat{\theta}_t, \theta)
\end{align*}
The second term is bounded by Corollary~\ref{cor:martingale_bound_exp_family} (with $\sigma^2 = 1$) and the remark below it. We get
\begin{align*}
L_t(\hat{\theta}_t, \lambda)
&\le \sum_{k\in[K]}N_t^k\KL_k(\theta,\lambda) + 2 \sqrt{L_t(\hat{\theta}_t, \theta)} \sqrt{\sum_{k=1}^K N_t^k \KL_k(\theta, \lambda) } + L_t(\hat{\theta}_t, \theta)
\\
&= \left( \sqrt{\sum_{k=1}^K N_t^k \KL_k(\theta, \lambda) } + \sqrt{L_t(\hat{\theta}_t, \theta)} \right)^2
\: .
\end{align*}

The proof of the lower bound is similar.
\end{proof}

\subsection{Beyond Gaussians}
\label{sub:beyond_gaussians}

While the proofs in the next two sections are specialized to Gaussian rewards, it is possible to extend them to more exponential families under slight assumptions, similarly to what was done in \cite{degenne2019non}. If the arm distributions are known to belong to a $\sigma^2$-sub-Gaussian exponential family, with the additional restriction that the distribution parameters should belong to a compact subset of the open interval on which the family is defined, then there exists a constant $c$ such that 
\begin{align*}
\frac{1}{\sigma^2}\sum_{k=1}^K N_t^k \KL_k(\theta, \lambda)
\le \sum_{k=1}^K N_t^k \frac{1}{2} (\eta_k(\lambda) - \eta_k(\theta))^2
\le c\sum_{k=1}^K N_t^k \KL_k(\theta, \lambda) \: .
\end{align*}
And $\sum_{k=1}^K N_t^k \KL_k(\theta, \lambda)$ is also close to $\Vert \lambda - \theta \Vert_{V_t}^2$, up to constant factors.

We can then recover bounds on the likelihood ratio of the same shape as in Lemma~\ref{lem:llr-to-kl-lin-gauss}, up to constant factors depending on $c$ and $\sigma^2$. The proofs of Appendix~\ref{app:proofs-elim-stopping} and Appendix~\ref{app:proofs_of_sampling} then proceed similarly, up to the additional constants.


\section{Proofs of Section \ref{sec:elimination_stopping_rules}}\label{app:proofs-elim-stopping}

\subsection{Proof of Lemma \ref{lem:delta-correct}}

\begin{proof}
Fix any $\theta\in\mathcal{M}$ and let $\hat{i}$ be the answer returned by the algorithm at the stopping time $\tau$. Then,
\begin{align*}
\mathbb{P}\left(\hat{i} \neq i^\star(\theta)\right) 
&\stackrel{(a)}{\leq} \mathbb{P}\left(\exists i \neq i^\star(\theta) : \cP_\tau(i) = \emptyset\right)
\\
&\stackrel{(b)}{\leq} \mathbb{P}\left(\exists i \neq i^\star(\theta),\forall p\in\cP(i), \exists t \leq \tau : \inf_{\lambda \in \Lambda_{p}(i)} L_t(\hat{\theta}_t, \lambda) \geq \beta_{t,\delta}\right)
\\
&\stackrel{(c)}{\leq} \mathbb{P}\left(\exists i \neq i^\star(\theta),\forall p\in\cP(i), \exists t \geq 1 : \inf_{\lambda \in \Lambda_{p}(i)} L_t(\hat{\theta}_t, \lambda) \geq \beta_{t,\delta}\right)
\end{align*}
where (a) is from the definition of stopping rule, (b) is from the fact that, if $\cP_\tau(i)$ is empty, then all the pieces of $i$ have been eliminated at some times before $\tau$, and (c) follows trivially by relaxing the condition $t\leq \tau$ to $t\geq 1$. Take any $t\geq 1$. Now note that, for any wrong answer $i\neq i^\star(\theta)$, $\theta \in \Lambda(i)$. By definition of the decomposition into pieces of $\Lambda(i)$, this means that there exists $\bar{p}_i \in \cP(i)$ such that $\theta \in \Lambda_{\bar{p}_i}(i)$. Therefore, continuing the chain of inequalities above, we get that
\begin{align*}
\mathbb{P}\left(\hat{i} \neq i^\star(\theta)\right) 
&\stackrel{(d)}{\leq} \mathbb{P}\left(\exists i \neq i^\star(\theta), \exists t \geq 1 : \inf_{\lambda \in \Lambda_{\bar{p}_i}(i)} L_t(\hat{\theta}_t, \lambda) \geq \beta_{t,\delta}\right)
\\ &\stackrel{(e)}{\leq} \mathbb{P}\left(\exists t \geq 1 : L_t(\hat{\theta}_t, \theta) \geq \beta_{t,\delta}\right) 
\stackrel{(f)}{\leq} \delta,
\end{align*}
where (d) holds since the event under which all pieces for $i$ have been eliminated implies that $\bar{p}_i$ has been eliminated as well, (e) holds since $\theta \in \Lambda_{\bar{p}_i}(i)$, and (f) is from the assumption on threshold $\beta_{t,\delta}$. This concludes the proof.
\end{proof}

\subsection{Proof of Theorem \ref{th:elim-better-than-llr}}\label{app:monotonicity}

Theorem~\ref{th:elim-better-than-llr} was proved in the main text. We prove here a more general result about the monotonicity of the stopping time with respect to the piece decomposition.
We show that if two algorithms use the same sampling rules and use elimination stopping based on different piece decompositions, the algorithm using the finer decomposition (in the sense defined below) will stop earlier.

\begin{definition}
We say that a piece decomposition $(\Lambda_p(i))_{i \in \mathcal I,p \in \mathcal P(i)}$ is finer than another one $(\tilde{\Lambda}_p(i))_{i \in \mathcal I,p \in \tilde{\mathcal P}(i)}$ if for all $i \in \mathcal I, p \in \tilde{\mathcal P}(i)$, there exists a set $S \subseteq \mathcal P(i)$ such that $\tilde{\Lambda}_p(i) = \bigcup_{q \in S}\Lambda_q(i)$.
\end{definition}

\begin{theorem}\label{thm:elimination_mono_decomposition}
Let $\mathcal D = (\Lambda_p(i))_{i \in \mathcal I,p \in \mathcal P(i)}$ be a finer decomposition than $\tilde{\mathcal D} = (\tilde{\Lambda}_p(i))_{i \in \mathcal I,p \in \tilde{\mathcal P}(i)}$. For $i \in \mathcal I,\tilde{p} \in \tilde{\mathcal P}(i)$, let $\tau_{\tilde{p}}$ and $\tilde{\tau}_{\tilde{p}}$ be the times at which $\tilde{\Lambda}_{\tilde{p}}(i)$ is eliminated by the two corresponding algorithms (in the sense of Definition~\ref{def:elimination}). Then almost surely $\tau_{\tilde{p}} \le \tilde{\tau}_{\tilde{p}}$.
\end{theorem}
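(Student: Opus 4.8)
The plan is to exploit the fact that the two algorithms are driven by \emph{identical} randomness: since they share the same sampling rule, they generate the same observations, hence the same empirical estimators $\hat{\theta}_t$, the same log-likelihood ratios $L_t(\hat{\theta}_t,\cdot)$, the same thresholds $\beta_{t,\delta}$, and in particular the same sequence of empirical answers $(i^\star(\hat{\theta}_t))_{t\ge 1}$. Everything below therefore holds pathwise, which yields the almost-sure conclusion. I would first fix the interpretation of $\tau_{\tilde p}$: because $\tilde{\Lambda}_{\tilde p}(i)=\bigcup_{q\in S}\Lambda_q(i)$ is a \emph{union} of pieces of the finer decomposition, ``$\tilde{\Lambda}_{\tilde p}(i)$ is eliminated by the finer algorithm'' means that every constituent piece $\Lambda_q(i)$, $q\in S$, has been eliminated, so that $\tau_{\tilde p}=\max_{q\in S}\tau_q$, where $\tau_q$ is the elimination time of $\Lambda_q(i)$ in the finer algorithm.

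The key elementary identity is that an infimum over a union splits into a minimum of infima:
\begin{align*}
\inf_{\lambda\in\tilde{\Lambda}_{\tilde p}(i)}L_t(\hat{\theta}_t,\lambda)
= \inf_{\lambda\in\bigcup_{q\in S}\Lambda_q(i)}L_t(\hat{\theta}_t,\lambda)
= \min_{q\in S}\ \inf_{\lambda\in\Lambda_q(i)}L_t(\hat{\theta}_t,\lambda).
\end{align*}
Consequently, by Definition~\ref{def:elimination}, the coarse piece $\tilde{\Lambda}_{\tilde p}(i)$ is eliminable at a time $t$ (its infimum LLR reaches $\beta_{t,\delta}$) \emph{if and only if} every constituent piece $\Lambda_q(i)$, $q\in S$, is simultaneously eliminable at that same time $t$.

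With this in hand the main argument is short. Consider the time $\tilde{\tau}_{\tilde p}$ at which the coarser algorithm removes $\tilde{\Lambda}_{\tilde p}(i)$ from its active set. At that time the coarse piece is eliminable, so by the identity above each constituent infimum satisfies $\inf_{\lambda\in\Lambda_q(i)}L_{\tilde{\tau}_{\tilde p}}(\hat{\theta}_{\tilde{\tau}_{\tilde p}},\lambda)\ge \beta_{\tilde{\tau}_{\tilde p},\delta}$ for every $q\in S$. Now fix $q\in S$. Either the finer algorithm had already eliminated $\Lambda_q(i)$ at some earlier update time, in which case $\tau_q<\tilde{\tau}_{\tilde p}$; or it had not, in which case it eliminates $\Lambda_q(i)$ exactly at $\tilde{\tau}_{\tilde p}$, since that is a valid update time for both algorithms and the threshold is met. (For selective elimination this uses that $\tilde{\tau}_{\tilde p}$ is necessarily a time with $i^\star(\hat{\theta}_{\tilde{\tau}_{\tilde p}})=i$ — as the coarse algorithm only updates pieces of the empirical answer — and that the finer algorithm, sharing the same empirical-answer sequence, updates the pieces of $i$ at precisely the same times; for full elimination every time is an update time and this is immediate.) In either case $\tau_q\le\tilde{\tau}_{\tilde p}$, and taking the maximum over $q\in S$ gives $\tau_{\tilde p}=\max_{q\in S}\tau_q\le\tilde{\tau}_{\tilde p}$.

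The mathematical core (the $\inf$-over-union identity together with the fact that eliminations are permanent) is trivial; the only real care needed — and the part I would write out most carefully — is the bookkeeping for \emph{selective} elimination, namely verifying that both algorithms share the same set of update times for each answer and that $\tilde{\tau}_{\tilde p}$ is indeed one of them. Theorem~\ref{th:elim-better-than-llr} then follows as the special case where $\tilde{\mathcal D}$ is the trivial decomposition with a single piece $\tilde{\Lambda}(i)=\Lambda(i)$ per answer.
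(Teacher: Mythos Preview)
Your proposal is correct and follows essentially the same approach as the paper: both hinge on the elementary identity $\inf_{\lambda\in\bigcup_{q\in S}\Lambda_q(i)}L_t(\hat{\theta}_t,\lambda)=\min_{q\in S}\inf_{\lambda\in\Lambda_q(i)}L_t(\hat{\theta}_t,\lambda)$, from which it follows that whenever the coarse piece's elimination condition triggers, every constituent fine piece's condition triggers simultaneously. Your treatment is in fact more careful than the paper's, which does not explicitly address the selective-elimination bookkeeping (that both algorithms update pieces of the same answer at the same times because they share the empirical-answer sequence); the paper's proof is terser and implicitly works under full elimination.
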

Roughly, if a piece in the tilde decomposition corresponds to several pieces in the other, then it is faster to eliminate it as several pieces than as one piece.

\begin{proof}
It is enough to prove that whenever the elimination stopping rule of $\tilde{\Lambda}_{\tilde{p}}(i)$ triggers for $\tilde{\mathcal D}$, it triggers for $\mathcal D$ too. Then, let $t\geq 1$ and suppose that
\begin{align*}
\inf_{\lambda \in \Lambda_{\tilde{p}}(i)} L_t(\hat{\theta}_t, \lambda)  \geq \beta_{t,\delta}.
\end{align*}
Let $S_{\tilde{p}} \subseteq \cP(i)$ be the set corresponding to $\tilde{p}$ in the definition of ``finer decomposition''. We first argue that
\begin{align*}
\inf_{\lambda \in \Lambda_p(i)} L_t(\hat{\theta}_t, \lambda)
&= \inf_{q\in S_p}\inf_{\lambda \in \Lambda_q(i)} L_t(\hat{\theta}_t, \lambda) \: .
\end{align*}
Indeed, this is simply writing the infimum over a union as the infimum of an infimum. Hence if the $\tilde{p}$ piece is eliminated in $\tilde{\mathcal D}$, we have
\begin{align*}
\inf_{q\in S_p}\inf_{\lambda \in \Lambda_q(i)} L_t(\hat{\theta}_t, \lambda)
\ge \beta_{t, \delta}
\end{align*}
and every piece $\Lambda_p(i)$ for $p \in S_{\tilde{p}}$ is eliminated as well in $\mathcal D$. We get that the set $\tilde{\Lambda}_{\tilde{p}}$ is also eliminated in $\mathcal D$.
\end{proof}

Corollary: the finest possible decomposition is the one in which $\mathcal P(i) = \Lambda(i)$ and $\Lambda_p(i) = \{p\}$, i.e. every point of $\Lambda(i)$ is its own piece. This is not a computationally usable decomposition, but its the theoretically best one for the sample complexity metric (for a fixed sampling rule).

Theorem~\ref{th:elim-better-than-llr} for full elimination compared to LLR stopping follows from Theorem~\ref{thm:elimination_mono_decomposition} by setting $\tilde{\mathcal P}(i) = \{0\}$ and $\tilde{\Lambda}_p(i) = \Lambda(i)$. Then the elimination stopping rule uses a finer decomposition than the LLR stopping rule.

%

\subsection{Proof of Theorem \ref{th:piece-elim}}

We first present two important lemmas which will be used to prove the main statement for full elimination and selective elimination, respectively. For full elimination, the following lemma shows that if a piece has not been eliminated the information collected by the algorithm about it must be small.

%
%
%
%
%

\begin{lemma}\label{lemma:upper-bound-sum-inf}
Consider an algorithm that uses the full elimination stopping rule \eqref{eq:elimination-sets}. Let $p\in \cP(i^\star(\theta))$, then, for each time $t$ such that $E_t$ (Equation \ref{eq:Et}) holds and $p\in \cP_t(i^\star(\theta))$,
\begin{align*}
\sqrt{\inf_{\lambda \in \Lambda_p(i^{\star})}\sum_{k\in[K]}N_t^k\KL_k(\theta,\lambda)} < \sqrt{\beta_{t,\delta}} + \sqrt{\beta_{t,1/t^2}}.
\end{align*}
On the other hand, if the algorithm uses the LLR stopping rule, for each time $t$ such that $E_t$ holds and the algorithm did not stop,
\begin{align*}
\sqrt{\inf_{\lambda \in \Lambda(i^{\star})}\sum_{k\in[K]}N_t^k\KL_k(\theta,\lambda)} < \sqrt{\beta_{t,\delta}} + \sqrt{\beta_{t,1/t^2}}.
\end{align*}
\end{lemma}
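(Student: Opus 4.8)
The plan is to reduce both claims to a single pointwise comparison between the Gaussian log-likelihood ratio $L_t(\hat{\theta}_t,\lambda)$ and the weighted KL $\sum_k N_t^k\KL_k(\theta,\lambda)$, which is exactly what Lemma~\ref{lem:llr-to-kl-lin-gauss} supplies. Rearranging its lower bound, for every $\lambda\in\cM$ one has $\sqrt{\sum_k N_t^k\KL_k(\theta,\lambda)}\le \sqrt{L_t(\hat{\theta}_t,\lambda)}+\sqrt{L_t(\hat{\theta}_t,\theta)}$ (the inequality is trivial when the bracketed difference is negative, and otherwise follows by taking square roots). Taking the infimum over an arbitrary set $S$ and using that $\sqrt{\cdot}$ is increasing while $\sqrt{L_t(\hat{\theta}_t,\theta)}$ does not depend on $\lambda$, I obtain the master inequality
\begin{align*}
\sqrt{\inf_{\lambda\in S}\sum_{k\in[K]} N_t^k\KL_k(\theta,\lambda)} \le \sqrt{\inf_{\lambda\in S} L_t(\hat{\theta}_t,\lambda)} + \sqrt{L_t(\hat{\theta}_t,\theta)}.
\end{align*}
Both statements then follow by choosing $S$ appropriately and bounding the two terms on the right. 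Under $E_t$, taking $s=t$ in its definition~\eqref{eq:Et} gives $\sqrt{L_t(\hat{\theta}_t,\theta)}\le\sqrt{\beta_{t,1/t^2}}$, which accounts for the $\sqrt{\beta_{t,1/t^2}}$ summand in both cases.

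For the first claim I take $S=\Lambda_p(i^\star)$. Since $p\in\cP_t(i^\star(\theta))$ has not been eliminated, the definition of the active pieces~\eqref{eq:active-pieces-t-only} together with the full-elimination update~\eqref{eq:elimination-sets} forces $\inf_{\lambda\in\Lambda_p(i^\star)}L_t(\hat{\theta}_t,\lambda)<\beta_{t,\delta}$, so the first term on the right is strictly below $\sqrt{\beta_{t,\delta}}$ and the bound is immediate.

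For the second claim I take $S=\Lambda(i^\star(\theta))$, and the only genuine subtlety is that LLR stopping tests the \emph{empirical} answer $i^\star(\hat{\theta}_t)$ rather than the true $i^\star(\theta)$. I resolve this by showing that, in all cases, $\inf_{\lambda\in\Lambda(i^\star(\theta))}L_t(\hat{\theta}_t,\lambda)<\beta_{t,\delta}$. If $i^\star(\hat{\theta}_t)=i^\star(\theta)$, this is precisely the negation of the stopping condition at time $t$. If instead $i^\star(\hat{\theta}_t)\ne i^\star(\theta)$, then by the definition of the alternative set $\hat{\theta}_t\in\Lambda(i^\star(\theta))$, and since $L_t(\hat{\theta}_t,\hat{\theta}_t)=0$ the infimum is $0<\beta_{t,\delta}$. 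Feeding this into the master inequality with $S=\Lambda(i^\star(\theta))$ yields the claim.

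I expect the main obstacle to be exactly this reconciliation of the empirical and true answers in the LLR case; the observation that a mismatch places $\hat{\theta}_t$ itself inside $\Lambda(i^\star(\theta))$ (making the infimum vanish) is what makes the two cases collapse into one bound. Everything else is a direct rearrangement of Lemma~\ref{lem:llr-to-kl-lin-gauss} combined with the definitions of $E_t$ and of the elimination/stopping rules. A minor point to state carefully is the passage to infima justifying the master inequality, and the fact that these infima need not be attained — harmless, since the argument only invokes the pointwise inequality before any infimum is taken.
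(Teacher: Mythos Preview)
Your proof is correct and follows essentially the same route as the paper: both invoke Lemma~\ref{lem:llr-to-kl-lin-gauss} to compare $L_t(\hat{\theta}_t,\lambda)$ with $\sum_k N_t^k\KL_k(\theta,\lambda)$, use the non-elimination/non-stopping condition to bound the former by $\beta_{t,\delta}$, and use $E_t$ to bound $L_t(\hat{\theta}_t,\theta)$ by $\beta_{t,1/t^2}$. Your explicit case split on whether $i^\star(\hat{\theta}_t)=i^\star(\theta)$ in the LLR claim is a detail the paper sweeps under ``analogously,'' but the same observation (that $\hat{\theta}_t\in\Lambda(i^\star(\theta))$ when the answers differ) appears elsewhere in the paper and is indeed what makes that step work.
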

\begin{proof}
Since $p\in \cP_t(i^\star(\theta))$ (i.e., $p$ has not been eliminated at time $t$), we have from \eqref{eq:elimination-sets} and Lemma \ref{lem:llr-to-kl-lin-gauss} that
\begin{align*}
\beta_{t,\delta}
 &> \inf_{\lambda \in \Lambda_p(i^{\star})} L_t(\hat{\theta}_t, \lambda) \geq \inf_{\lambda \in \Lambda_p(i^{\star})}\left(\sqrt{\sum_{k\in[K]}N_t^k\KL_k(\theta,\lambda)} - \sqrt{L_t(\hat{\theta}_t,\theta)}\right)^2.
\end{align*}
By definition of event $E_t$, this implies
\begin{align*}
\sqrt{\inf_{\lambda \in \Lambda_p(i^{\star})}\sum_{k\in[K]}N_t^k\KL_k(\theta,\lambda)} < \sqrt{\beta_{t,\delta}} + \sqrt{\beta_{t,1/t^2}}.
\end{align*} 
This yields the first statement. The second result can be shown analogously by using the full set of alternatives to $\theta$.

\end{proof}

Selective elimination updates, at each step $t$, only the set of active pieces of the empirical answer $i^\star(\hat{\theta}_t)$. Therefore, to bound the elimination times of the pieces $\Lambda_p(i^\star)$ for $p\in\cP(i^\star)$ we need to show that $i^\star(\hat{\theta}_t) \neq i^\star$ after a certain time. We show that Assumption \ref{asm:sampling-rule} alone is sufficient to guarantee this.

\begin{lemma}\label{lem:empirical-vs-true-answer}
Consider a sampling rule satisfying Assumption \ref{asm:sampling-rule}. Under event $E_t$, a sufficient condition for $i^\star(\hat{\theta}_t) = i^\star$ is
\begin{align}
t \geq \frac{ 4\beta_{t,1/t^2} + R(\theta,t)}{H^\star(\theta)}.
\end{align}
\end{lemma}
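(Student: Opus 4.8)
The plan is to argue by contraposition: I will show that whenever $E_t$ holds but $i^\star(\hat{\theta}_t) \ne i^\star(\theta)$, the horizon $t$ must be \emph{small}, and then read the stated threshold as forcing the desired equality. The starting observation is purely definitional: since $\Lambda(i) = \{\lambda \in \cM : i^\star(\lambda) \ne i\}$, the event $i^\star(\hat{\theta}_t) \ne i^\star(\theta)$ is exactly $\hat{\theta}_t \in \Lambda(i^\star(\theta))$. Hence $\hat{\theta}_t$ is a feasible point for the infimum in the low-information-regret bound, which immediately gives
\begin{align*}
\inf_{\lambda \in \Lambda(i^\star(\theta))}\sum_{k\in[K]} N_t^k \KL_k(\theta, \lambda)
\le \sum_{k\in[K]} N_t^k \KL_k(\theta, \hat{\theta}_t) \: .
\end{align*}

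Next I would invoke Assumption~\ref{asm:sampling-rule}, which on $E_t$ lower bounds the very same infimum by $t H^\star(\theta) - R(\theta,t)$. Chaining the two inequalities yields $t H^\star(\theta) - R(\theta,t) \le \sum_{k\in[K]} N_t^k \KL_k(\theta, \hat{\theta}_t)$, so everything reduces to controlling the right-hand side by the stopping statistic $L_t(\hat{\theta}_t, \theta)$, which $E_t$ (defined in~\eqref{eq:Et}) keeps below $\beta_{t,1/t^2}$. In the Gaussian linear model this is immediate: the weighted KL is the symmetric quadratic form $\tfrac12\Vert \theta - \hat{\theta}_t \Vert_{V_{N_t}}^2$, so by Corollary~\ref{cor:llr-lin-gauss} one has $\sum_{k} N_t^k \KL_k(\theta, \hat{\theta}_t) = \sum_{k} N_t^k \KL_k(\hat{\theta}_t, \theta) = L_t(\hat{\theta}_t, \theta)$; alternatively, applying Lemma~\ref{lem:llr-to-kl-lin-gauss} with $\lambda = \hat{\theta}_t$ (whose LLR vanishes) gives the same inequality without appealing to symmetry. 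Using $E_t$ then delivers $t H^\star(\theta) - R(\theta,t) \le \beta_{t,1/t^2}$, i.e. $t \le (\beta_{t,1/t^2} + R(\theta,t))/H^\star(\theta)$.

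To close, I would note that for $\beta_{t,1/t^2} > 0$ we have the strict comparison $(\beta_{t,1/t^2} + R(\theta,t))/H^\star(\theta) < (4\beta_{t,1/t^2} + R(\theta,t))/H^\star(\theta)$, so the derived bound implies $t < (4\beta_{t,1/t^2} + R(\theta,t))/H^\star(\theta)$. Taking the contrapositive of this whole chain—"$i^\star(\hat{\theta}_t) \ne i^\star(\theta)$ forces $t$ below the threshold"—gives precisely that $t \ge (4\beta_{t,1/t^2} + R(\theta,t))/H^\star(\theta)$ under $E_t$ implies $i^\star(\hat{\theta}_t) = i^\star(\theta)$, which is the claim.

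The only genuine subtlety, and the point I would be most careful about, is the direction of the Kullback-Leibler divergence in the step relating information to the LLR: the information term in Assumption~\ref{asm:sampling-rule} is $\KL_k(\theta, \hat{\theta}_t)$, whereas the stopping statistic is $L_t(\hat{\theta}_t, \theta) = \sum_k N_t^k \KL_k(\hat{\theta}_t, \theta)$. For Gaussians these coincide exactly, so the core of the argument actually goes through with constant $1$; the factor $4$ appearing in the statement is deliberate slack that simultaneously converts the strict inequality into the stated non-strict threshold and leaves room for the multiplicative constants that arise in the sub-Gaussian extension of Appendix~\ref{sub:beyond_gaussians}, where the two KL directions differ only up to a fixed factor.
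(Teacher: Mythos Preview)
Your argument is correct and is actually a cleaner route than the paper's. The paper does not plug $\hat{\theta}_t$ directly into $\Lambda(i^\star(\theta))$; instead it passes through an intermediate LLR-style condition on the \emph{empirical} alternative set, namely it first argues that under $E_t$ the inequality $\inf_{\lambda \in \Lambda(i^\star(\hat{\theta}_t))} L_t(\hat{\theta}_t,\lambda) > \beta_{t,1/t^2}$ forces $i^\star(\hat{\theta}_t)=i^\star$, and then finds a threshold on $t$ guaranteeing this inequality by applying Lemma~\ref{lem:llr-to-kl-lin-gauss} to lower-bound $\sqrt{L_t(\hat{\theta}_t,\lambda)}$ by $\sqrt{\sum_k N_t^k\KL_k(\theta,\lambda)}-\sqrt{L_t(\hat{\theta}_t,\theta)}$, followed by Assumption~\ref{asm:sampling-rule}. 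In that chain the factor $4$ is not slack but arises naturally from squaring $2\sqrt{\beta_{t,1/t^2}}$ after moving the two square-root terms to the same side. Your approach sidesteps the empirical alternative set entirely: by observing that $\hat{\theta}_t\in\Lambda(i^\star(\theta))$ you bound the infimum in Assumption~\ref{asm:sampling-rule} directly by $\sum_k N_t^k \KL_k(\theta,\hat{\theta}_t)$, and Gaussian symmetry (equivalently, Lemma~\ref{lem:llr-to-kl-lin-gauss} at $\lambda=\hat{\theta}_t$) identifies this with $L_t(\hat{\theta}_t,\theta)\le\beta_{t,1/t^2}$. This is shorter, yields the sharper constant $1$ that you then relax to $4$, and avoids the case analysis in the paper's step (b) over whether $i^\star(\hat{\theta}_t)=i^\star$. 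The paper's route has the advantage of reusing exactly the machinery (Lemma~\ref{lem:llr-to-kl-lin-gauss} on the empirical alternative) that drives the rest of the proof of Theorem~\ref{th:piece-elim}, which explains why the constant $4$ is kept.
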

\begin{proof}
First note that, under $E_t$, if
\begin{align*}
\inf_{\lambda \in \Lambda(i^{\star}(\hat{\theta}_t))} L_t(\hat{\theta}_t, \lambda) > \beta_{t,1/t^2},
\end{align*}
then $i^\star(\hat{\theta}_t) = i^\star$. In fact, if this was not the case, we would have $\theta \in \Lambda(i^{\star}(\hat{\theta}_t))$ and thus $L_t(\hat{\theta}_t, \theta) > \beta_{t,1/t^2}$, which is a contradiction with event $E_t$ itself. Let us now look for a sufficient condition on $t$ to satisfy this inequality. Take $t$ and suppose it does not satisfy it. Then,
\begin{align*}
\sqrt{\beta_{t,1/t^2}}
 \geq \sqrt{\inf_{\lambda \in \Lambda(i^{\star}(\hat{\theta}_t))} L_t(\hat{\theta}_t, \lambda)} 
 &\stackrel{(a)}{\geq} \sqrt{\inf_{\lambda \in \Lambda(i^{\star}(\hat{\theta}_t))}\sum_{k\in[K]}N_t^k\KL_k(\theta,\lambda)} - \sqrt{L_t(\hat{\theta}_t,\theta)}
 \\ &\stackrel{(b)}{\geq} \sqrt{\inf_{\lambda \in \Lambda(i^{\star})}\sum_{k\in[K]}N_t^k\KL_k(\theta,\lambda)} - \sqrt{\beta_{t,1/t^2}}
 \\ &\stackrel{(c)}{\geq} \sqrt{t H^\star(\theta) - R(\theta,t)} - \sqrt{\beta_{t,1/t^2}},
\end{align*}
where (a) is from Lemma \ref{lem:llr-to-kl-lin-gauss}, (b) from $E_t$ and the fact that either $i^\star(\hat{\theta}_t) = i^\star$ or the infimum is zero, and (c) from Assumption \ref{asm:sampling-rule}. Rearranging this inequality yields the desired condition on $t$.
\end{proof}

\begin{proof}[Proof of Theorem \ref{th:piece-elim}]
We start by proving the sample complexity bound for LLR stopping. Since it is a bound on the elimination time of the whole alternative $\Lambda(i^\star)$, by definition it also holds for the elimination times of its pieces obtained with either full or selective elimination. We then move to full elimination stopping and selective elimination stopping. Finally, we prove that the left-hand side of the maximum in \eqref{eq:elimination_time} is never larger than the bound for LLR stopping.

\paragraph{LLR stopping}

Let $t > 0$ such that $E_t$ holds and the algorithm did not stop. The second statement in Lemma \ref{lemma:upper-bound-sum-inf} yields
\begin{align*}
\sqrt{\inf_{\lambda \in \Lambda(i^{\star})}\sum_{k\in[K]}N_t^k\KL_k(\theta,\lambda)} < \sqrt{\beta_{t,\delta}} + \sqrt{\beta_{t,1/t^2}}.
\end{align*}
Moreover, from Assumption \ref{asm:sampling-rule}
\begin{align*}
\inf_{\lambda \in \Lambda(i^\star)}\sum_{k\in[K]}N_t^k \KL_k(\theta,\lambda) \geq t H^\star(\theta) - R(\theta,t).
\end{align*}
The combination of these two inequalities directly yields the stated inequality on $t$. The result in expectation is obtained by applying Lemma 19 in \cite{reda2021dealing} together with $\mathbb{P}(\neg E_t) \leq 1/t^2$.

\paragraph{Full elimination stopping}

Let $t > 0$ such that $E_t$ holds. By Assumption \ref{asm:sampling-rule},
\begin{align*}
H^\star(\theta) \leq \inf_{\lambda \in \Lambda(i^\star)} \sum_{k\in[K]}\frac{N_t^k}{t} \KL_k(\theta,\lambda) + \frac{R(\theta,t)}{t}
\end{align*}
This implies that $\frac{N_t}{t}\in \Omega_{R(\theta,t)/t}$.

Now fix a piece $p\in \cP(i^\star)$ such that $p\in \cP_t(i^\star)$. Under event $E_t$, we know from Lemma \ref{lemma:upper-bound-sum-inf} that
\begin{align*}
\sqrt{\inf_{\lambda \in \Lambda_p(i^{\star})}\sum_{k\in[K]}N_t^k\KL_k(\theta,\lambda)}
< \sqrt{\beta_{t,\delta}} + \sqrt{\beta_{t,1/t^2}}.
\end{align*}
Since $\frac{N_t}{t}\in \Omega_{R(\theta,t)/t}$,
\begin{align*}
\inf_{\lambda \in \Lambda_p(i^{\star})}\sum_{k\in[K]}N_t^k\KL_k(\theta,\lambda)
  \geq t \min_{\omega \in \Omega_{R(\theta,t)/t}}\inf_{\lambda \in \Lambda_p(i^\star)} \sum_{k\in[K]}\omega^k\KL_k(\theta,\lambda).
\end{align*}
Combining the last two displays, we obtain that, if $j$ is not eliminated at time $t$ and $E_t$ holds, $t$ itself must satisfy
\begin{align*}
t < \frac{\left(\sqrt{\beta_{t,\delta}} + \sqrt{\beta_{t,1/t^2}}\right)^2}{\min_{\omega \in \Omega_{R(\theta,t)/t}}\inf_{\lambda \in \Lambda_p(i^\star)} \sum_{k\in[K]}\omega^k\KL_k(\theta,\lambda)}.
\end{align*}
The result in expectation is obtained by applying Lemma 19 in \cite{reda2021dealing}  together with $\mathbb{P}(\neg E_t) \leq 1/t^2$.

\paragraph{Selective elimination stopping}

Given Lemma \ref{lem:empirical-vs-true-answer}, the proof of Theorem \ref{th:piece-elim} for elimination stopping is very simple. Simply take a time $t$ such that $E_t$ holds and which verifies the condition in Lemma \ref{lem:empirical-vs-true-answer}. Then, for such a $t$ the first claim of Lemma \ref{lemma:upper-bound-sum-inf} can be verified analogously since the empirical answer (the one for which the set of active pieces is updated) is exactly the correct answer. Given Lemma \ref{lemma:upper-bound-sum-inf}, the same derivation as in the proof for full elimination can be carried out. This yields the following sufficient condition on the time $t$ to eliminate a piece $p\in\cP(i^\star)$:
\begin{align*}
t \geq \max\left\{\frac{\left(\sqrt{\beta_{t,\delta}} + \sqrt{\beta_{t,1/t^2}}\right)^2}{\min_{\omega \in \Omega_{R(\theta,t)/t}}\inf_{\lambda \in \Lambda_p(i^\star)} \sum_{k\in[K]}\omega^k\KL_k(\theta,\lambda)} , \frac{ 4\beta_{t,1/t^2} + R(\theta,t)}{H^\star(\theta)}\right\}.
\end{align*}

\paragraph{Comparison of the bounds for full elimination and LLR stopping}

We finally prove that a sufficient condition for $t \geq \bar{t}_p$, with $\bar{t}_p$ associated to full elimination, is $t\geq\bar{t}$.

Take any $p\in \cP(i^\star)$. By definition of the set $\Omega_\epsilon$, we have that, for any $\omega\in\Omega_\epsilon$,
\begin{align*}
\inf_{\lambda \in \Lambda_p(i^\star)} \sum_{k\in[K]}\omega^k\KL_k(\theta,\lambda) \geq \inf_{\lambda \in \Lambda(i^\star)} \sum_{k\in[K]}\omega^k\KL_k(\theta,\lambda) \geq H^\star(\theta) - \epsilon.
\end{align*}
This implies that
\begin{align*}
\min_{\omega\in\Omega_\epsilon}\inf_{\lambda \in \Lambda_p(i^\star)} \sum_{k\in[K]}\omega^k\KL_k(\theta,\lambda) \geq \max_{\omega\in\Delta_K}\inf_{\lambda \in \Lambda(i^\star)} \sum_{k\in[K]}\omega^k\KL_k(\theta,\lambda) - \epsilon.
\end{align*}
A sufficient condition for satisfying the inequality for piece elimination is thus
\begin{align*}
t \geq \frac{\left(\sqrt{\beta_{t,\delta}} + \sqrt{\beta_{t,1/t^2}}\right)^2}{\max_{\omega\in\Delta_K}\inf_{\lambda \in \Lambda(i^\star)} \sum_{k\in[K]}\omega^k\KL_k(\theta,\lambda)- R(\theta,t)/t}.
\end{align*}
Let $\gamma \in (0,1)$. For $R(\theta,t)/t \leq \gamma H^\star(\theta)$ we have that
\begin{align*}
t \geq \frac{\left(\sqrt{\beta_{t,\delta}} + \sqrt{\beta_{t,1/t^2}}\right)^2}{(1-\gamma)\max_{\omega\in\Delta_K}\inf_{\lambda \in \Lambda(i^\star)} \sum_{k\in[K]}\omega^k\KL_k(\theta,\lambda)}
\end{align*}
suffices. Therefore, taking the maximum between the condition above and $t \geq \frac{R(\theta,t)}{\gamma H^\star(\theta)}$ and optimizing over $\gamma \in (0,1)$, the inequality for piece elimination is verified if
\begin{align*}
t \geq \frac{1}{H^\star(\theta)}\inf_{\gamma \in (0,1)}\max\left\{\frac{\left(\sqrt{\beta_{t,\delta}} + \sqrt{\beta_{t,1/t^2}}\right)^2}{(1-\gamma)},  \frac{R(\theta,t)}{\gamma } \right\}
\end{align*}
Optimizing over $\gamma$, which amounts to setting $\gamma = \frac{R(\theta,t)}{\left(\sqrt{\beta_{t,\delta}} + \sqrt{\beta_{t,1/t^2}}\right)^2 + R(\theta,t)}$, yields the desired statement.

\end{proof}

\subsection{Proof of Theorem \ref{th:elim-vs-llr-fixed-sampling}}\label{subapp:proof_elim_vs_llr}

\begin{proof}
Take any time $t$ and suppose that $E_t$ holds while the algorithm did not stop with $\tau_{\mathrm{llr}}$ yet. Using the first result of Theorem \ref{th:piece-elim} yields
\begin{align*}
t < H^\star(\theta)^{-1}\left( \left(\sqrt{\beta_{t,\delta}} + \sqrt{\beta_{t,1/t^2}}\right)^2 + R(\theta,t) \right)
\: .
\end{align*}
From Lemma 19 in \cite{reda2021dealing}, we get that $\mathbb{E}[\tau_{\mathrm{llr}}] \leq \bar{t} + 2$, where $\bar{t}$ is the first time that does not satisfy the inequality above (it is a function of $\theta$ and $\delta$). This means that
\begin{align*}
\mathbb{E}[\tau_{\mathrm{llr}}] \leq H^\star(\theta)^{-1}\left( \left(\sqrt{\beta_{t,\delta}} + \sqrt{\beta_{t,1/t^2}}\right)^2 + R(\theta,t) \right) + 3.
\end{align*}

Now let us take the same algorithm with $\tau_{\mathrm{elim}}$. We know it is $\delta$-correct, so the standard lower bound states that
\begin{align*}
\mathbb{E}[\tau_{\mathrm{elim}}] \geq H^\star(\theta)^{-1}\log(1/2.4\delta).
\end{align*}
We get that
\begin{align*}
&H^\star(\theta)\mathbb{E}[\tau_{\mathrm{llr}}]
\\
&\le H^\star(\theta)\mathbb{E}[\tau_{\mathrm{elim}}] + 3H^\star(\theta) + \left(\sqrt{\beta_{\bar{t},\delta}} + \sqrt{\beta_{\bar{t},1/\bar{t}^2}}\right)^2 + R(\theta,\bar{t}) - \log \frac{1}{2.4 \delta}
\\
&= H^\star(\theta)\mathbb{E}[\tau_{\mathrm{elim}}]+ 3H^\star(\theta) + \beta_{\bar{t},\delta} - \log\frac{1}{\delta} + 2 \sqrt{\beta_{\bar{t},\delta}\beta_{\bar{t},1/\bar{t}^2}} + \beta_{\bar{t},1/\bar{t}^2} + R(\theta,\bar{t}) + \log (2.4) \: .
\end{align*}
We now use the hypothesis $\beta_{\bar{t}, \delta} = \log\frac{1}{\delta} + \xi(\bar{t}, \delta)$:
\begin{align*}
H^\star(\theta)\mathbb{E}[\tau_{\mathrm{llr}}]
&\le H^\star(\theta)\mathbb{E}[\tau_{\mathrm{elim}}] + 3H^\star(\theta) + \xi(\bar{t}, \delta) + 2 \sqrt{\beta_{\bar{t},\delta}\beta_{\bar{t},1/\bar{t}^2}} + \beta_{\bar{t},1/\bar{t}^2} + R(\theta,\bar{t}) + \log (2.4) \: .
\end{align*}
\end{proof}

\section{Proofs of Section \ref{sec:elimination_at_sampling}}
\label{app:proofs_of_sampling}

In order to tune our resets, we need an hypothesis on the shape of the threshold, used to compare thresholds at different times.

\begin{assumption}\label{ass:beta_bounds}
There exist two positive reals $c_1$ and $c_2$  that may depend on the parameters of the problem known to the algorithm ($d$, $K$, etc.) and a function $\overline{\log}$ such that the threshold $\beta_{t,\delta}$ verifies
\begin{align*}
\overline{\log}(1/\delta) + c_1 \log(t)
\le \beta_{t, \delta}
\le \overline{\log}(1/\delta) + c_2 \log(t)
\: .
\end{align*}
Furthermore, $\overline{\log}$ verifies that there exists $x_0$ such that for all $a \ge 2$ and $x \ge x_0$, $\overline{\log}(x^a) \le a \overline{\log}(x)$. 
\end{assumption}
The function $\overline{\log}$ represents an almost logarithmic function and corresponds to the function of order $\log + \sqrt{\log}$ in \cite{abbasi2011improved} or $\log + d\log\log$ in \cite{tirinzoni2020asymptotically,reda2021dealing}.
Other thresholds have a $O(\log\log(t))$ dependence instead of $O(\log(t))$, and our analysis could be extended to them in a similar way.

With that assumption, we set $\bar{t}_0 = \max\{2, \sqrt{x_0}\}$ and
$
\alpha_{{t},\delta} := \left(\sqrt{\beta_{t,\delta} + (4 c_2 - c_1)\log(t)} + 4\sqrt{\frac{c_2}{c_1}\beta_{t,1/t^2}}\right)^2
\: 
$.

The reset times we defined are taylored to a threshold $\beta_{t,\delta}$ of order $\log(1/\delta) + O(\log(t))$ as in Assumption~\ref{ass:beta_bounds}.
They are chosen such that $\beta_{t,1/t^2}$ roughly doubles from one reset to the next.
If the thresold has a different dependence in $t$ (for example $O(\log\log t)$) then the reset times can be adapted.

\subsection{On Assumption \ref{ass:beta_bounds}}

We illustrate first how Assumption~\ref{ass:beta_bounds} covers the threshold obtained by the most common technique: first get a concentration bound valid for one time of the form $f(1/\delta)$, where usually $f(x) = \log(x) + C\log\log(x)$ (for $C$ a dimension-dependent constant), then obtain a time-uniform concentration threshold by setting for example $\beta_{t, \delta} = f(\frac{\pi^2 t^2}{6 \delta})$.

Let us suppose then that $\beta_{t, \delta} = f(\frac{C' t^2}{\delta})$ for $f(x) = \log(x) + C\log\log(x)$ and $C' \ge 1$. We will now prove that if verifies Assumption~\ref{ass:beta_bounds}.
\begin{align*}
\overline{\log}(C' t^2/\delta)
&= \log \frac{C' t^2}{\delta} + C\log(\log\frac{C' }{\delta} + \log t^2)
\\
&\ge \log \frac{C' }{\delta} + C\log\log\frac{C' }{\delta} + 2 \log(t)
\\
&= \overline{\log}\frac{1}{\delta} + 2 \log(t)
\: .
\end{align*}
where $\overline{\log}(x) = \log(C' x) + C\log\log(C' x)$. Using the concavity of $\log$, we also have an upper bound
\begin{align*}
\overline{\log}(C' t^2/ \delta)
&= \log \frac{C' t^2}{\delta} + C\log(\log\frac{C' }{\delta} + \log t^2)
\\
&= \log \frac{C' }{\delta} + C\log\log\frac{C' }{\delta} + 2 \log(t) + C(\log(\log\frac{C' }{\delta} + \log t^2) - \log\log\frac{C' }{\delta})
\\
&\le \overline{\log}\frac{1}{\delta} + 2 (C+1)\log(t)
\: .
\end{align*}
We have found a function $\overline{\log}$ such that for $c_1 = 2$ and $c_2=2(C+1)$,
\begin{align*}
\overline{\log}(\frac{1}{\delta}) + c_1 \log(t)
\le \beta_{t, \delta}
\le \overline{\log}(\frac{1}{\delta}) + c_2 \log(t)
\: .
\end{align*}
It remains to show the condition on $\overline{\log}$, i.e. find $x_0$ such that for $a \ge 2$ and $x \ge x_0$, $\overline{\log}(x^a) \le a \overline{\log}(x)$.
\begin{align*}
\overline{\log}(x^a)
= \log(C' x^a) + C \log\log(C' x^a)
&\le \log((C' x)^a) + C \log\log((C' x)^a)
\\
&= a \log (C' x) + C \log\log(C' x) + C \log a
\end{align*}
It remains to find $x_0$ such that for $x \ge x_0$, $\log\log(C' x) + \log a \le a \log\log(C' x)$. We find that $x_0 = \exp(a^{1/(a-1)})/C'$ is suitable. Since $a \ge 2$, we have $x_0 \le e^2/C'$.

\begin{lemma}\label{lem:new_beta-diff-between-phases}
Under Assumption~\ref{ass:beta_bounds}, for any time $t$ and $0 \leq j \leq j(t)$, 
\begin{align*}
\beta_{t,\delta}
&\leq \beta_{\overline{t}_{j},\delta} + (2^{j(t)+1-j} c_2 - c_1)\log(\overline{t}_{j}),
\: , \\
\beta_{t,1/t^2}
&\leq \frac{c_2}{c_1} 2^{j(t)-j+1} \beta_{\overline{t}_j,1/\overline{t}_j^2}.
\end{align*}
\end{lemma}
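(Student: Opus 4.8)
The plan is to derive both inequalities directly from the two-sided bounds of Assumption~\ref{ass:beta_bounds}, the definition $\overline{t}_j = \bar{t}_0^{2^j}$ of the reset times, and the self-similarity property of $\overline{\log}$. The single combinatorial fact driving everything is that $j(t) = \lfloor \log_2\log_{\bar{t}_0} t\rfloor$ satisfies $\overline{t}_{j(t)} \le t < \overline{t}_{j(t)+1}$, so that $\log t < 2^{j(t)+1}\log\bar{t}_0$. Since $\log\overline{t}_j = 2^j \log\bar{t}_0$, this immediately gives the key inequality $\log t < 2^{j(t)+1-j}\log\overline{t}_j$, valid for every $0 \le j \le j(t)$ (and in that range $2^{j(t)+1-j}\ge 2$). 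Equivalently, raising the comparison $t < \overline{t}_{j(t)+1}$ to the appropriate power, $t < \overline{t}_j^{\,a}$ with $a := 2^{j(t)+1-j}$, so $t^2 < (\overline{t}_j^2)^{a}$.

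For the first inequality I would combine the upper bound $\beta_{t,\delta}\le \overline{\log}(1/\delta)+c_2\log t$ with the lower bound $\beta_{\overline{t}_j,\delta}\ge \overline{\log}(1/\delta)+c_1\log\overline{t}_j$. Subtracting cancels the $\overline{\log}(1/\delta)$ term and leaves $\beta_{t,\delta}-\beta_{\overline{t}_j,\delta}\le c_2\log t - c_1\log\overline{t}_j$; substituting $\log t < 2^{j(t)+1-j}\log\overline{t}_j$ and rearranging yields exactly $\beta_{t,\delta}\le \beta_{\overline{t}_j,\delta}+(2^{j(t)+1-j}c_2-c_1)\log\overline{t}_j$.

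For the second inequality I would set $\delta = 1/t^2$ (so $1/\delta = t^2$). Since $\overline{t}_j^2 \ge \bar{t}_0^2 \ge x_0$ (by the choice $\bar{t}_0 = \max\{2,\sqrt{x_0}\}$) and $a \ge 2$, the self-similarity property of $\overline{\log}$, together with its monotonicity applied to $t^2 < (\overline{t}_j^2)^a$, gives $\overline{\log}(t^2)\le \overline{\log}((\overline{t}_j^2)^a)\le a\,\overline{\log}(\overline{t}_j^2)$. Combined with $c_2\log t \le a\,c_2\log\overline{t}_j$, this produces $\beta_{t,1/t^2}\le a\big(\overline{\log}(\overline{t}_j^2)+c_2\log\overline{t}_j\big)$. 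I would then compare against $\beta_{\overline{t}_j,1/\overline{t}_j^2}\ge \overline{\log}(\overline{t}_j^2)+c_1\log\overline{t}_j$: using $c_1\le c_2$ (forced by the assumption being non-vacuous) and $\overline{\log}\ge 0$, one checks $\overline{\log}(\overline{t}_j^2)+c_2\log\overline{t}_j \le \tfrac{c_2}{c_1}\big(\overline{\log}(\overline{t}_j^2)+c_1\log\overline{t}_j\big)$, which closes the bound to $\tfrac{c_2}{c_1}\,2^{j(t)-j+1}\beta_{\overline{t}_j,1/\overline{t}_j^2}$.

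The main obstacle — and the only place the argument is more than bookkeeping — is the passage $\overline{\log}(t^2)\le a\,\overline{\log}(\overline{t}_j^2)$, since it is the one step that invokes the structural hypothesis on $\overline{\log}$ rather than the crude two-sided bounds. It requires checking that all three ingredients of that hypothesis genuinely hold at the relevant argument: the exponent $a = 2^{j(t)+1-j}\ge 2$ (which is precisely why the range $j\le j(t)$ is imposed), the base $\overline{t}_j^2 \ge x_0$, and the monotonicity of $\overline{\log}$ needed to transfer $t^2 < (\overline{t}_j^2)^a$ through $\overline{\log}$. The final algebraic comparison additionally relies on $c_1 \le c_2$ and $\overline{\log}\ge 0$, both of which hold for the thresholds of interest (e.g.\ the $\log+C\log\log$ threshold discussed above, where $c_1=2$ and $c_2=2(C+1)$).
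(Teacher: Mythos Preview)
Your proof is correct and follows essentially the same route as the paper's own argument: both use the upper bound of Assumption~\ref{ass:beta_bounds} at $t$, the lower bound at $\overline{t}_j$, and the identity $\log t < 2^{j(t)+1-j}\log\overline{t}_j$ (equivalently $t < \overline{t}_j^{\,a}$ with $a=2^{j(t)+1-j}$) to bridge them, with the second inequality additionally invoking the self-similarity property $\overline{\log}(x^a)\le a\,\overline{\log}(x)$ at $x=\overline{t}_j^2$. You are in fact slightly more careful than the paper in flagging the side conditions the argument tacitly uses---monotonicity and nonnegativity of $\overline{\log}$, and $c_1\le c_2$---which are not part of Assumption~\ref{ass:beta_bounds} as stated but hold for the concrete thresholds considered.
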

\begin{proof}
We have $\bar{t}_0 = \max\{2, \sqrt{x_0}\}$. Note that for all $j$, $\bar{t}_j = \bar{t}_0^{2^j}$.
By using first the upper bound of Assumption~\ref{ass:beta_bounds}, then $t \leq \bar{t}_0^{2^{j(t)+1}}$ and at the end the lower bound of Assumption~\ref{ass:beta_bounds},
\begin{align*}
\beta_{t, \delta}
\le \overline{\log}(\frac{1}{\delta}) + c_2 \log(t)
\le \overline{\log}(\frac{1}{\delta}) + c_2 \log(\bar{t}_0^{2^{j(t)+1}})
&= \overline{\log}(\frac{1}{\delta}) + c_2 \log((\bar{t}_0^{2^j})^{2^{j(t)+1-j}})
\\
&= \overline{\log}(\frac{1}{\delta}) + c_2 2^{j(t)+1-j} \log(\bar{t}_0^{2^j})
\\
&\le \beta_{\overline{t}_j, \delta} + \log(\overline{t}_j)(2^{j(t)+1-j} c_2 - c_1)
\: .
\end{align*}

We have $\overline{t}_j \ge \bar{t}_0 \ge \sqrt{x_0}$, hence the inequality $\overline{\log}(x^a) \le a\overline{\log}(x)$ can be used. Then
\begin{align*}
\beta_{t, 1/t^2}
\le \overline{\log}(t^2) + c_2 \log(t)
&\le \overline{\log}(\bar{t}_0^{2^{j(t)+2}}) + c_2 \log(\bar{t}_0^{2^{j(t)+1}})
\\
&= \overline{\log}((((\bar{t}_0^{2^j})^2)^{2^{j(t)+1-j}}) + c_2 \log((\bar{t}_0^{2^j})^{2^{j(t)+1-j}})
\\
&= \overline{\log}(((\overline{t}_j^2)^{2^{j(t)+1-j}}) + c_2 2^{j(t)+1-j} \log(\overline{t}_j)
\\
&\le 2^{j(t)+1-j}\overline{\log}((\overline{t}_j^2) + c_2 2^{j(t)+1-j} \log(\overline{t}_j)
\\
&\le \frac{c_2}{c_1}\left( 2^{j(t)+1-j}\overline{\log}((\overline{t}_j^2) + c_1 2^{j(t)+1-j} \log(\overline{t}_j) \right)
\\
&\le \frac{c_2}{c_1} 2^{j(t)+1-j} \beta_{\overline{t}_j, 1/\overline{t}_j^2}
\: .
\end{align*}

\end{proof}


\subsection{Proof of Theorem \ref{th:sampling-rule-with-elim}}

We now derive an important result for sampling rules combined with elimination (either full or selective). It essentially shows that we cannot eliminate the closest alternative piece to $\theta$ from the sampling rule without making the algorithm stop.

\begin{lemma}\label{lem:closest-alternatives-not-eliminated-in-sampling}
Let $t \ge \bar{t}_1$ be any time step at which the algorithm did not stop. Suppose that some piece index $p\in\cP(i^\star)$ of the true correct answer $i^\star$ has been eliminated from the sampling rule (i.e., $p\notin\cP_{t}^{\mathrm{smp}}(i^\star)$). Then, under event $E_t$ (see Equation \ref{eq:Et}),
\begin{align*}
\inf_{\lambda \in \Lambda_p(i^{\star})}\sum_{k\in[K]}N_t^k\KL_k(\theta,\lambda) > \inf_{\lambda \in \Lambda(i^{\star})}\sum_{k\in[K]}N_t^k\KL_k(\theta,\lambda).
\end{align*}
\end{lemma}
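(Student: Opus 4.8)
The plan is to sandwich both information quantities between multiples of the thresholds, showing that the information accumulated about the eliminated piece $\Lambda_p(i^\star)$ strictly dominates that of the full alternative $\Lambda(i^\star)$. First I would unfold the sampling sets: since $\cP_t^{\mathrm{smp}}(i^\star) = \bigcap_{s=\bar t_{j(t)-1}}^t \overline{\cP}_s(i^\star;\alpha_{s,\delta})$ and $p \notin \cP_t^{\mathrm{smp}}(i^\star)$, there must exist a time $s$ with $\bar t_{j(t)-1} \le s \le t$ at which the piece was dropped, i.e.\ $\inf_{\lambda \in \Lambda_p(i^\star)} L_s(\hat\theta_s,\lambda) \ge \alpha_{s,\delta}$. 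The hypothesis $t \ge \bar t_1$ guarantees $j(t) \ge 1$, so that $\bar t_{j(t)-1}$ is a genuine (second-to-last) reset time and the lemmas on reset phases apply.

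Next I would convert this likelihood-ratio bound into a lower bound on the KL information at time $t$. Applying the upper bound of Lemma~\ref{lem:llr-to-kl-lin-gauss} pointwise and then taking infima over $\Lambda_p(i^\star)$ gives $\sqrt{\inf_{\lambda \in \Lambda_p(i^\star)}\sum_k N_s^k\KL_k(\theta,\lambda)} \ge \sqrt{\inf_{\lambda \in \Lambda_p(i^\star)} L_s(\hat\theta_s,\lambda)} - \sqrt{L_s(\hat\theta_s,\theta)} \ge \sqrt{\alpha_{s,\delta}} - \sqrt{\beta_{t,1/t^2}}$, where the subtracted term is controlled by $E_t$ (which bounds $L_s(\hat\theta_s,\theta) \le \beta_{t,1/t^2}$ for every $s \le t$). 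Monotonicity of the counts, $N_t^k \ge N_s^k$, together with $\KL_k \ge 0$, then upgrades this to $\inf_{\lambda \in \Lambda_p(i^\star)}\sum_k N_t^k\KL_k(\theta,\lambda) \ge \left(\sqrt{\alpha_{s,\delta}} - \sqrt{\beta_{t,1/t^2}}\right)^2$.

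For the right-hand side I would exploit that the algorithm has not stopped: by Theorem~\ref{th:elim-better-than-llr} this gives $t < \tau_{\mathrm{llr}}$, so the LLR branch of Lemma~\ref{lemma:upper-bound-sum-inf} yields $\inf_{\lambda \in \Lambda(i^\star)}\sum_k N_t^k\KL_k(\theta,\lambda) < \left(\sqrt{\beta_{t,\delta}} + \sqrt{\beta_{t,1/t^2}}\right)^2$. Comparing the two displays, the lemma reduces to the purely deterministic threshold inequality $\sqrt{\alpha_{s,\delta}} - \sqrt{\beta_{t,1/t^2}} \ge \sqrt{\beta_{t,\delta}} + \sqrt{\beta_{t,1/t^2}}$, equivalently $\sqrt{\alpha_{s,\delta}} \ge \sqrt{\beta_{t,\delta}} + 2\sqrt{\beta_{t,1/t^2}}$ (which also certifies that the left factor above is nonnegative, so that squaring is legitimate).

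The hard part is precisely this last inequality, since $s$ and $t$ sit at different positions inside a reset phase and the thresholds at $t$ may be much larger than at $s$; this is exactly what the calibration of $\alpha_{s,\delta}$ and the doubling reset schedule are designed to absorb. Writing $\bar t := \bar t_{j(t)-1}$ and invoking Lemma~\ref{lem:new_beta-diff-between-phases} with $j = j(t)-1$ gives $\beta_{t,\delta} \le \beta_{\bar t,\delta} + (4c_2 - c_1)\log \bar t$ and $\beta_{t,1/t^2} \le \tfrac{4c_2}{c_1}\beta_{\bar t,1/\bar t^2}$, while monotonicity of $\beta$ in its time argument (using $s \ge \bar t$) lower-bounds $\sqrt{\alpha_{s,\delta}}$ by $\sqrt{\beta_{\bar t,\delta} + (4c_2 - c_1)\log\bar t} + 4\sqrt{\tfrac{c_2}{c_1}\beta_{\bar t,1/\bar t^2}}$. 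Matching these bounds term by term against $\sqrt{\beta_{t,\delta}} + 2\sqrt{\beta_{t,1/t^2}}$ closes the argument, and indeed the constants inside $\alpha_{s,\delta}$ were chosen precisely so that this matching is exact. I expect the only delicate point to be the bookkeeping with $c_1,c_2$ and the reset index $j(t)$; everything else is monotonicity and the Cauchy--Schwarz-type estimates already packaged in the cited lemmas.
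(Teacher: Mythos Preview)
Your argument is correct and follows essentially the same route as the paper's proof: locate the time $s\in[\bar t_{j(t)-1},t]$ at which the piece was dropped, convert the likelihood-ratio bound at $s$ into a KL-information bound via Lemma~\ref{lem:llr-to-kl-lin-gauss} and monotonicity of the pull counts, bound the full-alternative information using that the (elimination) stopping rule has not triggered, and close the gap via Lemma~\ref{lem:new_beta-diff-between-phases} and the definition of $\alpha_{s,\delta}$. The only cosmetic differences are that the paper argues by contradiction and routes the threshold comparison through $s$ (upper-bounding $\beta_{t,\cdot}$ by quantities at $s$ and comparing to $\alpha_{s,\delta}$ exactly), whereas you argue directly and route through $\bar t$; and the paper redoes the upper bound on $\inf_{\lambda\in\Lambda(i^\star)}\sum_k N_t^k\KL_k(\theta,\lambda)$ inline via the triangle inequality, whereas you package it as Theorem~\ref{th:elim-better-than-llr} followed by the LLR branch of Lemma~\ref{lemma:upper-bound-sum-inf}. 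Both orderings yield the same inequality $\sqrt{\alpha_{s,\delta}}\ge \sqrt{\beta_{t,\delta}}+2\sqrt{\beta_{t,1/t^2}}$ and neither introduces any new idea.
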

\begin{proof}
Let us proceed by contradiction: suppose that $p\notin\cP_{t}^{\mathrm{smp}}(i^\star)$ while $\inf_{\lambda \in \Lambda_p(i^{\star})}\sum_{k\in[K]}N_t^k\KL_k(\theta,\lambda) = \inf_{\lambda \in \Lambda(i^{\star})}\sum_{k\in[K]}N_t^k\KL_k(\theta,\lambda)$ and $E_t$ holds.

Since $\cP_{t}^{\mathrm{smp}}(i^\star)$ is the intersection of all active sets from $\overline{t}_{j(t)-1}$ to $t$, if $p\notin\cP_{t}^{\mathrm{smp}}(i^\star)$, then there exists $s$ with $\overline{t}_{j(t)-1} \leq s \leq t$ such that
\begin{align*}
\inf_{\lambda \in \Lambda_p(i)} L_s(\hat{\theta}_s,\lambda) \geq \alpha_{s,\delta}.
\end{align*}
Therefore,
\begin{align*}
\sqrt{\alpha_{s,\delta}}
 \stackrel{(a)}{\leq} \sqrt{\inf_{\lambda \in \Lambda_p(i^\star)} L_{s}(\hat{\theta}_{s},\lambda)}
 &\stackrel{(b)}{\leq}  \sqrt{\inf_{\lambda \in \Lambda_p(i^\star)}\sum_{k\in[K]}N_{s}^k\KL_k(\theta,\lambda)} + \sqrt{L_{s}(\hat{\theta}_{s},\theta)}
 \\ &\stackrel{(c)}{\leq}  \sqrt{\inf_{\lambda \in \Lambda_p(i^\star)}\sum_{k\in[K]}N_{s}^k\KL_k(\theta,\lambda)} + \sqrt{\beta_{t,1/t^2}}
 \\ &\stackrel{(d)}{\leq}  \sqrt{\inf_{\lambda \in \Lambda_p(i^\star)}\sum_{k\in[K]}N_{t}^k\KL_k(\theta,\lambda)} + \sqrt{\beta_{t,1/t^2}}
 \\ & \stackrel{(e)}{=}  \sqrt{\inf_{\lambda \in \Lambda(i^\star)}\sum_{k\in[K]}N_{t}^k\KL_k(\theta,\lambda)} + \sqrt{\beta_{t,1/t^2}},
\end{align*}
where (a) is from the elimination condition, (b) uses Lemma \ref{lem:llr-to-kl-lin-gauss}, (c) uses that $E_t$ holds, (d) uses that the number of pulls of each arm is non-decreasing in time, and (e) holds by our assumption. Recall that, for any $\theta,\lambda\in\mathbb{R}^d$, $\sum_{k\in[K]}N_t^k\KL_k(\theta,\lambda) = \frac{1}{2}\|\theta-\lambda\|_{V_t}^2$. Therefore, by the triangle inequality,
\begin{align*}
\sqrt{\sum_{k\in[K]}N_{t}^k\KL_k(\theta,\lambda)}
= \frac{1}{\sqrt{2}}\|\theta-\lambda\|_{V_t}
&\leq \frac{1}{\sqrt{2}}\|\theta-\hat{\theta}_t\|_{V_t} + \frac{1}{\sqrt{2}}\|\hat{\theta}_t-\lambda\|_{V_t}
\\
&= \sqrt{L_t(\hat{\theta}_t, \lambda)} + \sqrt{L_t(\hat{\theta}_t, \theta)}.
\end{align*}
Combining this with the previous chain of inequalities,
\begin{align*}
\sqrt{\alpha_{s,\delta}} \leq \sqrt{\inf_{\lambda \in \Lambda(i^\star)}L_t(\hat{\theta}_t, \lambda)} + \sqrt{L_t(\hat{\theta}_t, \theta)} + \sqrt{\beta_{t,1/t^2}}
 &\leq \sqrt{\inf_{\lambda \in \Lambda(i^\star)}L_t(\hat{\theta}_t, \lambda)} + 2\sqrt{\beta_{{t},1/{t}^2}}
 \\ &\leq \sqrt{\inf_{\lambda \in \Lambda(i^\star(\hat{\theta}_t))}L_t(\hat{\theta}_t, \lambda)} + 2\sqrt{\beta_{{t},1/{t}^2}},
\end{align*}
where we used again that $E_t$ holds to concentrate the LLR between $\hat{\theta}_t$ and $\theta$. The last inequality is easy to check since, if $i^\star \neq i^\star(\hat{\theta}_t)$, then $\inf_{\lambda \in \Lambda(i^\star)}L_t(\hat{\theta}_t, \lambda) = 0$. Finally, since the algorithm did not stop at $t$, it must be that $\inf_{\lambda \in \Lambda(i^\star)}L_t(\hat{\theta}_t, \lambda) < \beta_{t,\delta}$ (otherwise all alternative pieces of $i^\star(\hat{\theta}_t)$ would be eliminated at $t$). Therefore,
\begin{align*}
\sqrt{\alpha_{s,\delta}} < \sqrt{\beta_{t,\delta}} + 2\sqrt{\beta_{{t},1/{t}^2}}.
\end{align*}
Since $s\geq \overline{t}_{j(t)-1}$, from Lemma \ref{lem:new_beta-diff-between-phases} we have that
\begin{align*}
\beta_{t,\delta} &\leq \beta_{\overline{t}_{j(t)-1},\delta} + (4 c_2 - c_1)\log(\overline{t}_{j(t)-1}) \leq \beta_{s,\delta} + (4 c_2 - c_1)\log(s),
\\ \beta_{t,1/t^2} &\leq 4 \frac{c_2}{c_1} \beta_{\overline{t}_{j(t)-1},1/\overline{t}_{j(t)-1}^2} \leq 4 \frac{c_2}{c_1} \beta_{s,1/s^2}.
\end{align*}
Plugging this into our previous bound, we conclude that
\begin{align*}
\sqrt{\alpha_{s,\delta}} < \sqrt{\beta_{s,\delta} + (4 c_2 - c_1)\log(s)} + 4\sqrt{\frac{c_2}{c_1}\beta_{s,1/s^2}}.
\end{align*}
This is clearly a contradiction w.r.t. our definition of $\alpha_{t,\delta}$.
\end{proof}

\begin{proof}[Proof of Theorem \ref{th:sampling-rule-with-elim}]
Take any step $t \ge \bar{t}_1$ where $E_t$ holds. Lemma \ref{lem:closest-alternatives-not-eliminated-in-sampling} ensures that all pieces of $i^\star$ which are at minimal distance from $\theta$, i.e., such that
\begin{align*}
\inf_{\lambda \in \Lambda_p(i^{\star})}\sum_{k\in[K]}N_t^k\KL_k(\theta,\lambda) = \inf_{\lambda \in \Lambda(i^{\star})}\sum_{k\in[K]}N_t^k\KL_k(\theta,\lambda)
\end{align*}
are not eliminated for the sampling rule at time $t$, i.e., $p\in\cP_t^{\mathrm{smp}}(i^\star)$. This implies that
\begin{align*}
\min_{p\in\cP_{t}^{\mathrm{smp}}(i^\star)}\inf_{\lambda \in \Lambda_p(i^\star)} \sum_{k\in[K]}N_t^k\KL_k(\theta,\lambda) = \inf_{\lambda \in \Lambda(i^\star)} \sum_{k\in[K]}N_t^k\KL_k(\theta,\lambda).
\end{align*}
Thus, if Assumption \ref{asm:sampling-rule-v2} holds, it must be that
\begin{align*}
\inf_{\lambda \in \Lambda(i^\star)} \sum_{k\in[K]}N_t^k\KL_k(\theta,\lambda)
 &\geq \max_{\omega\in\Delta_K}\sum_{s=1}^t \min_{p\in\cP_{s-1}^{\mathrm{smp}}(i^\star)}\inf_{\lambda \in \Lambda_p(i^\star)} \sum_{k\in[K]}\omega^k\KL_k(\theta,\lambda) - R(\theta,t)
 \\ &\geq \max_{\omega\in\Delta_K}\sum_{s=1}^t \inf_{\lambda \in \Lambda(i^\star)} \sum_{k\in[K]}\omega^k\KL_k(\theta,\lambda) - R(\theta,t) = tH^\star(\theta) - R(\theta,t),
\end{align*}
where the second inequality is trivial from $\cP_{s-1}^{\mathrm{smp}}(i^\star) \subseteq \cP(i^\star)$ for all $s\geq 1$. Therefore, we proved that the condition of Assumption \ref{asm:sampling-rule} holds as well for all $t \ge \bar{t}_1$. We can now have it for all $t$ by adding $\bar{t}_1 H^\star(\theta)$ to $R(\theta, t)$ to obtain another regret function which verifies the condition of Assumption~\ref{asm:sampling-rule}. The second statement is a direct consequence of the fact that Theorem \ref{th:piece-elim} holds for any sampling rule that satisifes the latter assumption.
\end{proof}


\section{Existing Algorithms Satisfy Assumption \ref{asm:sampling-rule} and \ref{asm:sampling-rule-v2}}\label{app:assumptions}

In this section, we show that existing sampling rules that target the optimal value of the max-inf game from the general lower bound of \cite{kaufmann2016complexity} satisfy Assumption \ref{asm:sampling-rule}. Moreover, we show that, when such sampling rules are combined with elimination (as in Section \ref{sec:elimination_at_sampling}), they also satisfy Assumption \ref{asm:sampling-rule-v2}. We do that explicitly for two algorithms: the game-theoretic approach based on no-regret learners of \cite{degenne2019non} and an optimistic variant of the Track-and-Stop algorithm of \cite{kaufmann2016complexity}. These two algorithms are sufficiently general to be representative of many existing approaches for pure exploration: the latter represents those that repeatedly solve the optimization problem from the lower bound to get optimal allocations, while the former represents those that solve such problem incrementally. We now describe these two algorithmic techniques. Then, we show, through a unified proof, that they satisfy Assumption \ref{asm:sampling-rule} and \ref{asm:sampling-rule-v2}.

\subsection{Sampling Rules}

We describe the sampling rules of interest while trying to keep some of their design choices (e.g., confidence intervals, tracking, optimization, etc.) as general as possible. We do this because these sampling rules have been adapted to different pure exploration problems and bandit structures in the literature, for which such components would be different. This will allow us to have unified proofs that are actually agnostic to the specific setting.

\subsection{Common Assumptions}

We first state some assumptions on the main common components of these algorithms.

\paragraph{Bounded closest alternatives}

First, we shall make a mild regularity assumption about the considered identification problem: the distance between any closest alternative and $\theta$ is bounded.
\begin{assumption}\label{asm:boundedness}
There exists a constant $B > 0$ such that, for any $\omega \in \mathbb{R}_{\geq 0}$ and any subset $\Lambda\subseteq\cM$ which is exactly the union of arbitrary pieces, there exists a closest alternative
\begin{align*}
\lambda_\omega \in \argmin_{\lambda\in\Lambda}\sum_{k=1}^K \omega^K \KL_k(\theta,\lambda)
\end{align*}
such that
\begin{align*}
\max_{k\in[K]} \KL_k(\theta,\lambda_\omega) \leq B.
\end{align*}
\end{assumption}
We note that this assumption is satisfied in the identification problems we consider (see Appendix \ref{app:problems}), for Gaussian rewards where pieces are half-spaces.

\paragraph{Tracking}

All the sampling rules we consider choose a sequence of proportions $(\omega_t)_{t\geq 1}$, where $\omega_t\in\Delta_K$, and use tracking to select the next arm to play based on these. Formally, the arm played at time $t$ is
\begin{align}\label{eq:tracking}
k_t := \mathrm{Track}\left(\sum_{s=1}^t\omega_s, N_{t-1}\right),
\end{align}
where $\mathrm{Track}: \mathbb{R}_+^K \times \mathbb{R}_+^K \rightarrow [K]$ is some tracking function. To gain generality, we shall keep the tracking function implicit in the remainder, while only requiring the following assumption (which is actually guaranteed by existing methods).
\begin{assumption}\label{asm:tracking}
There exists a constant $C_{\mathrm{track}} > 0$ such that
\begin{align*}
\forall t > 0, k\in[K] : N_t^k \geq \sum_{s=1}^t \omega_s^k -C_{\mathrm{track}}.
\end{align*}
For instance, the widely-adopted cumulative tracking,
\begin{align*}
\mathrm{Track}\left(\sum_{s=1}^t\omega_s, N_{t-1}\right) = \argmin_{k\in[K]} \left( N_{t-1}^k - \sum_{s=1}^t\omega_s^k \right),
\end{align*}
satisfies this assumption.
\end{assumption}

\paragraph{Confidence intervals}

These sampling rules maintain confidence intervals $(c_t^k)_{t\geq 1, k\in[K]}$ about the expected return of each arm. We shall also keep them implicit as their specific form depends on the bandit structure under consideration (e.g., linear vs unstructured). We will only require the following assumption, which is satisfied by common choices as described below.
\begin{assumption}\label{asm:confidence intervals}
Under event $E_t$,
\begin{align*}
\forall s \leq t: \KL_k(\hat{\theta}_{s},\lambda) - c_{s}^k \leq \KL_k(\theta,\lambda) \leq \KL_k(\hat{\theta}_{s},\lambda) + c_{s}^k.
\end{align*}
Moreover, there exists a sub-linear (in $t$) function $C_{\mathrm{conf}}(t)$ such that
\begin{align*}
\sum_{s=1}^t\sum_{k\in[K]}\omega_s^kc_{s-1}^k \leq C_{\mathrm{conf}}(t).
\end{align*}
\end{assumption}
In bandits with linear structure, the confidence intervals typically take the form $c_t^k \propto \|\phi_k\|_{V_t^{-1}}$. In this case finding an upper bound on $\sum_{s=1}^t\sum_{k\in[K]}\omega_s^kc_{s-1}^k$ would reduce to applying an elliptical potential lemma \cite{abbasi2011improved} plus the tracking property, and the resulting upper bound would be $C_{\mathrm{conf}}(t) \propto \sqrt{dt}$. In unstructured bandits we can take $c_t^k \propto 1/\sqrt{N_t^k}$ and finding $C_{\mathrm{conf}}(t)$ would require applying the standard pigeon-hole principle plus the tracking property, for which one obtains $C_{\mathrm{conf}}(t) \propto \sqrt{Kt}$.

\subsubsection{Optimistic Track-and-Stop}\label{sec:optimistic-ts}

Here we describe an optimistic variant of the Track-and-Stop algorithm by \cite{kaufmann2016complexity}. It was originally introduced by \cite{degenne2019non} in order to get rid of forced exploration, one of the main causes behind the poor empirical performance of Track-and-Stop. The idea is to solve, at each time step $t$, an optimistic variant of the optimization problem from the lower bound,
\begin{align}\label{eq:optimistic-ts-optimization}
\omega_t := \argmax_{\omega\in\Delta_K}\inf_{\lambda \in \Lambda(i^\star(\hat{\theta}_{t-1}))} \sum_{k\in[K]}\omega^k \left(\KL_k(\hat{\theta}_{t-1},\lambda) + c_{t-1}^k\right),
\end{align}
where $c_{t-1}^k$ is a per-arm confidence interval that satisfies Assumption \ref{asm:confidence intervals}, thus ensuring optimism $\KL_k(\hat{\theta}_{t-1},\lambda) + c_{t-1}^k \geq \KL_k(\theta,\lambda)$ with high probability. The solution to this optimization problem yields proportions $\omega_t$ which are tracked by the sampling rule. Then, the arm played at time $t$ is given by the tracking rule \eqref{eq:tracking}.

\paragraph{Combining with elimination}

In order to combine this sampling rule with elimination, we simply redefine the optimization problem as
\begin{align}\label{eq:optimistic-ts-optimization-elim}
\omega_t := \argmax_{\omega\in\Delta_K}\min_{p\in{\cP}_{t-1}^{\mathrm{smp}}(i^\star(\hat{\theta}_{t-1}))}\inf_{\lambda \in \Lambda_p(i^\star(\hat{\theta}_{t-1}))} \sum_{k\in[K]}\omega^k \left(\KL_k(\hat{\theta}_{t-1},\lambda) + c_{t-1}^k\right).
\end{align}
That is, we simply replace a minimization over the whole alternative with one over the active pieces only.

\subsubsection{Game-Theoretic Approach with No-Regret Learners}\label{sec:no-regret}

The idea behind the game-theoretic approach of \cite{degenne2019non} is to avoid recomputing the full optimistic problem \eqref{eq:optimistic-ts-optimization} at each step, while solving it incrementally by means of no-regret learners. Given some online-learning algorithm $\cL$ working on the $K$-dimensional simplex, the sampling rule works as follows. At each time $t$, first $\cL$ outputs new proportions $\omega_t$. Then, we compute the closest alternative
\begin{align*}
\hat{\lambda}_t := \argmin_{\lambda \in \Lambda(i^\star(\hat{\theta}_{t-1}))}\sum_{k\in[K]}\omega_t^k \KL_k(\hat{\theta}_{t-1},\lambda).
\end{align*}
Finally, $\cL$ is updated with the (concave) gain function
\begin{align*}
g_t(\omega) := \sum_{k\in[K]}\omega^k \left(\KL_k(\hat{\theta}_{t-1},\hat{\lambda}_t) + c_{t-1}^k\right).
\end{align*}
Finally, the sampling rule uses tracking exactly as in \eqref{eq:tracking} to decide the next arm to pull. As before, we will only require the tracking rule to satisfy Assumption \ref{asm:tracking} without specifying an explicit form. Similarly, we will keep the learner implicit as far as it satisfies the following no-regret property.
\begin{assumption}\label{asm:no-regret}
The learner $\mathcal{L}$ is no regret: there exists a sub-linear (in $t$) function $C_{\mathcal{L}}(t)$ such that, for any $t\geq 1$ and any sequence of gains $\{g_s(\omega)\}_{s\leq t}$,
$$
\max_{w\in\Delta_K}\sum_{s=1}^t \big( g_s(w) -g_s(w_s) \big) \leq C_{\mathcal{L}}(t) \: .
$$
\end{assumption}

\paragraph{Combining with elimination}

For the game-theoretic approach, we only need to redefine the closest alternative used in the gains as
\begin{align*}
(\hat{p}_t, \hat{\lambda}_t) := \argmin_{p\in{\cP}_{t-1}^{\mathrm{smp}}(i^\star(\hat{\theta}_{t-1})), \lambda \in \Lambda_p(i^\star(\hat{\theta}_{t-1}))}\sum_{k\in[K]}\omega_t^k \KL_k(\hat{\theta}_{t-1},\lambda).
\end{align*}

\subsection{Assumption \ref{asm:sampling-rule} Holds}

We need to show that, for any time $t \geq 1$ where the good event $E_t$ (Equation \ref{eq:Et}) holds, the two sampling rules presented above satisfy
\begin{align*}
t H^\star(\theta) \leq \inf_{\lambda \in \Lambda(i^\star)} \sum_{k\in[K]}N_t^k\KL_k(\theta,\lambda) + R(\theta,t)
\end{align*}
for suitable choices of the function $R(\theta,t)$. We shall write a unified proof for these two sampling rules while explicitly mentioning where they differ. Let us suppose that their algorithmic components satisfy the assumptions stated above, i.e., Assumption \ref{asm:tracking} for tracking, Assumption \ref{asm:confidence intervals} for the confidence intervals, and Assumption \ref{asm:no-regret} for the no-regret learner.

Take any time step $t\geq 1$ and suppose that $E_t$ holds. Let $\lambda_t := \argmin_{\lambda \in \Lambda(i^\star)} \sum_{k\in[K]}N_t^k\KL_k(\theta,\lambda)$. Then, using the tracking property (Assumption \ref{asm:tracking}) together with Assumption \ref{asm:boundedness},
\begin{align*}
\inf_{\lambda \in \Lambda(i^\star)} \sum_{k\in[K]}N_t^k\KL_k(\theta,\lambda)
 &= \sum_{k\in[K]}N_t^k\KL_k(\theta,\lambda_t) 
 \\
 &\geq \sum_{k\in[K]}\sum_{s=1}^t\omega_s^k\KL_k(\theta,\lambda_t) - C_{\mathrm{track}}\sum_{k\in[K]}\KL_k(\theta,\lambda_t) 
 \\ &\geq \inf_{\lambda \in \Lambda(i^\star)}\sum_{k\in[K]}\sum_{s=1}^t\omega_s^k\KL_k(\theta,\lambda) - C_{\mathrm{track}}\sum_{k\in[K]}\KL_k(\theta,\lambda_t) 
  \\ &\geq \inf_{\lambda \in \Lambda(i^\star)}\sum_{k\in[K]}\sum_{s=1}^t\omega_s^k\KL_k(\theta,\lambda) - C_{\mathrm{track}}KB.
\end{align*}
We can now lower bound the first term as
\begin{align*}
\inf_{\lambda \in \Lambda(i^\star)}\sum_{k\in[K]}\sum_{s=1}^t\omega_s^k\KL_k(\theta,\lambda)
 &\stackrel{(a)}{\geq} \sum_{s=1}^t \inf_{\lambda \in \Lambda(i^\star)} \sum_{k\in[K]}  \omega_s^k\KL_k(\theta,\lambda)
 \\ &\stackrel{(b)}{\geq} \sum_{s=1}^t \inf_{\lambda \in \Lambda(i^\star(\hat{\theta}_{s-1}))} \sum_{k\in[K]} \omega_s^k\KL_k(\theta,\lambda)
 \\ &\stackrel{(c)}{\geq} \sum_{s=1}^t \inf_{\lambda \in \Lambda(i^\star(\hat{\theta}_{s-1}))} \sum_{k\in[K]} \omega_s^k \left( \KL_k(\hat{\theta}_{s-1},\lambda) - c_{s-1}^k \right)
 \\ &= \sum_{s=1}^t \inf_{\lambda \in \Lambda(i^\star(\hat{\theta}_{s-1}))} \sum_{k\in[K]} \omega_s^k \left( \KL_k(\hat{\theta}_{s-1},\lambda) + c_{s-1}^k \right) - 2\sum_{s=1}^t\sum_{k\in[K]}\omega_s^kc_{s-1}^k
 \\ &\stackrel{(d)}{\geq} \sum_{s=1}^t \inf_{\lambda \in \Lambda(i^\star(\hat{\theta}_{s-1}))} \sum_{k\in[K]} \omega_s^k \left( \KL_k(\hat{\theta}_{s-1},\lambda) + c_{s-1}^k \right) - 2C_{\mathrm{conf}}(t),
\end{align*}
where (a) is from the concavity of the infimum, (b) holds since either $\Lambda(i^\star(\hat{\theta}_{s-1})) = \Lambda(i^\star)$ or $\theta \in \Lambda(i^\star(\hat{\theta}_{s-1}))$ (in which case the infimum would be zero), (c) is from the validity of the confidence intervals under $E_t$, and (d) is from Assumption \ref{asm:confidence intervals}. Now note that, when applying the game-theoretic approach (Section \ref{sec:no-regret}), the first term on the right-hand side is exactly the sum of gains fed into the learner. Thus, using the no-regret property (Assumption \ref{asm:no-regret}),
\begin{align*}
&\inf_{\lambda \in \Lambda(i^\star)}\sum_{k\in[K]}\sum_{s=1}^t\omega_s^k\KL_k(\theta,\lambda)
\\
&\geq \max_{\omega\in\Delta_K}\sum_{s=1}^t \inf_{\lambda \in \Lambda(i^\star(\hat{\theta}_{s-1}))} \sum_{k\in[K]} \omega^k \left( \KL_k(\hat{\theta}_{s-1},\lambda) + c_{s-1}^k \right) - 2C_{\mathrm{conf}}(t) - C_{\cL}(t).
\end{align*}
If instead we are applying optimistic Track-and-Stop (Section \ref{sec:optimistic-ts}), the first term on the right-hand side is exactly the sum of optimal values of the objective functions maximized by the algorithm. Thus,
\begin{align*}
&\inf_{\lambda \in \Lambda(i^\star)}\sum_{k\in[K]}\sum_{s=1}^t\omega_s^k\KL_k(\theta,\lambda)
\\
&= \sum_{s=1}^t \max_{\omega\in\Delta_K}\inf_{\lambda \in \Lambda(i^\star(\hat{\theta}_{s-1}))} \sum_{k\in[K]} \omega^k \left( \KL_k(\hat{\theta}_{s-1},\lambda) + c_{s-1}^k \right) - 2C_{\mathrm{conf}}(t)
\\ &\geq \max_{\omega\in\Delta_K} \sum_{s=1}^t \inf_{\lambda \in \Lambda(i^\star(\hat{\theta}_{s-1}))} \sum_{k\in[K]} \omega^k \left( \KL_k(\hat{\theta}_{s-1},\lambda) + c_{s-1}^k \right) - 2C_{\mathrm{conf}}(t).
\end{align*}
Therefore, we only need to lower bound the first term above, which is common between the two considered algorithms. We have
\begin{align*}
&\max_{\omega\in\Delta_K} \sum_{s=1}^t \inf_{\lambda \in \Lambda(i^\star(\hat{\theta}_{s-1}))} \sum_{k\in[K]} \omega^k \left( \KL_k(\hat{\theta}_{s-1},\lambda) + c_{s-1}^k \right)
\\
&\stackrel{(e)}{\geq} \max_{\omega\in\Delta_K} \sum_{s=1}^t \inf_{\lambda \in \Lambda(i^\star)} \sum_{k\in[K]} \omega^k \left( \KL_k(\hat{\theta}_{s-1},\lambda) + c_{s-1}^k \right)
\\ &\stackrel{(f)}{\geq} \max_{\omega\in\Delta_K} \sum_{s=1}^t \inf_{\lambda \in \Lambda(i^\star)} \sum_{k\in[K]} \omega^k \KL_k({\theta},\lambda)
\\ &\stackrel{(g)}{\geq} t H^\star(\theta),
\end{align*}
where (e) follows from the same reasoning as step (b) above, (f) is from the fact that confidence intervals are valid under $E_t$, and (g) is from the definition of the game in the lower bound. Putting all together, we proved that optimistic Track-and-Stop satisfies Assumption \ref{asm:sampling-rule} with
\begin{align*}
R(\theta,t) = C_{\mathrm{track}}KB + 2C_{\mathrm{conf}}(t),
\end{align*}
while the game-theoretic approach satisfies it with
\begin{align*}
R(\theta,t) = C_{\mathrm{track}}KB + 2C_{\mathrm{conf}}(t) + C_{\cL}(t).
\end{align*}

\subsection{Assumption \ref{asm:sampling-rule-v2} Holds}

We now show that Assumption \ref{asm:sampling-rule-v2} holds for these sampling rules combined with elimination as formally explained above. The main steps are very similar as before, with the additional complications posed by eliminating pieces at sampling. 

Recall that we want to show that, for any time $t \geq 1$ where the good event $E_t$ (Equation \ref{eq:Et}) holds, the two sampling rules presented above satisfy
\begin{align*}
\max_{\omega\in\Delta_K}\sum_{s=1}^t \min_{p\in\cP_{s-1}^{\mathrm{smp}}(i^\star)}\inf_{\lambda \in \Lambda_p(i^\star)} \sum_{k\in[K]}\omega^k\KL_k(\theta,\lambda) \leq \min_{p\in\cP_{t}^{\mathrm{smp}}(i^\star)}\inf_{\lambda \in \Lambda_p(i^\star)} \sum_{k\in[K]}N_t^k\KL_k(\theta,\lambda) + R(\theta,t)
\end{align*}
for suitable choices of the function $R(\theta,t)$.

Take any time step $t\geq 1$ and suppose that $E_t$ holds. Let 
$$(p_t,\lambda_t) \in \argmin_{p\in\cP_{t}^{\mathrm{smp}}(i^\star), \lambda \in \Lambda_p(i^\star)} \sum_{k\in[K]}N_t^k\KL_k(\theta,\lambda)$$ be the closest piece and alternative at time $t$. Then, using the tracking property (Assumption \ref{asm:tracking}) and Assumption \ref{asm:boundedness},
\begin{align*}
\min_{p\in\cP_{t}^{\mathrm{smp}}(i^\star)}\inf_{\lambda \in \Lambda_p(i^\star)}\sum_{k\in[K]}N_t^k\KL_k(\theta,\lambda)
 &= \sum_{k\in[K]}N_t^k\KL_k(\theta,\lambda_t) 
\\ &\geq \sum_{k\in[K]}\sum_{s=1}^t\omega_s^k\KL_k(\theta,\lambda_t) - C_{\mathrm{track}}KB
 \\ &\geq \min_{p\in\cP_{t}^{\mathrm{smp}}(i^\star)}\inf_{\lambda \in \Lambda_p(i^\star)}\sum_{k\in[K]}\sum_{s=1}^t\omega_s^k\KL_k(\theta,\lambda) - C_{\mathrm{track}}KB.
\end{align*}
Recall that $\overline{t}_j := {\bar{t}_0}^{2^j}$ is the time step at which the $j$-th reset is performed and $j(t) := \lfloor \log_2\log_{\bar{t}_0} t \rfloor$ is the index of the last reset before $t$.  Let $\bar{t} := \overline{t}_{j(t)-1}$ be the time of the second-last reset before $t$. Note that
\begin{align*}
\bar{t} := {\bar{t}_0}^{2^{j(t)-1}} = \bar{t}_0^{\frac{1}{2}2^{\lfloor \log_{2}\log_{\bar{t}_0} t \rfloor}} = \sqrt{\bar{t}_0^{2^{\lfloor \log_{2}\log_{\bar{t}_0} t \rfloor}}} = \sqrt{\bar{t}_{j(t)}}\leq \sqrt{t}.
\end{align*}
We can now lower bound the first term as
\begin{align*}
&\min_{p\in\cP_{t}^{\mathrm{smp}}(i^\star)}\inf_{\lambda \in \Lambda_p(i^\star)}\sum_{k\in[K]}\sum_{s=1}^t\omega_s^k\KL_k(\theta,\lambda)
 \\ &\stackrel{(a)}{\geq} \sum_{s=1}^t \min_{p\in\cP_{t}^{\mathrm{smp}}(i^\star)}\inf_{\lambda \in \Lambda_p(i^\star)} \sum_{k\in[K]}  \omega_s^k\KL_k(\theta,\lambda)
 \\ &\stackrel{(b)}{\geq} \sum_{s=\bar{t}+1}^t \min_{p\in\cP_{t}^{\mathrm{smp}}(i^\star)}\inf_{\lambda \in \Lambda_p(i^\star)} \sum_{k\in[K]}  \omega_s^k\KL_k(\theta,\lambda)
 \\ &\stackrel{(c)}{\geq} \sum_{s=\bar{t}+1}^t \min_{p\in\cP_{t}^{\mathrm{smp}}(i^\star(\hat{\theta}_{s-1}))}\inf_{\lambda \in \Lambda_p(i^\star(\hat{\theta}_{s-1}))} \sum_{k\in[K]} \omega_s^k\KL_k(\theta,\lambda)
  \\ &\stackrel{(d)}{\geq} \sum_{s=1}^t \min_{p\in\cP_{s-1}^{\mathrm{smp}}(i^\star(\hat{\theta}_{s-1}))}\inf_{\lambda \in \Lambda_p(i^\star(\hat{\theta}_{s-1}))} \sum_{k\in[K]} \omega_s^k\KL_k(\theta,\lambda) - B\sqrt{t}
 \\ &\stackrel{(e)}{\geq} \sum_{s=1}^t \min_{p\in\cP_{s-1}^{\mathrm{smp}}(i^\star(\hat{\theta}_{s-1}))}\inf_{\lambda \in \Lambda_p(i^\star(\hat{\theta}_{s-1}))}  \sum_{k\in[K]} \omega_s^k \left( \KL_k(\hat{\theta}_{s-1},\lambda) - c_{s-1}^k \right) - B\sqrt{t}
 \\ &\stackrel{(f)}{\geq} \sum_{s=1}^t \min_{p\in\cP_{s-1}^{\mathrm{smp}}(i^\star(\hat{\theta}_{s-1}))}\inf_{\lambda \in \Lambda_p(i^\star(\hat{\theta}_{s-1}))}  \sum_{k\in[K]} \omega_s^k \left( \KL_k(\hat{\theta}_{s-1},\lambda) + c_{s-1}^k \right) - 2C_{\mathrm{conf}}(t) - B\sqrt{t},
\end{align*}
where (a) is from the concavity of the infimum, (b) drops the fist $\bar{t}$ rounds, (c) uses Lemma \ref{lem:change-inf-from-istar}, (d) uses that $\cP_t^{\mathrm{smp}}$ is contained in all active sets from $\bar{t}$ to $t$ and completes the sum with the first $\bar{t}$ rounds (while bounding $\bar{t} \leq \sqrt{t}$), (e) is from the validity of the confidence intervals under $E_t$, and (f) is from Assumption \ref{asm:confidence intervals}.

Now note that, when applying the game-theoretic approach (Section \ref{sec:no-regret}), the first term on the right-hand side is exactly the sum of gains fed into the learner. Thus, using the no-regret property (Assumption \ref{asm:no-regret}),
\begin{align*}
\min_{p\in\cP_{t}^{\mathrm{smp}}(i^\star)}&\inf_{\lambda \in \Lambda_p(i^\star)}\sum_{k\in[K]}\sum_{s=1}^t\omega_s^k\KL_k(\theta,\lambda) 
\\ &\geq \max_{\omega\in\Delta_K}\sum_{s=1}^t \min_{p\in\cP_{s-1}^{\mathrm{smp}}(i^\star(\hat{\theta}_{s-1}))}\inf_{\lambda \in \Lambda_p(i^\star(\hat{\theta}_{s-1}))}  \sum_{k\in[K]} \omega^k \left( \KL_k(\hat{\theta}_{s-1},\lambda) + c_{s-1}^k \right)
\\&\quad - 2C_{\mathrm{conf}}(t) - C_{\cL}(t)  - B\sqrt{t}.
\end{align*}
If instead we are applying optimistic Track-and-Stop (Section \ref{sec:optimistic-ts}), the first term on the right-hand side is exactly the sum of optimal values of the objective functions maximized by the algorithm. Thus,
\begin{align*}
&\min_{p\in\cP_{t}^{\mathrm{smp}}(i^\star)}\inf_{\lambda \in \Lambda_p(i^\star)}\sum_{k\in[K]}\sum_{s=1}^t\omega_s^k\KL_k(\theta,\lambda) 
\\ &\geq \sum_{s=1}^t \max_{\omega\in\Delta_K}\min_{p\in\cP_{s-1}^{\mathrm{smp}}(i^\star(\hat{\theta}_{s-1}))}\inf_{\lambda \in \Lambda_p(i^\star(\hat{\theta}_{s-1}))}  \sum_{k\in[K]} \omega^k \left( \KL_k(\hat{\theta}_{s-1},\lambda) + c_{s-1}^k \right) - 2C_{\mathrm{conf}}(t) - B\sqrt{t}
\\ &\geq \max_{\omega\in\Delta_K}\sum_{s=1}^t \min_{p\in\cP_{s-1}^{\mathrm{smp}}(i^\star(\hat{\theta}_{s-1}))}\inf_{\lambda \in \Lambda_p(i^\star(\hat{\theta}_{s-1}))}  \sum_{k\in[K]} \omega^k \left( \KL_k(\hat{\theta}_{s-1},\lambda) + c_{s-1}^k \right) - 2C_{\mathrm{conf}}(t) - B\sqrt{t}.
\end{align*}
Therefore, we only need to lower bound the first term above, which is common between the two considered algorithms. We have
\begin{align*}
&\max_{\omega\in\Delta_K} \sum_{s=1}^t \min_{p\in\cP_{s-1}^{\mathrm{smp}}(i^\star(\hat{\theta}_{s-1}))}\inf_{\lambda \in \Lambda_p(i^\star(\hat{\theta}_{s-1}))}  \sum_{k\in[K]} \omega^k \left( \KL_k(\hat{\theta}_{s-1},\lambda) + c_{s-1}^k \right)
\\ &\stackrel{(g)}{\geq}
\max_{\omega\in\Delta_K} \sum_{s=\bar{t}+1}^t \min_{p\in\cP_{s-1}^{\mathrm{smp}}(i^\star(\hat{\theta}_{s-1}))}\inf_{\lambda \in \Lambda_p(i^\star(\hat{\theta}_{s-1}))}  \sum_{k\in[K]} \omega^k \left( \KL_k(\hat{\theta}_{s-1},\lambda) + c_{s-1}^k \right)
\\ &\stackrel{(h)}{\geq} \max_{\omega\in\Delta_K} \sum_{s=\bar{t}+1}^t \min_{p\in\cP_{s-1}^{\mathrm{smp}}(i^\star)}\inf_{\lambda \in \Lambda_p(i^\star)}  \sum_{k\in[K]} \omega^k \left( \KL_k(\hat{\theta}_{s-1},\lambda) + c_{s-1}^k \right)
\\ &\stackrel{(i)}{\geq} \max_{\omega\in\Delta_K} \sum_{s=\bar{t}+1}^t \min_{p\in\cP_{s-1}^{\mathrm{smp}}(i^\star)}\inf_{\lambda \in \Lambda_p(i^\star)}  \sum_{k\in[K]} \omega^k  \KL_k({\theta},\lambda)
\\ &\stackrel{(j)}{\geq} \max_{\omega\in\Delta_K} \sum_{s=1}^t \min_{p\in\cP_{s-1}^{\mathrm{smp}}(i^\star)}\inf_{\lambda \in \Lambda_p(i^\star)}  \sum_{k\in[K]} \omega^k  \KL_k({\theta},\lambda) - B\sqrt{t},
\end{align*}
where (g) drops the first $\bar{t}$ rounds, (h) uses Lemma \ref{lem:change-inf-from-ihat}, (i) is from the fact that confidence intervals are valid under $E_t$, and (j) adds the missing first $\bar{t}$ rounds. Putting all together, we proved that optimistic Track-and-Stop satisfies Assumption \ref{asm:sampling-rule} with
\begin{align*}
R(\theta,t) = C_{\mathrm{track}}KB + 2C_{\mathrm{conf}}(t) + 2B\sqrt{t},
\end{align*}
while the game-theoretic approach satisfies it with
\begin{align*}
R(\theta,t) = C_{\mathrm{track}}KB + 2C_{\mathrm{conf}}(t) + C_{\cL}(t) + 2B\sqrt{t}.
\end{align*}

\begin{lemma}\label{lem:theta-not-elim-in-last-two-phases}
Under event $E_t$, for any $s\in\mathbb{N}$ with $\overline{t}_{j(t)-1} \leq s \leq t$,
\begin{align*}
L_s(\hat{\theta}_s,\theta) < \alpha_{s,\delta}.
\end{align*}
Moreover, for any $s,s'\in\mathbb{N}$ with $\overline{t}_{j(t)-1} \leq s' \leq s \leq t$,
\begin{align*}
L_{s'}(\hat{\theta}_{s'},\hat{\theta}_s) < \alpha_{s',\delta}.
\end{align*}
\end{lemma}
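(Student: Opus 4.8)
The plan is to handle the two inequalities separately, in both cases reducing everything to event $E_t$ and to the cross-phase threshold comparison supplied by Lemma~\ref{lem:new_beta-diff-between-phases}. The reusable ingredient is the bound
$\beta_{t,1/t^2}\le 4\frac{c_2}{c_1}\beta_{s,1/s^2}$, valid for every $s$ with $\overline{t}_{j(t)-1}\le s\le t$ (here I work in the regime $t\ge\bar{t}_1$, so that $j(t)\ge 1$ and $\overline{t}_{j(t)-1}$ is a genuine reset index). Indeed, applying the second bound of Lemma~\ref{lem:new_beta-diff-between-phases} with $j=j(t)-1$ gives $\beta_{t,1/t^2}\le \frac{c_2}{c_1}2^{2}\,\beta_{\overline{t}_{j(t)-1},1/\overline{t}_{j(t)-1}^2}$, and monotonicity of $\beta_{\cdot,1/\cdot^2}$ together with $s\ge\overline{t}_{j(t)-1}$ upgrades this to $4\frac{c_2}{c_1}\beta_{s,1/s^2}$. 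Since $\alpha_{s,\delta}$ is the square of a sum whose second summand is $4\sqrt{\frac{c_2}{c_1}\beta_{s,1/s^2}}$ and whose first summand $\sqrt{\beta_{s,\delta}+(4c_2-c_1)\log s}$ is strictly positive (the quantity under the root is at least $\overline{\log}(1/\delta)+4c_2\log s>0$ by Assumption~\ref{ass:beta_bounds}), we always have the strict bound $\alpha_{s,\delta}> 16\frac{c_2}{c_1}\beta_{s,1/s^2}$.

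The first claim is then immediate. Under $E_t$ we have $L_s(\hat{\theta}_s,\theta)\le\beta_{t,1/t^2}$ for all $s\le t$, hence for $\overline{t}_{j(t)-1}\le s\le t$,
\begin{align*}
L_s(\hat{\theta}_s,\theta)\le\beta_{t,1/t^2}\le 4\tfrac{c_2}{c_1}\beta_{s,1/s^2}< 16\tfrac{c_2}{c_1}\beta_{s,1/s^2}<\alpha_{s,\delta}.
\end{align*}
For the second claim I would pass to the Mahalanobis geometry induced by the design matrices. By Corollary~\ref{cor:llr-lin-gauss}, $L_u(\hat{\theta}_u,\lambda)=\frac{1}{2}\|\hat{\theta}_u-\lambda\|_{V_u}^2$, so a triangle inequality in $\|\cdot\|_{V_{s'}}$ yields
\begin{align*}
\sqrt{L_{s'}(\hat{\theta}_{s'},\hat{\theta}_s)}=\tfrac{1}{\sqrt2}\|\hat{\theta}_{s'}-\hat{\theta}_s\|_{V_{s'}}\le \tfrac{1}{\sqrt2}\|\hat{\theta}_{s'}-\theta\|_{V_{s'}}+\tfrac{1}{\sqrt2}\|\theta-\hat{\theta}_s\|_{V_{s'}}.
\end{align*}
The first term equals $\sqrt{L_{s'}(\hat{\theta}_{s'},\theta)}$. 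For the second I use $s'\le s$, hence $V_{s'}\preceq V_s$ since the per-arm counts are non-decreasing, so $\|\theta-\hat{\theta}_s\|_{V_{s'}}\le\|\theta-\hat{\theta}_s\|_{V_s}=\sqrt{2L_s(\hat{\theta}_s,\theta)}$. Both log-likelihood ratios are at most $\beta_{t,1/t^2}$ under $E_t$, giving
\begin{align*}
\sqrt{L_{s'}(\hat{\theta}_{s'},\hat{\theta}_s)}\le 2\sqrt{\beta_{t,1/t^2}}\le 4\sqrt{\tfrac{c_2}{c_1}\beta_{s',1/s'^2}}<\sqrt{\alpha_{s',\delta}},
\end{align*}
which is the desired inequality after squaring (the penultimate step again uses $\beta_{t,1/t^2}\le4\frac{c_2}{c_1}\beta_{s',1/s'^2}$ since $s'\ge\overline{t}_{j(t)-1}$).

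The step I expect to be most delicate is not any of the inequalities themselves but the threshold bookkeeping across reset phases. The definition of $\alpha_{t,\delta}$, with its factor $4\sqrt{c_2/c_1}$ on the $\beta_{\cdot,1/\cdot^2}$ term and its $(4c_2-c_1)\log t$ correction inside the first root, is tailored precisely so that the factor-$4$ blow-up of $\beta_{\cdot,1/\cdot^2}$ incurred over two consecutive phases (Lemma~\ref{lem:new_beta-diff-between-phases} with $j=j(t)-1$) is absorbed with strict slack. The care needed is therefore in verifying that the correct reset index $j=j(t)-1$ is used, that monotonicity of $\beta_{\cdot,\delta}$ and $\beta_{\cdot,1/\cdot^2}$ (from Assumption~\ref{ass:beta_bounds}) is applicable on the whole range $[\overline{t}_{j(t)-1},t]$, and that the positivity of the first summand of $\sqrt{\alpha_{s,\delta}}$ yields the final strict inequalities; the geometric triangle-inequality argument for the second claim is then routine.
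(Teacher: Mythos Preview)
Your proof is correct and follows essentially the same route as the paper: for the first claim you combine $E_t$ with Lemma~\ref{lem:new_beta-diff-between-phases} at $j=j(t)-1$ and monotonicity of the threshold, and for the second you use the Mahalanobis triangle inequality via Corollary~\ref{cor:llr-lin-gauss} together with $V_{s'}\preceq V_s$ (the paper phrases this as $N_{s'}^k\le N_s^k$, which is equivalent). Your extra explicit care about strict positivity of the first summand of $\sqrt{\alpha_{s,\delta}}$ and the regime $t\ge\bar t_1$ is welcome but does not change the argument.
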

\begin{proof}
Using the good event $E_t$ followed by an application of Lemma \ref{lem:new_beta-diff-between-phases} together with $s \geq \overline{t}_{j(t)-1}$,
\begin{align*}
L_s(\hat{\theta}_s,\theta) \leq \beta_{t,1/t^2} \leq 4 \frac{c_2}{c_1}\beta_{\overline{t}_{j(t)-1},1/\overline{t}_{j(t)-1}^2} \leq 4\frac{c_2}{c_1}\beta_{s,1/s^2} < \alpha_{s,\delta}.
\end{align*}
This proves the first claim. To prove the second one, note that
\begin{align*}
\sqrt{L_{s'}(\hat{\theta}_{s'},\hat{\theta}_s)}
 \stackrel{(a)}{=} \sqrt{\sum_{k\in[K]}N_{s'}^k\KL_k(\hat{\theta}_{s'},\hat{\theta}_s)}
 &\stackrel{(b)}{\leq} \sqrt{\sum_{k\in[K]}N_{s'}^k\KL_k(\hat{\theta}_{s'},\theta)} + \sqrt{\sum_{k\in[K]}N_{s'}^k\KL_k(\hat{\theta}_s,\theta)}
 \\ &\stackrel{(c)}{\leq} \sqrt{\sum_{k\in[K]}N_{s'}^k\KL_k(\hat{\theta}_{s'},\theta)} + \sqrt{\sum_{k\in[K]}N_{s}^k\KL_k(\hat{\theta}_s,\theta)}
 \\ &\stackrel{(d)}{=} \sqrt{L_{s'}(\hat{\theta}_{s'},\theta)} + \sqrt{L_{s}(\hat{\theta}_{s},\theta)} \stackrel{(e)}{\leq} 2\sqrt{\beta_{t,1/t^2}},
\end{align*}
where (a) is from Corollary~\ref{cor:llr-lin-gauss}, (b) is from the triangle inequality (recall that the sum of KLs is a norm), (c) is from the fact that the pull counts are non-decreasing and $s\geq s'$, (d) is again from Corollary~\ref{cor:llr-lin-gauss}, and (e) is from event $E_t$. Using Lemma \ref{lem:new_beta-diff-between-phases} as before, we have $2\sqrt{\beta_{{t},1/{t}^2}} \leq 2\sqrt{\frac{c_2}{c_1}\beta_{s',1/s'^2}} < \sqrt{\alpha_{s',\delta}}$. This proves the second statement.
\end{proof}

\begin{lemma}\label{lem:change-inf-from-istar}
Under event $E_t$, for any $i\in\cI$ and $\omega\in\Delta_K$,
\begin{align*}
 \min_{p\in\cP_{t}^{\mathrm{smp}}(i^\star)}\inf_{\lambda \in \Lambda_p(i^\star)} \sum_{k\in[K]}  \omega^k\KL_k(\theta,\lambda)
 \geq \min_{p\in\cP_{t}^{\mathrm{smp}}(i)}\inf_{\lambda \in \Lambda_p(i)} \sum_{k\in[K]} \omega^k\KL_k(\theta,\lambda).
\end{align*}
\end{lemma}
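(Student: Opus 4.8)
The plan is to split on whether $i$ equals the true answer $i^\star := i^\star(\theta)$. When $i = i^\star$ the two sides coincide and the inequality is an equality, so there is nothing to prove. The interesting case is $i \neq i^\star$, and here the key idea is that $\theta$ itself belongs to the alternative $\Lambda(i)$ of any wrong answer: since $i^\star(\theta) = i^\star \neq i$, the definition of the piece decomposition yields a piece index $\bar p \in \cP(i)$ with $\theta \in \Lambda_{\bar p}(i)$. If I can show that this particular piece is still active at sampling, i.e. $\bar p \in \cP_t^{\mathrm{smp}}(i)$, then the right-hand side is bounded above by the weighted distance to $\Lambda_{\bar p}(i)$, which is $0$ since $\lambda = \theta$ is feasible and makes every $\KL_k(\theta,\lambda)$ vanish. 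As the left-hand side is a minimum of nonnegative quantities, it is $\geq 0$, and the inequality follows.

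So the real work is to argue $\bar p \in \cP_t^{\mathrm{smp}}(i)$ under $E_t$. Recall that $\cP_t^{\mathrm{smp}}(i) = \bigcap_{s = \overline{t}_{j(t)-1}}^{t} \overline{\cP}_s(i;\alpha_{s,\delta})$, so it suffices to show that $\bar p$ is never eliminated in any round $s$ of the window $[\overline{t}_{j(t)-1}, t]$, that is $\inf_{\lambda \in \Lambda_{\bar p}(i)} L_s(\hat{\theta}_s,\lambda) < \alpha_{s,\delta}$ for each such $s$. Because $\theta \in \Lambda_{\bar p}(i)$, the feasible choice $\lambda = \theta$ gives $\inf_{\lambda \in \Lambda_{\bar p}(i)} L_s(\hat{\theta}_s,\lambda) \leq L_s(\hat{\theta}_s,\theta)$, and the first claim of Lemma~\ref{lem:theta-not-elim-in-last-two-phases} bounds the latter by $L_s(\hat{\theta}_s,\theta) < \alpha_{s,\delta}$ precisely on this window. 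Hence $\bar p \in \overline{\cP}_s(i;\alpha_{s,\delta})$ for every $s$ in the intersection, so $\bar p$ survives to $\cP_t^{\mathrm{smp}}(i)$.

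Putting these together, for $i \neq i^\star$ we obtain $\min_{p\in\cP_t^{\mathrm{smp}}(i)}\inf_{\lambda\in\Lambda_p(i)}\sum_{k\in[K]} \omega^k\KL_k(\theta,\lambda) \leq \inf_{\lambda\in\Lambda_{\bar p}(i)}\sum_{k\in[K]}\omega^k\KL_k(\theta,\lambda) = 0$, while the left-hand side is nonnegative, which closes the argument. The one delicate point — and the step I would be most careful about — is the stability of the $\theta$-containing piece at sampling: it relies crucially on the reset mechanism, since it is exactly the resets that guarantee the window $[\overline{t}_{j(t)-1}, t]$ on which $L_s(\hat{\theta}_s,\theta)$ is controlled by $\alpha_{s,\delta}$ through Lemma~\ref{lem:theta-not-elim-in-last-two-phases}. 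Without resets the intersection could reach back to early rounds where the concentration bound is too weak, and the piece containing $\theta$ might spuriously appear eliminated even under the good event. I would also flag the implicit requirement $t \geq \overline{t}_1$, so that the two-reset window is well defined, matching how the lemma is invoked in the proof of Assumption~\ref{asm:sampling-rule-v2}.
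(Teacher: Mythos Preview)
Your proposal is correct and follows essentially the same approach as the paper: split on $i = i^\star$ versus $i \neq i^\star$, observe that $\theta$ lies in some piece $\bar p$ of the wrong answer, use Lemma~\ref{lem:theta-not-elim-in-last-two-phases} to show that piece survives in $\cP_t^{\mathrm{smp}}(i)$ over the window $[\overline{t}_{j(t)-1}, t]$, and conclude that the right-hand side is at most zero. Your additional remarks about the role of the resets and the implicit requirement $t \geq \overline{t}_1$ are accurate and match how the lemma is actually used downstream.
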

\begin{proof}
The statement follows trivially if $i=i^\star$. So suppose $i\neq i^\star$. Since $i$ is not the answer of $\theta$, by the union property of the decomposition into pieces, there exists $p\in\cP(i)$ such that $\theta\in\Lambda_p(i)$. By Lemma \ref{lem:theta-not-elim-in-last-two-phases}, for any $\overline{t}_{j(t)-1} \leq s \leq t$,
\begin{align*}
\inf_{\lambda\in\Lambda_p(i)} L_{s}(\hat{\theta}_{s},\lambda) \leq L_{s}(\hat{\theta}_{s},\theta) < \alpha_{s,\delta}.
\end{align*}
This implies that $p\in\cP_{t}^{\mathrm{smp}}(i)$ since such set is defined as the intersection of all active sets from $\overline{t}_{j(t)-1}$ to $t$. Finally, we conclude that
\begin{align*}
\min_{p\in\cP_{t}^{\mathrm{smp}}(i)}\inf_{\lambda \in \Lambda_p(i)} \sum_{k\in[K]} \omega^k\KL_k(\theta,\lambda) \leq \sum_{k\in[K]} \omega^k\KL_k(\theta,\theta) = 0,
\end{align*}
and thus our result follows trivially.
\end{proof}

\begin{lemma}\label{lem:change-inf-from-ihat}
Under event $E_t$, for any $\overline{t}_{j(t)-1}  \leq s \leq t$, $i\in\cI$, and $\omega\in\Delta_K$,
\begin{align*}
 \min_{p\in\cP_{s}^{\mathrm{smp}}(i^\star(\hat{\theta}_{s}))}\inf_{\lambda \in \Lambda_p(i^\star(\hat{\theta}_{s}))}  \sum_{k\in[K]} \omega^k \KL_k(\hat{\theta}_{s},\lambda)
 \geq \min_{p\in\cP_{s}^{\mathrm{smp}}(i)}\inf_{\lambda \in \Lambda_p(i)} \sum_{k\in[K]} \omega^k\KL_k(\hat{\theta}_{s},\lambda).
\end{align*}
\end{lemma}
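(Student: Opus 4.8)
The plan is to mirror the proof of Lemma~\ref{lem:change-inf-from-istar}, exploiting that $i^\star(\hat{\theta}_s)$ is by definition the empirical answer at time $s$, so that $\hat{\theta}_s$ lies inside every \emph{other} answer's alternative. First, if $i = i^\star(\hat{\theta}_s)$ the two sides coincide and there is nothing to prove, so assume $i \neq i^\star(\hat{\theta}_s)$. Since the left-hand side is a nonnegative combination of KL divergences it is $\geq 0$, hence it suffices to show that the right-hand side equals $0$.

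To force the right-hand side to vanish, I would exhibit a single \emph{active} piece of $i$ that contains $\hat{\theta}_s$. Because $i \neq i^\star(\hat{\theta}_s)$ we have $\hat{\theta}_s \in \Lambda(i) = \bigcup_{p\in\cP(i)}\Lambda_p(i)$, so there is some $\bar{p}\in\cP(i)$ with $\hat{\theta}_s\in\Lambda_{\bar{p}}(i)$. If $\bar{p}$ is still active for the sampling rule, i.e. $\bar{p} \in \cP_s^{\mathrm{smp}}(i)$, then choosing $\lambda = \hat{\theta}_s$ in the inner infimum yields
\begin{align*}
\min_{p\in\cP_{s}^{\mathrm{smp}}(i)}\inf_{\lambda \in \Lambda_p(i)} \sum_{k\in[K]} \omega^k\KL_k(\hat{\theta}_{s},\lambda) \leq \sum_{k\in[K]}\omega^k \KL_k(\hat{\theta}_s,\hat{\theta}_s) = 0,
\end{align*}
and the claim follows from nonnegativity of the left-hand side.

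It then remains to verify $\bar{p} \in \cP_s^{\mathrm{smp}}(i)$. Recall that $\cP_s^{\mathrm{smp}}(i)$ is the intersection of the sets $\overline{\cP}_{s'}(i;\alpha_{s',\delta})$ over the reset window ending at $s$, so I must check that $\bar{p}$ is never eliminated, i.e. $\inf_{\lambda\in\Lambda_{\bar{p}}(i)} L_{s'}(\hat{\theta}_{s'},\lambda) < \alpha_{s',\delta}$ for every $s'$ in that window. Since $\hat{\theta}_s\in\Lambda_{\bar{p}}(i)$ I bound $\inf_{\lambda\in\Lambda_{\bar{p}}(i)} L_{s'}(\hat{\theta}_{s'},\lambda) \leq L_{s'}(\hat{\theta}_{s'},\hat{\theta}_s)$, and the second part of Lemma~\ref{lem:theta-not-elim-in-last-two-phases} controls exactly this quantity, giving $L_{s'}(\hat{\theta}_{s'},\hat{\theta}_s) < \alpha_{s',\delta}$ under $E_t$.

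The delicate point, and the main obstacle, is the bookkeeping over reset windows. Lemma~\ref{lem:theta-not-elim-in-last-two-phases} is stated for $s'$ ranging over $[\overline{t}_{j(t)-1}, s]$, whereas the definition of $\cP_s^{\mathrm{smp}}(i)$ intersects back to the second-to-last reset $\overline{t}_{j(s)-1}$ before $s$. I would check that the hypothesis $\overline{t}_{j(t)-1}\leq s\leq t$ forces $j(s)\geq j(t)-1$, so that this window is the relevant one, and that comparing the thresholds $\beta_{s',1/s'^2}$ at different times via Lemma~\ref{lem:new_beta-diff-between-phases} stays within the slack built into $\alpha_{s',\delta}$. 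This is precisely the reason the reset schedule and the inflated threshold $\alpha$ were introduced; it is the only nontrivial estimate, the remainder being the short ``$\hat{\theta}_s$ is a feasible alternative'' argument above.
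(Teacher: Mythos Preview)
Your proposal is correct and follows essentially the same argument as the paper: reduce to the case $i\neq i^\star(\hat{\theta}_s)$, pick a piece $\bar p\in\cP(i)$ containing $\hat{\theta}_s$, invoke the second part of Lemma~\ref{lem:theta-not-elim-in-last-two-phases} to conclude $\bar p\in\cP_s^{\mathrm{smp}}(i)$, and deduce that the right-hand side vanishes. The paper's proof glosses over the reset-window bookkeeping you flag as the ``delicate point''; your explicit awareness of the $j(s)$ versus $j(t)$ discrepancy is, if anything, more careful than the paper's one-line justification.
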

\begin{proof}
The proof is very similar to the one of Lemma \ref{lem:change-inf-from-istar}. The statement follows trivially if $i=i^\star(\hat{\theta}_{s})$. So suppose $i\neq i^\star(\hat{\theta}_{s})$. Since $i$ is not the answer of $\hat{\theta}_{s}$, by the union property of the decomposition into pieces, there exists $p\in\cP(i)$ such that $\hat{\theta}_{s}\in\Lambda_p(i)$. By Lemma \ref{lem:theta-not-elim-in-last-two-phases}, for any $\overline{t}_{j(t)-1} \leq s'\leq s \leq t$,
\begin{align*}
\inf_{\lambda\in\Lambda_p(i)} L_{s'}(\hat{\theta}_{s'},\lambda) \leq L_{s'}(\hat{\theta}_{s'},\hat{\theta}_s) < \alpha_{s',\delta}.
\end{align*}
This implies that $p\in\cP_{s}^{\mathrm{smp}}(i)$ since such set is defined as the intersection of all active sets from $\overline{t}_{j(t)-1}$ to $s$. Therefore, we conclude that
\begin{align*}
\min_{p\in\cP_{s}^{\mathrm{smp}}(i)}\inf_{\lambda \in \Lambda_p(i)} \sum_{k\in[K]} \omega^k\KL_k(\hat{\theta}_{s},\lambda) \leq \sum_{k\in[K]} \omega^k\KL_k(\hat{\theta}_{s},\hat{\theta}_{s}) = 0,
\end{align*}
and thus our result follows trivially.
\end{proof}


\section{Experiment Details and Additional Results}
\label{app:experiments}

\subsection{Reproducibility details}

We provide the main details to reproduce our experiments. For all the details, we refer the reader to our implementation at \url{https://github.com/AndreaTirinzoni/bandit-elimination}.

In all experiments, we used $\delta=0.01$ and a heuristic threshold $\beta_{t,\delta} = \log(1/\delta) + \log(1+t)$ for all elimination rules and LLR stopping. This is slightly larger than the heuristic threshold proposed by \cite{garivier2016optimal} and adopted in many recent works. We implemented the elimination rules as described in Appendix \ref{app:problems}. When using elimination at both stopping and sampling, we maintained only one set of active pieces (the one for stopping) instead of keeping a separate set with very lazy resets for sampling as suggested by theory. That set is shared by both sampling and stopping rules, and is never reset.

\paragraph{Computational infrastructure}

All experiments were run on a Dell XPS 13 laptop with an Intel Core i7-7560U (2.40GHz) CPU and 8GB of RAM.

\subsection{Bandit instances}\label{app:instances}

We provide details on how we generated the bandit instances considered in the experiments presented in the main paper and later in this section.

\paragraph{Linear instances (experiments of Figure \ref{fig:all} and Table \ref{tab:lin_elim})}

We set $K=50$ and $d=10$. The true parameter is $\theta = [1,1,\dots,1]^T$, while we generated the arm features randomly. The arm feature of the first arm is $\phi_1 = [1,0,0,\dots,0]^T$. Then, up to reaching 50 arms, we repeated the following procedure. First, we generated a 3-dimensional vector $v\in\mathbb{R}^3$ by drawing its elements uniformly in $[-1,1]^3$ and then normalizing to have unit norm. Then, we added 3 feature vectors $v_1 = [0,v,0,0,0,0,0,0]$, $v_2 = [0,0,0,0,v,0,0,0]$, and $v_3 = [0,0,0,0,0,0,0,v]$, but only if $v_1^T\theta \in [0, 0.8]$. In this way, we obtain linear instances with 50 arms where arm $1$ is optimal with value $\mu_1(\theta) = 1$, while all other arms have minimum sub-optimality of $0.2$ and maximum sub-optimality gap of $0.8$.

\paragraph{Linear instances (experiments of Table \ref{tab:lin_all})}

We set $K=50$ and $d=20$. The first 10 arms are set to the canonical basis of $\mathbb{R}^{10}$, i.e., $\phi_k = e_k$ for $k=1,\dots,10$. The generation of the true parameter $\theta$ and of the remaining 40 arms is slightly different from BAI/Top-m and OSI.

For BAI and Top-m, the true parameter $\theta$ has the first element equal to $1$, elements from the second to the fifth equal to $0.9$, and elements from the sixth to the tenth equal to $0.8$. The remaining 10 elements are uniformly drawn in $[-0.5,0.5]^{10}$. The remaining 40 arms are randomly generated as follows. First, we draw a vector $v$ uniformly in $[-1,1]^{20}$ and normalize it to have unit norm. Then, if $v^T\theta \leq 0.5$, we add $v$ to the set of arms. Otherwise, we reject the vector and keep repeating this procedure until we reach a total of 50 arms. In this way, we obtain random linear instances where the first arm is optimal with value $\mu_1(\theta) = 1$, the next 9 arms are hard to discriminate from it since they have small gap (either $0.1$ or $0.2$), and all remaining 40 arms have moderate to large gap (at least $0.5$) and are thus easy to eliminate.

For OSI, the true parameter $\theta$ has the first ten elements uniformly drawn in $([-0.2,-0.1]\cup[0.1,0.2])^{10}$ and the second ten elements uniformly drawn in $[-0.5,0.5]^{10}$. Similarly as before, to generate the remaining 40 arms we first draw a vector $v$ uniformly in $[-1,1]^{20}$ and normalize it to have unit norm. Then, if $|v^T\theta| \geq 0.5$, we add $v$ to the set of arms. Otherwise, we reject the vector and keep repeating this procedure until we reach a total of 50 arms. We thus obtain random linear instances where the first 10 arms are hard to learn since they have small gap (i.e., the absolute mean, which is between $0.1$ and $0.2$), and all remaining 40 arms have moderate to large gap (at least $0.5$) and are thus easy to eliminate.

\paragraph{Unstructured instances (experiments of Appendix \ref{app:uns_results})}

We used $K=40$ arms. For BAI and Top-m, the mean reward of the first 5 arms is $\mu_1 = 1$, $\mu_2 = 0.9$, $\mu_3 = 0.8$, $\mu_4 = 0.7$, and $\mu_5 = 0.6$. For all remaining arms the mean reward is uniformly drawn in $[0,0.5]$. For OSI, the mean reward of the first 4 arms is $\mu_1 = 0.1$, $\mu_2 = -0.2$, $\mu_3 = 0.3$, and $\mu_4 = -0.4$. For all remaining arms the mean reward is uniformly drawn in $[-1,-0.5]\cup[0.5,1]$.

\subsection{Additional Results}

\subsubsection{Full versus selective elimination}\label{app:full_vs_emp_results}

We report in Table \ref{tab:lin_elim} the full results of the experiment comparing elimination rules (full vs selective) from which we extracted Figure \ref{fig:all}\emph{(middle)}. We recall that the linear instances for this experiment were generated as explained in the first paragraph of Appendix \ref{app:instances}. We did not compare full and selective elimination rules on OSI since, as explained in Appendix \ref{app:problems}, they are actually equivalent in such a setting.

While we saw in Figure \ref{fig:all}\emph{(middle)} that the full elimination rule allows eliminating some arms earlier than the selective one, we notice from Table \ref{tab:lin_elim} that the former rule actually yields no advantage in terms of sample complexity. Moreover, its computational overhead makes it much slower than the selective elimination rule. Therefore, in practice we suggest using the selective elimination rule, which always yields reduced computation times and often improved sample complexity.

\begin{table*}[t!]
\centering
\small
\begin{tabular}{@{}clcccccc@{}} 
\toprule
 & & \multicolumn{2}{c}{No elim. (LLR)} & \multicolumn{2}{c}{Selective elim.} & \multicolumn{2}{c}{Full elim.} \\
\cmidrule(r){3-8}
& Algorithm & Samples & Time & Samples & Time & Samples & Time \\
\cmidrule{1-8}
\multirow{11}{*}{\rotatebox[origin=c]{90}{BAI}} 
& LinGapE & $4.51 \pm 1.3$ & $0.19$ & $4.49 \pm 1.3$ & $0.17$ & $4.49 \pm 1.3$ & $0.57$ \\
& LinGapE + elim & & & $4.16 \pm 1.4$ & $0.15$ & $4.17 \pm 1.4$ & $0.58$ \\
& LinGame & $5.28 \pm 1.7$ & $0.21$ & $5.09 \pm 1.8$ & $0.19$ & $5.09 \pm 1.8$ & $0.6$ \\
& LinGame + elim & & & $4.05 \pm 1.2$ & $0.17$ & $4.05 \pm 1.2$ & $0.65$ \\
& FWS & $4.68 \pm 4.2$ & $0.84$ & $4.68 \pm 4.2$ & $0.82$ & $4.68 \pm 4.2$ & $1.39$ \\
& FWS + elim & & & $4.21 \pm 1.4$ & $0.58$ & $4.21 \pm 1.4$ & $1.16$ \\
& Lazy TaS & $9.99 \pm 8.8$ & $0.45$ & $9.75 \pm 8.9$ & $0.45$ & $9.75 \pm 8.9$ & $0.78$ \\
& Lazy TaS + elim & & & $8.7 \pm 8.9$ & $0.38$ & $8.7 \pm 8.9$ & $0.73$ \\
& Oracle & $6.65 \pm 1.8$ & $0.04$ & $6.55 \pm 1.9$ & $0.02$ & $6.55 \pm 1.9$ & $0.31$ \\
& XY-Adaptive & & & & & $13.89 \pm 6.0$ & $2.23$ \\
& RAGE & & & & & $16.28 \pm 6.2$ & $0.02$ \\
\cmidrule{1-8}
\multirow{10}{*}{\rotatebox[origin=c]{90}{Top-m ($m=3$)}} 
& m-LinGapE & $6.26 \pm 1.2$ & $0.29$ & $6.21 \pm 1.2$ & $0.24$ & $6.21 \pm 1.2$ & $1.35$ \\
& m-LinGapE + elim & & & $5.77 \pm 1.2$ & $0.19$ & $5.77 \pm 1.2$ & $1.29$ \\
& MisLid & $7.06 \pm 1.4$ & $0.34$ & $6.81 \pm 1.5$ & $0.27$ & $6.81 \pm 1.5$ & $1.48$ \\
& MisLid + elim & & & $5.89 \pm 1.1$ & $0.22$ & $5.89 \pm 1.1$ & $1.42$ \\
& FWS & $5.91 \pm 1.7$ & $1.51$ & $5.9 \pm 1.7$ & $1.46$ & $5.9 \pm 1.7$ & $2.66$ \\
& FWS + elim & & & $5.84 \pm 1.7$ & $0.83$ & $5.84 \pm 1.7$ & $2.02$ \\
& Lazy TaS & $13.1 \pm 6.5$ & $0.71$ & $12.85 \pm 6.4$ & $0.67$ & $12.85 \pm 6.4$ & $1.57$ \\
& Lazy TaS + elim & & & $11.34 \pm 6.3$ & $0.56$ & $11.34 \pm 6.3$ & $1.47$ \\
& Oracle & $8.74 \pm 1.8$ & $0.1$ & $8.65 \pm 1.8$ & $0.04$ & $8.65 \pm 1.8$ & $1.02$ \\
& LinGIFA & $5.58 \pm 1.1$ & $1.8$ & $5.57 \pm 1.1$ & $1.75$ & $5.57 \pm 1.1$ & $2.68$ \\
\bottomrule
\end{tabular}
\caption{Experiments on linear instances with $K=50$ and $d=10$. The "Time" columns report average times per iteration in milliseconds (i.e., the total time the algorithm took divided by the number of samples). Each entry reports the mean across $100$ runs plus/minus standard deviation (which is omitted for compute times due to space constraints). The ``+ elim'' variant of some algorithms indicates that the corresponding sampling rule is combined with elimination. Samples are scaled down by a factor $10^3$.}\label{tab:lin_elim}
\end{table*}

\subsubsection{Unstructured instances}\label{app:uns_results}

We report the results on unstructured bandit instances (generated according to the procedure of Appendix \ref{app:instances}) in Table \ref{tab:uns_all}. The algorithm k-Learner is the unstructured variant of LinGame proposed by \cite{degenne2019non}. We note that the results are coherent with those for linear instances presented in the main paper. In particular, we observe a reduction in computation times when combining adaptive algorithms with selective elimation. The reduction is however less evident than in the linear case. This is expected since, in general, eliminations are easier in structured problems than in unstructured ones. We also note that combining sampling rules with selective elimination slightly improves the sample complexity of all algorithms.

\begin{table*}[t!]
\centering
\small
\begin{tabular}{@{}clcccccc@{}} 
\toprule
 & & \multicolumn{2}{c}{No elim. (LLR)} & \multicolumn{2}{c}{Elim. stopping} & \multicolumn{2}{c}{Elim. stopping + sampling} \\
\cmidrule(r){3-8}
& Algorithm & Samples & Time & Samples & Time & Samples & Time \\
\cmidrule{1-8}
\multirow{7}{*}{\rotatebox[origin=c]{90}{BAI}} 
& k-Learner & $18.76 \pm 6.5$ & $0.49$ & $18.12 \pm 6.6$ & $0.44$ & $14.82 \pm 4.6$ & $0.4$ \\
& FWS & $14.5 \pm 4.6$ & $1.25$ & $14.44 \pm 4.7$ & $1.24$ & $13.94 \pm 4.4$ & $1.16$ \\
& Lazy TaS & $26.18 \pm 8.0$ & $0.32$ & $24.78 \pm 7.7$ & $0.31$ & $20.66 \pm 7.4$ & $0.33$ \\
& Oracle & $27.49 \pm 3.7$ & $0.07$ & $27.0 \pm 3.7$ & $0.04$ & & \\
& LUCB & $14.2 \pm 5.2$ & $0.11$ & $14.18 \pm 5.2$ & $0.06$ & $13.57 \pm 4.7$ & $0.06$ \\
& UGapE & $15.13 \pm 5.0$ & $0.43$ & $15.13 \pm 5.0$ & $0.39$ & & \\
& Racing & & & & & $34.55 \pm 7.6$ & $0.01$ \\
\cmidrule{1-8}
\multirow{7}{*}{\rotatebox[origin=c]{90}{Top-m ($m=3$)}} 
& k-Learner & $25.84 \pm 6.2$ & $0.67$ & $25.06 \pm 6.3$ & $0.57$ & $17.65 \pm 4.9$ & $0.51$ \\
& FWS & $17.68 \pm 4.7$ & $2.52$ & $17.67 \pm 4.7$ & $2.48$ & $17.63 \pm 4.6$ & $2.0$ \\
& Lazy TaS & $38.89 \pm 10.4$ & $0.5$ & $37.84 \pm 10.7$ & $0.43$ & $27.74 \pm 6.4$ & $0.46$ \\
& Oracle & $34.17 \pm 4.5$ & $0.15$ & $33.68 \pm 4.9$ & $0.07$ & & \\
& LUCB & $17.61 \pm 4.6$ & $0.24$ & $17.58 \pm 4.5$ & $0.13$ & $17.15 \pm 5.4$ & $0.13$ \\
& UGapE & $17.87 \pm 4.3$ & $0.54$ & $17.87 \pm 4.3$ & $0.43$ & & \\
& Racing & & & & & $22.67 \pm 3.1$ & $0.01$ \\
\cmidrule{1-8}
\multirow{5}{*}{\rotatebox[origin=c]{90}{OSI}}
& k-Learner & $8.55 \pm 1.7$ & $0.61$ & $8.38 \pm 1.8$ & $0.55$ & $5.43 \pm 1.2$ & $0.52$ \\
& FWS & $5.54 \pm 1.3$ & $1.48$ & $5.53 \pm 1.3$ & $1.47$ & $5.47 \pm 1.4$ & $1.4$ \\
& Lazy TaS & $12.83 \pm 3.1$ & $0.72$ & $12.27 \pm 3.1$ & $0.7$ & $8.74 \pm 1.8$ & $0.79$ \\
& Oracle & $11.55 \pm 1.6$ & $0.07$ & $11.41 \pm 1.6$ & $0.04$ & & \\
& LUCB & $5.5 \pm 1.4$ & $0.11$ & $5.5 \pm 1.4$ & $0.1$ & $5.49 \pm 1.4$ & $0.1$ \\
\bottomrule
\end{tabular}
\caption{Experiments on unstructured instances with $K=40$. The "Time" columns report average times per iteration in milliseconds (i.e., the total time the algorithm took divided by the number of samples). Each entry reports the mean across $100$ runs plus/minus standard deviation (which is omitted for compute times due to space constraints). Algorithms for which the third column is missing cannot be combined with elimination at sampling, while algorithms for which the first two columns are missing are natively elimination-based. Samples are scaled down by a factor $10^3$.}\label{tab:uns_all}
\end{table*}


\section{Examples}

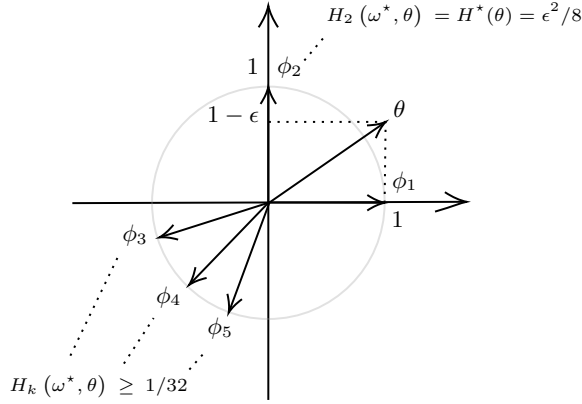
\begin{figure}[t]

\tikzset{every picture/.style={line width=0.75pt}} 
\centering
\begin{tikzpicture}[x=0.75pt,y=0.75pt,yscale=-1,xscale=1]

\draw    (211.5,131) -- (408.33,130.34) ;
\draw [shift={(410.33,130.33)}, rotate = 179.81] [color={rgb, 255:red, 0; green, 0; blue, 0 }  ][line width=0.75]    (15.93,-4.9) .. controls (6.95,-2.3) and (3.31,-0.67) .. (0,0) .. controls (3.31,0.67) and (6.95,2.3) .. (10.93,4.9)   ;
\draw    (310.33,230.33) -- (310.33,33.33) ;
\draw [shift={(310.33,31.33)}, rotate = 90] [color={rgb, 255:red, 0; green, 0; blue, 0 }  ][line width=0.75]    (10.93,-4.9) .. controls (6.95,-2.3) and (3.31,-0.67) .. (0,0) .. controls (3.31,0.67) and (6.95,2.3) .. (10.93,4.9)   ;
\draw  [color={rgb, 255:red, 155; green, 155; blue, 155 }  ,draw opacity=0.3 ][line width=0.75]  (251.69,130.83) .. controls (251.69,98.44) and (277.94,72.19) .. (310.33,72.19) .. controls (342.72,72.19) and (368.98,98.44) .. (368.98,130.83) .. controls (368.98,163.22) and (342.72,189.48) .. (310.33,189.48) .. controls (277.94,189.48) and (251.69,163.22) .. (251.69,130.83) -- cycle ;
\draw    (310.92,130.67) -- (366.98,130.83) ;
\draw [shift={(368.98,130.83)}, rotate = 180.16] [color={rgb, 255:red, 0; green, 0; blue, 0 }  ][line width=0.75]    (8.74,-3.92) .. controls (5.56,-1.84) and (2.65,-0.53) .. (0,0) .. controls (2.65,0.53) and (5.56,1.84) .. (8.74,3.92)   ;
\draw    (310.33,130.83) -- (310.33,74.19) ;
\draw [shift={(310.33,72.19)}, rotate = 90] [color={rgb, 255:red, 0; green, 0; blue, 0 }  ][line width=0.75]    (8.74,-3.92) .. controls (5.56,-1.84) and (2.65,-0.53) .. (0,0) .. controls (2.65,0.53) and (5.56,1.84) .. (8.74,3.92)   ;
\draw    (310.33,130.83) -- (257.24,147.73) ;
\draw [shift={(255.33,148.33)}, rotate = 342.35] [color={rgb, 255:red, 0; green, 0; blue, 0 }  ][line width=0.75]    (8.74,-3.92) .. controls (5.56,-1.84) and (2.65,-0.53) .. (0,0) .. controls (2.65,0.53) and (5.56,1.84) .. (8.74,3.92)   ;
\draw    (310.33,130.83) -- (271.72,170.89) ;
\draw [shift={(270.33,172.33)}, rotate = 313.95] [color={rgb, 255:red, 0; green, 0; blue, 0 }  ][line width=0.75]    (8.74,-3.92) .. controls (5.56,-1.84) and (2.65,-0.53) .. (0,0) .. controls (2.65,0.53) and (5.56,1.84) .. (8.74,3.92)   ;
\draw    (310.92,130.67) -- (291.03,184.46) ;
\draw [shift={(290.33,186.33)}, rotate = 290.29] [color={rgb, 255:red, 0; green, 0; blue, 0 }  ][line width=0.75]    (8.74,-3.92) .. controls (5.56,-1.84) and (2.65,-0.53) .. (0,0) .. controls (2.65,0.53) and (5.56,1.84) .. (8.74,3.92)   ;
\draw  [dash pattern={on 0.84pt off 2.51pt}]  (310.5,90) -- (369.5,90) ;
\draw  [dash pattern={on 0.84pt off 2.51pt}]  (368.98,130.83) -- (369.5,90) ;
\draw    (310.92,130.67) -- (367.86,91.14) ;
\draw [shift={(369.5,90)}, rotate = 145.23] [color={rgb, 255:red, 0; green, 0; blue, 0 }  ][line width=0.75]    (8.74,-3.92) .. controls (5.56,-1.84) and (2.65,-0.53) .. (0,0) .. controls (2.65,0.53) and (5.56,1.84) .. (8.74,3.92)   ;
\draw  [dash pattern={on 0.84pt off 2.51pt}]  (328.5,59) -- (339.5,46) ;
\draw  [dash pattern={on 0.84pt off 2.51pt}]  (210.5,208) -- (235.5,156) ;
\draw  [dash pattern={on 0.84pt off 2.51pt}]  (237.5,212) -- (252.5,189) ;
\draw  [dash pattern={on 0.84pt off 2.51pt}]  (270.5,215) -- (278.5,206) ;

\draw (370.98,133.83) node [anchor=north west][inner sep=0.75pt]  [font=\footnotesize] [align=left] {$\displaystyle 1$};
\draw (297.98,56.83) node [anchor=north west][inner sep=0.75pt]  [font=\footnotesize] [align=left] {$\displaystyle 1$};
\draw (277.98,80.83) node [anchor=north west][inner sep=0.75pt]  [font=\footnotesize] [align=left] {$\displaystyle 1-\epsilon $};
\draw (371.98,77.83) node [anchor=north west][inner sep=0.75pt]  [font=\footnotesize] [align=left] {$\displaystyle \theta $};
\draw (371.24,113.42) node [anchor=north west][inner sep=0.75pt]  [font=\footnotesize] [align=left] {$\displaystyle \phi _{1}$};
\draw (313.24,55.42) node [anchor=north west][inner sep=0.75pt]  [font=\footnotesize] [align=left] {$\displaystyle \phi _{2}$};
\draw (235.24,140.42) node [anchor=north west][inner sep=0.75pt]  [font=\footnotesize] [align=left] {$\displaystyle \phi _{3}$};
\draw (252.24,171.42) node [anchor=north west][inner sep=0.75pt]  [font=\footnotesize] [align=left] {$\displaystyle \phi _{4}$};
\draw (278.24,187.42) node [anchor=north west][inner sep=0.75pt]  [font=\footnotesize] [align=left] {$\displaystyle \phi _{5}$};
\draw (338.24,28.42) node [anchor=north west][inner sep=0.75pt]  [font=\scriptsize] [align=left] {$\displaystyle H_{2}\left( \omega ^{\star } ,\theta \right) \ =H^{\star }( \theta ) = \epsilon ^{2} /8$};
\draw (178.24,216.42) node [anchor=north west][inner sep=0.75pt]  [font=\scriptsize] [align=left] {$\displaystyle H_{k}\left( \omega ^{\star } ,\theta \right) \ \geq \ 1/32$};

\end{tikzpicture}
    \caption{BAI instance with $d=2$ and $K=5$. Arm $1$ is optimal with a mean reward of $1$, arm $2$ is $\epsilon$-suboptimal, while all other arms have a large sub-optimality gap. The optimal allocation $\omega^\star = (1/2,1/2,0,\dots)^T$ plays only arm 1 and arm 2 with the same proportions. The distance $H_2(\omega^{\star}, \theta )$ to the alternative piece of the second arm scales as $O(\epsilon^2)$ and fully controls the optimal sample complexity, which in turn is $\Omega(1/\epsilon^2)$. On the other hand, the distance $H_k(\omega^{\star},\theta)$ to the alternative piece of every other arm $k>2$ is large ($\Omega(1)$), which implies that such arms can be eliminated with sample complexity not scaling with $1/\epsilon^2$.}\label{fig:example}
\end{figure}

\subsection{Running example}\label{app:example}

Consider BAI in a Gaussian linear bandit instance with unit variance, $d=2$, and arbitrary number of arms $K \geq 3$ (see Figure \ref{fig:example} for an example with $K=5$). The arm features are $\phi_1 = (1,0)^T$, $\phi_2 = (0,1)^T$, and, for all $i = 3,\dots,K$, $\phi_i = (a_i,b_i)^T$ with $a_i,b_i$ arbitrary values in $(-1,0)$ such that $\|\phi_i\|_2=1$. The true parameter is $\theta = (1,1-\varepsilon)^T$, for $\varepsilon \in (0,1/2)$ a possibly very small value. Arm 1 is optimal with mean $\mu_1(\theta) = 1$, while arm 2 is sub-optimal with mean $\mu_2(\theta) = 1-\varepsilon$. For all other arms $i=3,\dots,K$, $\mu_i(\theta) \leq 0$.

Let $\omega \in \Delta_K$ be any allocation. Recall that in BAI each piece index is simply an arm, and $\cP(i^\star(\theta)) = \cP(1) = \{2,\dots,K\}$. Let $k\in\cP(1)$ be any sub-optimal arm. The distance to the $k$-th alternative piece $H_k(\omega, \theta)$ can be computed in closed form as
\begin{align*}
    H_k(\omega, \theta) = \frac{((\phi_1 - \phi_k)^T \theta)^2}{2\|\phi_1 - \phi_k\|_{V_\omega^{-1}}^2}.
\end{align*}
It can be determined that the optimal allocation, solution to $\argmax_\omega \min_k H_k(\omega, \theta)$, is $\omega^\star = (1/2,1/2,0,\dots,0)^\top$. We will prove this as a consequence of Lemma~\ref{lem:example-dist} below.

The intuition why this example is interesting is as follows. Any correct strategy is required to discriminate between arm 1 and 2 (i.e., to figure out that arm 1 is optimal), which requires roughly $O(1/\varepsilon^2)$ samples from both. An optimal strategy plays these two arms nearly with the same proportions. Since $\phi_1$ and $\phi_2$ form the canonical basis of $\mathbb{R}^2$, the samples collected by this strategy are informative for estimating the mean reward of \emph{every} arm, even those than are not played. Then, since arms $3,\dots,K$ have at least a sub-optimality gap of $1$, an elimination-based strategy discards them with a number of samples not scaling with $1/\varepsilon^2$. This means that a non-elimination strategy runs for $O(1/\varepsilon^2)$ steps over the original problem with $K$ arms, while an elimination-based one quickly reduces the problem to one with only 2 arms. The main impact is computational: since most algorithms need to compute some statistics for each active arm at each round (e.g., closest alternatives, confidence intervals, etc.) and that requires at least one loop over the set of active arms, the computational complexity of a non-elimination algorithm is at least $O(K/\varepsilon^2)$, while the one of an elimination-based variant is roughly $O(K + 1/\varepsilon^2)$, a potentially very large improvement.

\textbf{Remark. } \emph{The fact that certain arms are discarded very early is not only due to their large sub-optimality gap but also to the linear structure of the problem (i.e., to the fact that pulling certain arms provides information about others). In fact, if we consider an equivalent bandit problem with the same mean rewards but with $K$-dimensional features such that $\phi_k = e_k$ (the $k$-th vector of the canonical basis of $\mathbb{R}^K$) for all $k$, then pulling an arm provides no information about the others. In this unstructured case, it is known that an optimal allocation $\omega^\star$ makes all the half-spaces at the same distance. That is, $H_k(\omega, \theta)$ is the same for all $k$ and thus no arm is eliminated early.}

The following result formalizes the intuition that arms $3,\dots,K$ can be eliminated much earlier than arm $2$. It shows that, for any allocation $\omega$ and $k>2$, the distance $H_k(\omega, \theta)$ to the $k$-th alternative piece is at least a factor $1/\varepsilon^2$ larger than the distance $H_2(\omega, \theta)$ to the second alternative piece.

\begin{lemma}\label{lem:example-dist}
    For any $\omega$ and $k=3,\dots,K$, $H_k(\omega, \theta)
    \geq \frac{(1 - \varepsilon)^2}{\varepsilon^2} H_2(\omega, \theta)$.
\end{lemma}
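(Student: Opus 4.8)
The plan is to reduce the claimed ratio inequality to an elementary polynomial inequality in the entries of the $2\times 2$ matrix $V_\omega$, exploiting that $\phi_1,\phi_2$ are the canonical basis vectors and that the remaining arms have both coordinates negative. Write $v_2 := \phi_1 - \phi_2 = (1,-1)^\top$ and $v_k := \phi_1 - \phi_k = (s,t)^\top$, where $s := 1-a_k$ and $t := -b_k$ are both strictly positive (since $a_k,b_k<0$). Setting $\rho := 1-\varepsilon$, I would first record the numerators appearing in the closed form for the piece distances, namely $v_2^\top\theta = \varepsilon$ and $v_k^\top\theta = s + t\rho =: c_k > 0$, so that the common factor $2$ in the denominators cancels and
\begin{align*}
\frac{H_k(\omega,\theta)}{H_2(\omega,\theta)} = \frac{c_k^2}{\varepsilon^2}\cdot\frac{\|v_2\|_{V_\omega^{-1}}^2}{\|v_k\|_{V_\omega^{-1}}^2}.
\end{align*}
Hence the claim is equivalent to $c_k^2\,\|v_2\|_{V_\omega^{-1}}^2 \ge \rho^2\,\|v_k\|_{V_\omega^{-1}}^2$.

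Next I would parametrize $V_\omega = \begin{pmatrix} p & r \\ r & q\end{pmatrix}$ with $p = \omega^1 + \sum_{k\ge 3}\omega^k a_k^2$, $q = \omega^2 + \sum_{k\ge 3}\omega^k b_k^2$, and $r = \sum_{k\ge 3}\omega^k a_k b_k$. The single crucial sign observation is that $r \ge 0$, because $a_k b_k > 0$ for every $k\ge 3$ (a product of two negatives); also $p,q\ge 0$ and $\det V_\omega = pq - r^2 > 0$ whenever $V_\omega$ is invertible, which is needed for the distances to be finite. Using $V_\omega^{-1} = (pq-r^2)^{-1}\begin{pmatrix} q & -r \\ -r & p\end{pmatrix}$ gives $\|v_2\|_{V_\omega^{-1}}^2 = (p+q+2r)/\det V_\omega$ and $\|v_k\|_{V_\omega^{-1}}^2 = (q s^2 + p t^2 - 2 r s t)/\det V_\omega$. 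The determinant cancels in the ratio, so the inequality to prove becomes the determinant-free statement
\begin{align*}
c_k^2\,(p + q + 2r) \ge \rho^2\,(q s^2 + p t^2 - 2 r s t).
\end{align*}

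Finally, I would view the difference of the two sides as a linear form in $p,q,r$ and check that each coefficient is nonnegative. Expanding $c_k^2 = s^2 + 2st\rho + t^2\rho^2$, the coefficient of $p$ is $s(s+2t\rho)$, the coefficient of $q$ is $s^2(1-\rho^2) + 2st\rho + t^2\rho^2$, and the coefficient of $r$ is $2s^2 + 4st\rho + 2t^2\rho^2 + 2st\rho^2$. All three are positive; the only term whose sign is not immediate is $s^2(1-\rho^2)$, which is positive precisely because $\rho = 1-\varepsilon < 1$. Since $p,q,r\ge 0$, the linear form is nonnegative, yielding the claim (and, as a by-product, $\min_k H_k = H_2$ for every $\omega$, which pins down $\omega^\star$). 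I do not expect a genuine obstacle: the real content is recognizing that, after clearing the common positive determinant, the inequality collapses to a form that is \emph{linear and sign-definite} in the entries of $V_\omega$, so the apparent dependence on $\omega$ disappears. The one delicate point is the sign of the off-diagonal coefficient $r$, which is where the hypothesis that the far arms lie in the third quadrant ($a_k,b_k<0$) is essential; without it the coefficient multiplying $r$ would still be positive, but $r$ could be negative and the elementary argument would fail.
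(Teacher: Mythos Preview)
Your proof is correct, but it takes a different route from the paper's. The paper argues geometrically: it observes that for $k\ge 3$ the half-space $\Lambda_k(i^\star)$ is contained in $\{\eta_1\le 0\}\cup\{\eta_2\le 0\}$ (because $a_k,b_k<0$ forces $\phi_k^\top\eta<0<\phi_1^\top\eta$ whenever $\eta_1,\eta_2>0$), so $H_k(\omega,\theta)\ge \tfrac12(1-\varepsilon)^2\min\{\|\phi_1\|_{V_\omega}^2,\|\phi_2\|_{V_\omega}^2\}$. It then uses the trace identity $\mathrm{tr}(V_\omega)=1$ (from $\|\phi_k\|_2=1$) to write the diagonal as $(a,1-a)$, bounds $H_2$ above by $\tfrac{\varepsilon^2}{2}a(1-a)$, and closes with $\min\{a,1-a\}\ge a(1-a)$.

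Your approach is a direct algebraic verification: you cancel the determinant in the ratio and show the resulting inequality is linear in the entries $p,q,r$ of $V_\omega$ with all three coefficients nonnegative. This is tighter (no slack from $\min\ge$ product) and more robust, since you never use the unit-norm hypothesis or the trace constraint; you only need $s,t>0$, $\rho<1$, and the sign of $r$. The paper's argument is more conceptual and would generalize more easily to other geometric configurations of the far arms, while yours makes the dependence on the specific structure of the $2\times 2$ problem completely explicit. One small wording issue: the distances $H_k$ are always finite even when $V_\omega$ is singular; what you actually need invertibility for is the closed-form ratio, and in the singular case $H_2(\omega,\theta)=0$ (the kernel direction of $V_\omega$ is never orthogonal to $\phi_1-\phi_2$ once $r\ge 0$), so the inequality is trivial there.
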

\begin{proof}
For $k > 2$, $\Lambda_k(i^\star) \subseteq \{\eta \mid \eta_1 \le 0\} \cup \{\eta \mid \eta_2 \le 0\}$. Hence the distance from $\theta$ to $\Lambda_k(i^\star)$ in $V_\omega$-norm is greater than the minimal distance to each of those two half spaces. The closest points to $\theta$ in those two half spaces are $(0, 1-\varepsilon)^\top$ and $(1, 0)^\top$, respectively. We get
\begin{align*}
H_k(\omega, \theta)
&\ge \min \left\{ \frac{1}{2}\Vert \theta - (0, 1-\varepsilon)^\top \Vert_{V_\omega}^2, \frac{1}{2}\Vert \theta - (1, 0)^\top \Vert_{V_\omega}^2 \right\}
\\
&= \frac{1}{2}\min \{ \Vert \phi_1 \Vert_{V_\omega}^2, (1 - \varepsilon)^2\Vert \phi_2 \Vert_{V_\omega}^2 \}
\ge \frac{1}{2}(1 - \varepsilon)^2\min \{ \Vert \phi_1 \Vert_{V_\omega}^2, \Vert \phi_2 \Vert_{V_\omega}^2 \}.
\end{align*}

Since we have $2\times2$ matrices, let us get concrete and write $V_\omega = \left(\begin{array}{cc}a & b \\ b & d\end{array}\right)$. By our hypotheses on all $\phi_k$, $b \ge 0$. . We get that $d = 1 - a$ by computing the trace of $V_\omega$: $\mathrm{tr}(V_\omega) = \mathrm{tr}(\sum_k \omega^k \phi_k \phi_k^\top) = \sum_k \omega^k \mathrm{tr}(\phi_k \phi_k^\top) = \sum_k \omega^k \phi_k^\top\phi_k = 1$.

Then $V_\omega^{-1} = \frac{1}{a(1 - a)-b^2}\left(\begin{array}{cc}1 - a & -b \\ -b & a\end{array}\right)$, $\Vert \phi_1 \Vert_{V_\omega}^2 = a$, $\Vert \phi_2 \Vert_{V_\omega}^2 = 1 - a$, $\Vert \phi_1 - \phi_2\Vert^2_{V_\omega^{-1}} = \frac{1}{a(1 - a)-b^2}(1 + 2 b)$. Thus,
\begin{align*}
H_2(\omega, \theta)
= \frac{\varepsilon^2}{2\Vert \phi_1 - \phi_2\Vert^2_{V_\omega^{-1}}}
= \frac{\varepsilon^2 (a(1-a)-b^2)}{2(1 + 2 b)}
\le \frac{\varepsilon^2}{2} a(1-a),
\end{align*}
and
\begin{align*}
H_k(\omega, \theta)
&\ge \frac{1}{2}(1 - \varepsilon)^2\min \{ \Vert \phi_1 \Vert_{V_\omega}^2, \Vert \phi_2 \Vert_{V_\omega}^2 \}
= \frac{1}{2}(1 - \varepsilon)^2\min \{ 1 - a, a \}
\ge \frac{1}{2}(1 - \varepsilon)^2 a (1 - a).
\end{align*}

We finally obtain a comparison between $H_k(\omega,\theta)$ for $k > 2$ and $H_2(\omega, \theta)$:
\begin{align*}
H_k(\omega, \theta)
&\ge \frac{1}{2}(1 - \varepsilon)^2 a (1 - a)
\ge \frac{(1 - \varepsilon)^2}{\varepsilon^2} H_2(\omega, \theta)
\: .
\end{align*}

\end{proof}

\begin{corollary}
The optimal allocation for this problem is $\omega^\star = (1/2,1/2,0,\dots,0)^\top$.
\end{corollary}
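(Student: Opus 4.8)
The plan is to first reduce the max-min problem defining $H^\star(\theta)$ to a pure maximization of the single quantity $H_2(\omega,\theta)$, and then to maximize that quantity in closed form using the $2\times 2$ computation already carried out in the proof of Lemma~\ref{lem:example-dist}.

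First I would observe that since $\varepsilon\in(0,1/2)$ we have $1-\varepsilon>1/2>\varepsilon$, so the constant $(1-\varepsilon)^2/\varepsilon^2$ appearing in Lemma~\ref{lem:example-dist} is strictly larger than $1$. Because $H_2(\omega,\theta)\ge 0$, Lemma~\ref{lem:example-dist} then gives $H_k(\omega,\theta)\ge H_2(\omega,\theta)$ for every $k\ge 3$ and every $\omega\in\Delta_K$. Consequently the minimum over the pieces $\cP(1)=\{2,\dots,K\}$ is always attained at $k=2$, i.e.
\begin{align*}
\min_{k\in\cP(1)} H_k(\omega,\theta) = H_2(\omega,\theta)
\qquad\text{for all }\omega\in\Delta_K,
\end{align*}
and therefore $H^\star(\theta)=\max_{\omega\in\Delta_K} H_2(\omega,\theta)$.

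Next I would maximize $H_2(\omega,\theta)$ directly. Writing $V_\omega=\bigl(\begin{smallmatrix}a & b\\ b & 1-a\end{smallmatrix}\bigr)$ as in the proof of Lemma~\ref{lem:example-dist}, the same computation yields $H_2(\omega,\theta)=\varepsilon^2\bigl(a(1-a)-b^2\bigr)/\bigl(2(1+2b)\bigr)$. The key structural fact is that $b\ge 0$: only the arms $k\ge 3$ contribute off-diagonal terms to $V_\omega$, and each such term equals $\omega^k a_k b_k$ with $a_k,b_k\in(-1,0)$, hence $a_k b_k>0$. Using $b\ge 0$ I would bound
\begin{align*}
H_2(\omega,\theta)
= \frac{\varepsilon^2\bigl(a(1-a)-b^2\bigr)}{2(1+2b)}
\le \frac{\varepsilon^2 a(1-a)}{2}
\le \frac{\varepsilon^2}{8},
\end{align*}
where the first inequality drops $-b^2\le 0$ in the numerator and $2b\ge 0$ in the denominator, and the second uses $a(1-a)\le 1/4$. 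Finally I would check that $\omega^\star=(1/2,1/2,0,\dots,0)^\top$ makes both inequalities tight: it gives $b=0$ and $a=1/2$, so $H_2(\omega^\star,\theta)=\varepsilon^2/8$. Hence $\omega^\star$ attains the upper bound on $H^\star(\theta)$ and is optimal.

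I expect the only delicate point to be the reduction step: it hinges on the sign of the factor $(1-\varepsilon)^2/\varepsilon^2-1$, which is positive precisely because $\varepsilon<1/2$, and on confirming that the off-diagonal entry $b$ of $V_\omega$ is nonnegative, which follows from the placement of all features $\phi_k$ ($k\ge 3$) in the third quadrant. Once these two sign facts are in hand, the remaining maximization is the elementary concave problem $\max_a a(1-a)$.
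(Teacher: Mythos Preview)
Your proposal is correct and follows essentially the same route as the paper: both reduce the max--min to $\max_\omega H_2(\omega,\theta)$ via Lemma~\ref{lem:example-dist} (using $\varepsilon<1/2$ so that $(1-\varepsilon)^2/\varepsilon^2>1$), then optimize $H_2$ through the $2\times 2$ parametrization $V_\omega=\bigl(\begin{smallmatrix}a & b\\ b & 1-a\end{smallmatrix}\bigr)$ with $b\ge 0$, concluding that $a=1/2$, $b=0$ is optimal and is realized by $\omega^\star$. Your write-up is slightly more explicit about why $b\ge 0$ and about the upper-bound chain, but the argument is the same.
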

\begin{proof}
By Lemma~\ref{lem:example-dist}, since $\varepsilon < 1/2$ the minimum over $H_k$ is attained by $H_2$ no matter what $\omega$ is, hence the optimal allocation belongs to $\argmax_\omega H_2(\omega, \theta)$. With the notation of that lemma, $H_2(\omega, \theta) = \frac{\varepsilon^2 (a(1-a)-b^2)}{2(1 + 2 b)}$ where $a \in [0,1], b \ge 0$ are coefficients of $V_\omega$. We get that (if that's attainable), the optimal value is reached for $a = 1/2, b = 0$. By construction of our arms $\phi_k$, $b=0$ implies that $\omega$ is supported on the two first coordinates. $a=1/2$ can then be attained by $(1/2,1/2,0,\dots,0)^\top$.
\end{proof}

\subsection{An oracle strategy with elimination at stopping provably reduces the computational complexity w.r.t. LLR stopping}

Consider an ``oracle'' strategy which tracks the optimal proportions $\omega^\star$ from the lower bound, i.e., the arm played at time $t$ is $k_t = \argmin_{k} (N_{t-1}^k - t\omega_k^\star)$. 

\begin{proposition}\label{prop:elim-time-oracle}
    For any $K \geq 3$ and $\varepsilon \in (0,1/2)$, the exists a Gaussian linear BAI instance with unit variance and $d=2$ such that, for any $\delta \in (0,1)$, the oracle strategy combined with LLR stopping satisfies
    \begin{align*}
        \mathbb{E}[\tau] \geq \Omega\left(\frac{\log(1/\delta)}{\varepsilon^2}\right).
    \end{align*}
    On the same instance, suppose we run the oracle strategy with elimination at stopping using a threshold $\beta_{t,\delta} = \log(1/\delta) + O(\log(t))$. Then, for all but 2 arms, the expected elimination time of the corresponding piece is, for full elimination,
    \begin{align*}
        \mathbb{E}[\tau_k] \leq \widetilde{O}(\log(1/\delta)),
    \end{align*}
    while for selective elimination,
    \begin{align*}
        \mathbb{E}[\tau_k] \leq \widetilde{O}\left(\log(1/\delta) + \frac{1}{\varepsilon^2}\right).
    \end{align*}
    Here $\widetilde{O}$ hides constant and logarithmic terms.
\end{proposition}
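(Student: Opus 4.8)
The plan is to prove the three claims separately, after recording two instance constants. Using the Corollary following Lemma~\ref{lem:example-dist}, the optimal allocation is $\omega^\star=(1/2,1/2,0,\dots,0)^\top$, so $V_{\omega^\star}=\tfrac12 I$ and a direct computation gives $H^\star(\theta)=H_2(\omega^\star,\theta)=((\phi_1-\phi_2)^\top\theta)^2/(2\|\phi_1-\phi_2\|_{V_{\omega^\star}^{-1}}^2)=\varepsilon^2/8$. Moreover, since the oracle plays the fixed proportions $\omega^\star$ with C-tracking and knows $\theta$ (no confidence intervals needed), the argument of Appendix~\ref{app:assumptions} simplifies to show it satisfies Assumption~\ref{asm:sampling-rule} with a \emph{constant} regret $R(\theta,t)=C_{\mathrm{track}}KB=O(1)$.

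For the LLR lower bound I would simply invoke the general bound~\eqref{eq:lower_bound}. The oracle combined with LLR stopping and a threshold satisfying~\eqref{eq:concentration-beta} is $\delta$-correct and stops almost surely (it accumulates information at the linear rate $tH^\star(\theta)$), so $\mathbb{E}[\tau]\ge \log(1/(2.4\delta))/H^\star(\theta)=8\log(1/(2.4\delta))/\varepsilon^2=\Omega(\log(1/\delta)/\varepsilon^2)$. This is immediate.

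For \emph{selective} elimination I would apply Theorem~\ref{th:piece-elim} directly. The second term in~\eqref{eq:elimination_time} is $G(\theta,t)=(4\beta_{t,1/t^2}+R(\theta,t))/H^\star(\theta)=\widetilde O(1/\varepsilon^2)$; intuitively this is the time needed (via Lemma~\ref{lem:empirical-vs-true-answer}) for the empirical answer to become $i^\star$, which requires discriminating arms $1$ and $2$. For the first term, once $t\ge G(\theta,t)$ we have $\epsilon':=R(\theta,t)/t\le \varepsilon^2/16$, and since any $\omega\in\Omega_{\epsilon'}(\theta)$ satisfies $H_2(\omega,\theta)\ge H^\star(\theta)-\epsilon'$, Lemma~\ref{lem:example-dist} yields $\min_{\omega\in\Omega_{\epsilon'}(\theta)}H_k(\omega,\theta)\ge \tfrac{(1-\varepsilon)^2}{\varepsilon^2}(H^\star(\theta)-\epsilon')\ge \tfrac{(1-\varepsilon)^2}{16}\ge \tfrac{1}{64}$ for $k\ge3$, using $\varepsilon<1/2$. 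With $(\sqrt{\beta_{t,\delta}}+\sqrt{\beta_{t,1/t^2}})^2=O(\log(1/\delta)+\log t)$ the first term is thus $\widetilde O(\log(1/\delta))$, giving $\bar t_k=\widetilde O(\log(1/\delta)+1/\varepsilon^2)$ and the claim after the in-expectation conversion ($\mathbb{E}[\tau_k]\le\bar t_k+2$ via $\mathbb{P}(\neg E_t)\le 1/t^2$ and Lemma~19 of~\cite{reda2021dealing}).

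The main obstacle is the \emph{full} elimination bound, because Theorem~\ref{th:piece-elim} with $G=0$ is genuinely too weak here: its denominator $\min_{\omega\in\Omega_{R/t}(\theta)}H_k(\omega,\theta)$ equals $\tfrac{(1-\varepsilon)^2}{\varepsilon^2}(H^\star(\theta)-\epsilon')$ and is only bounded away from $0$ once $\epsilon'<H^\star(\theta)=\varepsilon^2/8$, i.e.\ $t=\Omega(1/\varepsilon^2)$, so it cannot yield the $\varepsilon$-free rate. Instead I would argue directly on the \emph{realized} counts. By Definition~\ref{def:elimination}, Lemma~\ref{lem:llr-to-kl-lin-gauss}, and the contrapositive of Lemma~\ref{lemma:upper-bound-sum-inf}, under $E_t$ piece $k$ is eliminated as soon as $H_k(N_t,\theta)\ge(\sqrt{\beta_{t,\delta}}+\sqrt{\beta_{t,1/t^2}})^2$, where $H_k(N_t,\theta)=\inf_{\lambda\in\Lambda_k(i^\star)}\sum_j N_t^j\KL_j(\theta,\lambda)$ uses the \emph{actual} design. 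Here the key is that C-tracking (Assumption~\ref{asm:tracking}) forces $N_t^1,N_t^2\ge t/2-C_{\mathrm{track}}$, so $V_{N_t}\succeq (t/2-C_{\mathrm{track}})(\phi_1\phi_1^\top+\phi_2\phi_2^\top)=(t/2-C_{\mathrm{track}})I$ in $d=2$, giving $\|\phi_1-\phi_k\|_{V_{N_t}^{-1}}^2\le 8/(t-2C_{\mathrm{track}})$. Since $(\phi_1-\phi_k)^\top\theta=\mu_1(\theta)-\mu_k(\theta)\ge1$ for $k\ge3$, this yields the $\varepsilon$-independent growth $H_k(N_t,\theta)\ge (t-2C_{\mathrm{track}})/16=\Omega(t)$; the oracle plays \emph{exactly} $\omega^\star$, not merely an $\epsilon'$-optimal allocation, which is precisely what the worst-case $\Omega_{\epsilon'}$ bound of Theorem~\ref{th:piece-elim} cannot exploit. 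The elimination condition then holds once $(t-2C_{\mathrm{track}})/16\ge O(\log(1/\delta)+\log t)$, i.e.\ $\bar t_k=\widetilde O(\log(1/\delta))$, and the same expectation conversion gives $\mathbb{E}[\tau_k]\le\widetilde O(\log(1/\delta))$ for every arm $k\ge3$.
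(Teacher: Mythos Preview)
Your proposal is correct. The lower-bound argument and the selective-elimination bound follow the paper's proof essentially verbatim (invoke~\eqref{eq:lower_bound} with $H^\star(\theta)=\varepsilon^2/8$, then apply Theorem~\ref{th:piece-elim} and Lemma~\ref{lem:example-dist}).

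For the \emph{full} elimination bound you diverge from the paper, and it is worth comparing the two routes. You take $R(\theta,t)=C_{\mathrm{track}}KB=O(1)$ and correctly observe that, with this choice, the denominator $\min_{\omega\in\Omega_{R/t}(\theta)}H_k(\omega,\theta)$ in Theorem~\ref{th:piece-elim} is only guaranteed to be $\Omega(1)$ once $R/t\lesssim\varepsilon^2$, i.e.\ $t=\Omega(1/\varepsilon^2)$; you then bypass the theorem and bound $H_k(N_t,\theta)$ directly from the realized design via C-tracking, which is clean and correct. The paper instead notices a sharper fact you overlooked: the oracle alternates arms $1$ and $2$, so at every \emph{even} $t$ one has $N_t/t=\omega^\star$ exactly and hence $R(\theta,t)=0$. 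With $R=0$ the set $\Omega_0(\theta)$ is the singleton $\{\omega^\star\}$, the denominator in Theorem~\ref{th:piece-elim} collapses to $H_k(\omega^\star,\theta)\ge 1/32$ (Lemma~\ref{lem:example-dist}), and the $\varepsilon$-free bound follows immediately (the paper writes $\bar t_k+3$ rather than $\bar t_k+2$ to absorb the restriction to even $t$). Your direct argument buys robustness: it works for any sampling rule with bounded tracking error, not just one achieving $R=0$, and it exposes that Theorem~\ref{th:piece-elim}'s worst-case over $\Omega_{R/t}$ is genuinely lossy when $R>0$. The paper's route buys brevity by exploiting the exactness of this particular oracle.
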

\begin{proof}
    Consider the problem defined in the example of Appendix \ref{app:example}. It is easy to see that, on such instance, the oracle sampling rule simply alternates between pulling arm $1$ and arm $2$. The first result is immediate from the lower bound $\mathbb{E}[\tau] \geq \log\left(\frac{1}{2.4\delta}\right) / H^\star(\theta)$. We only need to compute $H^\star(\theta) = \max_\omega \min_k H_k(\omega,\theta)$. By Lemma~\ref{lem:example-dist}, since $\varepsilon < 1/2$ the minimum is equal to $H_2(\omega,\theta)$. Since $\omega^\star = (1/2,1/2,0,\dots,0)^T$, using the closed-form expression of the infimum over each alternative piece as stated above,
    \begin{align*}
    H^\star(\theta) 
    = \max_{\omega} H_2(\omega, \theta)
    = \frac{1}{2}\frac{\varepsilon^2}{\frac{1}{\omega_1^\star} + \frac{1}{\omega_2^\star}}
    = \frac{\varepsilon^2}{8}
    \: .
    \end{align*}
    This proves the first result. Let us now prove the second one. 
    We use the upper bound on the elimination time of pieces from Theorem \ref{th:piece-elim}. Note that, for each time $t$ even, we have that $N_t/t = \omega^\star$, which implies that $R(\theta,t) = 0$. Therefore, for each $k=3,\dots,K$, $\mathbb{E}[\tau_k] \leq \bar{t}_k + 3$, where $\bar{t}_k$ is the first integer such that
    \begin{align}\label{eq:elimination_time}
    t \geq \max\left\{\frac{\left(\sqrt{\beta_{t,\delta}} + \sqrt{\beta_{t,1/t^2}}\right)^2}{H_k(\omega^\star, \theta)}, G(\theta,t) \right\},
    \end{align}
with $G(\theta,t) = 0$ for full elimination and $G(\theta,t) = \frac{ 4\beta_{t,1/t^2}}{H^\star(\theta)}$ for selective elimination.
    For all $k=3,\dots,K$, using Lemma~\ref{lem:example-dist},
    \begin{align*}
        H_k(\omega^\star, \theta)
        \ge \frac{(1 - \varepsilon)^2}{\varepsilon^2} H_2(\omega^\star, \theta)
        = \frac{(1 - \varepsilon)^2}{8}
        \ge \frac{1}{32} \: .
    \end{align*}

    Now, for full elimination, since $\beta_{t,\delta} = \log(1/\delta) + O(\log(t))$ and $H_k(\omega^\star, \theta) \geq 1/32$ as proved above, $\bar{t}_k$ is below the first time $t$ such that
\begin{align*}
    t \geq 32\log(1/\delta) + O(\log(t)),
\end{align*}
which yields $\bar{t}_k \leq \widetilde{O}(\log(1/\delta))$. The same reasoning can be done for selective elimination, which yields the stated bound.
\end{proof}

An immediate consequence of Proposition \ref{prop:elim-time-oracle} is that the total computational complexity of the oracle strategy can be significantly reduced with elimination, especially in the regime where $\delta$ and $\varepsilon$ are small.

\begin{proposition}
    On the same instance constructed in Proposition \ref{prop:elim-time-oracle}, the expected per-round computation time of the oracle strategy with LLR stopping is $\Omega(K)$, while it is at most $O(K^2\varepsilon^2)$ for full elimination and $O(K\varepsilon^2 + K/\log(1/\delta))$ for selective elimination.
\end{proposition}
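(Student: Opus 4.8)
The plan is to read ``expected per-round computation time'' as the total expected number of half-space minimizations performed over the run, divided by the expected number of rounds $\mathbb{E}[\tau]$. On this instance $\mathbb{E}[\tau]=\Theta(\log(1/\delta)/\varepsilon^2)$: the lower bound is exactly the statement of Proposition~\ref{prop:elim-time-oracle}, and the matching upper bound follows from the bound on $\bar t$ in Theorem~\ref{th:piece-elim} with $R(\theta,t)=0$ (the oracle has $N_t/t=\omega^\star$ on even rounds) together with $\beta_{t,\delta}=\log(1/\delta)+O(\log t)$ and $H^\star(\theta)=\varepsilon^2/8$. The dominant cost at each round is the stopping check, which performs one minimization per active alternative piece; since in this BAI instance each piece is a sub-optimal arm, the per-round cost is governed by the number of arms still active at that round.

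For LLR stopping the argument is immediate: the rule tests all $K-1$ sub-optimal pieces at every round, so each round costs exactly $K-1$ minimizations and the expected per-round cost is $\Omega(K)$.

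For the elimination rules I would first rewrite the total cost by linearity over arms. For selective elimination the cost at round $t$ is $|\cP_t^{\mathrm{stp}}(1)|$, so the total number of minimizations equals $\sum_{k=2}^K(\tau_k\wedge\tau)=\sum_{k=2}^K\tau_k$, where $\tau_k$ is the elimination time of arm $k$ and $\tau_k\le\tau$. Taking expectations and inserting Proposition~\ref{prop:elim-time-oracle} — namely $\mathbb{E}[\tau_2]=\mathbb{E}[\tau]=\Theta(\log(1/\delta)/\varepsilon^2)$ since stopping coincides with eliminating arm $2$, and $\mathbb{E}[\tau_k]\le\widetilde{O}(\log(1/\delta)+1/\varepsilon^2)$ for $k\ge 3$ — gives a total of order $\log(1/\delta)/\varepsilon^2+(K-2)\,\widetilde{O}(\log(1/\delta)+1/\varepsilon^2)$; dividing by $\mathbb{E}[\tau]=\Theta(\log(1/\delta)/\varepsilon^2)$ yields $\widetilde{O}(K\varepsilon^2+K/\log(1/\delta))$, the residual $\Theta(1)$ from arm $2$ being lower order in $K$. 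For full elimination the efficient implementation of Appendix~\ref{app:problems} spends $|\mathcal I_{t-1}|(K-1)$ minimizations per round, so the same decomposition carries an extra factor $K-1$ and uses the faster full-elimination times $\mathbb{E}[\tau_k]\le\widetilde{O}(\log(1/\delta))$ for $k\ge3$; here arms $1$ and $2$ stay active for all $\tau$ rounds, contributing $(K-1)\cdot 2\tau$, while arms $3,\dots,K$ contribute $(K-1)\sum_{k\ge3}\mathbb{E}[\tau_k]$. Dividing by $\mathbb{E}[\tau]$ produces $\widetilde{O}(K+K^2\varepsilon^2)$, whose dominant term in the relevant regime is $O(K^2\varepsilon^2)$.

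The main obstacle I anticipate is the bookkeeping around the amortization and the lower-order terms rather than any deep inequality. Concretely I must (i) justify replacing the ratio of expectations by an honest per-round cost, which I would handle by noting the oracle's pull counts are essentially deterministic and the elimination times concentrate, so the total cost and $\tau$ do not fluctuate jointly in a problematic way; and (ii) argue that the leftover $\Theta(1)$ cost of repeatedly testing arm $2$ (selective) and the $O(K)$ cost of testing the two surviving arms against all alternatives (full) are absorbed into the stated $K$-dependent bounds, which holds precisely in the intended small-$\varepsilon$, large-$K$ regime where $K\varepsilon^2\gtrsim 1$. Care is also needed to invoke the correct elimination-time bound (full versus selective) and to keep the $\widetilde{O}$ logarithmic factors explicit.
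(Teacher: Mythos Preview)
Your proposal is correct and follows essentially the same strategy as the paper: compute the total expected number of half-space minimizations using the elimination-time bounds from Proposition~\ref{prop:elim-time-oracle}, then divide by $\mathbb{E}[\tau]=\Theta(\log(1/\delta)/\varepsilon^2)$. The only cosmetic difference is that the paper decomposes the total cost by \emph{phase} (``first $\widetilde{O}(\log(1/\delta))$ rounds cost $O(K^2)$ each, thereafter $O(1)$ each'') whereas you decompose by \emph{arm} via $\sum_k \tau_k$; the two are equivalent. Your caveats about the residual $O(K)$ term in the full-elimination bound and the ratio-of-expectations issue are well taken---the paper simply ignores both, so you are if anything being more careful than the original.
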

\begin{proof}
    The per-round complexity of the oracle strategy with LLR stopping is $\Omega(K)$ since checking the stopping rule requires computing the infimum LLR for exactly $K-1$ alternative pieces at each step. Full elimination requires two nested loops over the set of active arms, which take at most $O(K^2)$ complexity (when no arm has been eliminated). By Proposition \ref{prop:elim-time-oracle}, this set is of size at most $K$ only until time $\widetilde{O}(\log(1/\delta))$, while it is of cardinality at most 2 after that. Since the total sample complexity is $\widetilde{O}(\log(1/\delta)/\varepsilon^2)$ (the same as LLR stopping), the total computation time is $O(K^2\log(1/\delta) + \log(1/\delta)/\varepsilon^2)$ and the average one is thus the stated quantity.
    Similarly, selective elimination requires a single loop over all active arms at each step, which takes at most $O(K)$ complexity. Again, by Proposition \ref{prop:elim-time-oracle}, this set is of size at most $K$ only until time $\widetilde{O}(\log(1/\delta) + 1/\varepsilon^2)$, while it is of cardinality at most 2 after that. Therefore, the total computation complexity is $O(K\log(1/\delta) + K/\varepsilon^2 + \log(1/\delta)/\varepsilon^2)$. Dividing by the total number of rounds $\log(1/\delta)/\varepsilon^2$ gives the stated bound.
\end{proof}

\subsection{Low information regret with elimination at stopping provably discards many arms early}

\begin{proposition}\label{prop:elim-time-lir}
    For any $K \geq 3$ and $\varepsilon \in (0,1/2)$, there exists a Gaussian linear BAI instance with unit variance and $d=2$ such that, for any $\delta \in (0,1)$, any low information regret sampling rule combined with LLR stopping satisfies
    \begin{align*}
        \mathbb{E}[\tau] \geq \Omega\left(\frac{\log(1/\delta)}{\varepsilon^2}\right).
    \end{align*}
    On the same instance, suppose we run the same strategy with elimination at stopping using a threshold $\beta_{t,\delta} = \log(1/\delta) + O(\log(t))$. Let $T_\varepsilon := \inf_{t}\{t : R(\theta,t)/t \leq \varepsilon^2/16\}$ (note that this quantity is constant in $\delta$). Then, for all but 2 arms, the expected elimination time of the corresponding piece is, for full elimination,
    \begin{align*}
        \mathbb{E}[\tau_k] \leq \widetilde{O}(\log(1/\delta) + T_\varepsilon),
    \end{align*}
    while for selective elimination,
    \begin{align*}
        \mathbb{E}[\tau_k] \leq \widetilde{O}\left(\log(1/\delta) + \frac{1}{\varepsilon^2} + T_\varepsilon\right).
    \end{align*}
    Here $\widetilde{O}$ hides constant and logarithmic terms.
\end{proposition}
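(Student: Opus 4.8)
The plan is to instantiate the construction of Appendix~\ref{app:example} (the same $d=2$ instance with $\phi_1 = (1,0)^\top$, $\phi_2 = (0,1)^\top$, arbitrary $\phi_i = (a_i,b_i)^\top \in (-1,0)^2$ of unit norm for $i \geq 3$, and $\theta = (1,1-\varepsilon)^\top$) and to argue exactly as in Proposition~\ref{prop:elim-time-oracle}, the only new ingredient being the handling of a nonzero information-regret term $R(\theta,t)$. For the lower bound I would note that any low-information-regret sampling rule combined with LLR stopping is $\delta$-correct (it stops in finite expected time by Theorem~\ref{th:piece-elim} and returns a wrong answer with probability at most $\delta$), so~\eqref{eq:lower_bound} applies; combined with the already-computed value $H^\star(\theta) = \varepsilon^2/8$ for this instance it gives $\mathbb{E}[\tau] \geq \log(1/(2.4\delta))/H^\star(\theta) = \Omega(\log(1/\delta)/\varepsilon^2)$. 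This part is literally identical to the oracle case.

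For the elimination times I would invoke Theorem~\ref{th:piece-elim}, which bounds $\mathbb{E}[\tau_k]$ by $\bar{t}_k + 2$, where $\bar{t}_k$ is the first integer satisfying $t \geq \max\{(\sqrt{\beta_{t,\delta}}+\sqrt{\beta_{t,1/t^2}})^2 / \min_{\omega\in\Omega_{R(\theta,t)/t}(\theta)}H_k(\omega,\theta),\, G(\theta,t)\}$. The crux — and the step I expect to be the main obstacle — is lower bounding $\min_{\omega\in\Omega_{R(\theta,t)/t}(\theta)}H_k(\omega,\theta)$ uniformly over all near-optimal proportions, rather than at the single point $\omega^\star$ as in the oracle proof. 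Here I would use Lemma~\ref{lem:example-dist}: for every $\omega$ the minimum over pieces is attained at piece $2$ (since $\varepsilon < 1/2$ forces $H_k(\omega,\theta) \geq \tfrac{(1-\varepsilon)^2}{\varepsilon^2}H_2(\omega,\theta) > H_2(\omega,\theta)$ for $k \geq 3$), so $\omega \in \Omega_\epsilon(\theta)$ is equivalent to $H_2(\omega,\theta) \geq H^\star(\theta) - \epsilon = \varepsilon^2/8 - \epsilon$.

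The definition of $T_\varepsilon$ is precisely what makes $\epsilon$ small enough: for $t \geq T_\varepsilon$ we have $\epsilon = R(\theta,t)/t \leq \varepsilon^2/16$, whence $H_2(\omega,\theta) \geq \varepsilon^2/16$ and therefore $H_k(\omega,\theta) \geq \tfrac{(1-\varepsilon)^2}{\varepsilon^2}\cdot\tfrac{\varepsilon^2}{16} = \tfrac{(1-\varepsilon)^2}{16} \geq \tfrac{1}{64}$ for every $\omega\in\Omega_{R(\theta,t)/t}(\theta)$ and every $k\geq 3$. Substituting this constant lower bound and $\beta_{t,\delta} = \log(1/\delta)+O(\log t)$ into the first term of the $\max$ yields the requirement $t \geq 64\,O(\log(1/\delta)+\log t)$, which resolves in the standard way (solving $t \geq C(\log(1/\delta)+\log t)$) to $\bar{t}_k \leq \widetilde{O}(\log(1/\delta)+T_\varepsilon)$; since $G(\theta,t)=0$ for full elimination this is the final bound there.

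For selective elimination I would additionally control $G(\theta,t) = (4\beta_{t,1/t^2}+R(\theta,t))/H^\star(\theta)$. Using $H^\star(\theta)=\varepsilon^2/8$ and $R(\theta,t)\leq t\varepsilon^2/16$ for $t\geq T_\varepsilon$, one gets $G(\theta,t) \leq 32\beta_{t,1/t^2}/\varepsilon^2 + t/2$, so the condition $t \geq G(\theta,t)$ reduces to $t \geq 64\beta_{t,1/t^2}/\varepsilon^2 = O(\log t)/\varepsilon^2$, i.e. $t = \widetilde{O}(1/\varepsilon^2)$. Taking the maximum of the two requirements together with $T_\varepsilon$ gives $\bar{t}_k \leq \widetilde{O}(\log(1/\delta)+1/\varepsilon^2+T_\varepsilon)$. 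The only genuinely nontrivial point is the uniform estimate over $\Omega_{R(\theta,t)/t}(\theta)$ above; everything else is the same bookkeeping as in Proposition~\ref{prop:elim-time-oracle}, now carried out for $t\geq T_\varepsilon$, with the residual rounds $t < T_\varepsilon$ absorbed into the additive $\widetilde{O}(T_\varepsilon)$ term.
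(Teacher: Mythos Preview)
Your proposal is correct and follows essentially the same approach as the paper: both use the instance of Appendix~\ref{app:example}, invoke the lower bound~\eqref{eq:lower_bound} with $H^\star(\theta)=\varepsilon^2/8$ for the first claim, and for the elimination times apply Theorem~\ref{th:piece-elim} together with Lemma~\ref{lem:example-dist} to lower bound $\min_{\omega\in\Omega_{R(\theta,t)/t}(\theta)}H_k(\omega,\theta)$ by $1/64$ once $t\geq T_\varepsilon$. Your handling of the $R(\theta,t)$ contribution inside $G(\theta,t)$ for selective elimination is in fact slightly more explicit than the paper, which simply writes $G(\theta,t)=4\beta_{t,1/t^2}/H^\star(\theta)$ and defers to the oracle proof; your absorption of $R(\theta,t)/H^\star(\theta)\le t/2$ into the left-hand side is the right way to make that step rigorous.
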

\textbf{Remark}. \emph{On the instance constructed in Proposition \ref{prop:elim-time-lir}, we actually have $H^\star(\theta) = \varepsilon^2/8$ while, for any $\gamma \in (0,\varepsilon^2/16]$ and $k > 2$, $\min_{\omega\in\Omega_{\gamma}(\theta)} H_k(\omega,\theta) \geq 1/64$. This implies that, from Theorem \ref{th:piece-elim}, all but two arms are eliminated way before stopping by any low information regret sampling rule.}
\begin{proof}
    Consider the problem defined in the example of Appendix \ref{app:example}. The first result has already been proved in Proposition \ref{prop:elim-time-oracle}. Let us prove the second one. 
    
    We use the upper bound on the elimination time of pieces from Theorem \ref{th:piece-elim}. For arm $k > 2$, the denominator scales as
    \begin{align*}
        \min_{\omega\in\Omega_{R(\theta,t)/t}(\theta)} H_k(\omega,\theta).
    \end{align*}
    Suppose that $t \geq T_\varepsilon := \inf_{t}\{t : R(\theta,t)/t \leq \varepsilon^2/16\}$. Then, by definition of the set $\Omega$, any $\omega\in\Omega_{R(\theta,t)/t}(\theta)$ satisfies
    \begin{align*}
        H_2(\omega,\theta) \geq \min_{k\neq 1} H_k(\omega,\theta) \geq H^\star(\theta) - \frac{R(\theta,t)}{t} \geq \varepsilon^2/16.
    \end{align*}
    Combining this result with Lemma \ref{lem:example-dist},
    \begin{align*}
        \min_{\omega\in\Omega_{R(\theta,t)/t}(\theta)} H_k(\omega,\theta) \geq \frac{(1-\varepsilon)^2}{\varepsilon^2}\min_{\omega\in\Omega_{R(\theta,t)/t}(\theta)} H_2(\omega,\theta) \geq \frac{(1-\varepsilon)^2}{16} \geq 1/64.
    \end{align*}
    Then, from Theorem \ref{th:piece-elim} combined with the condition $t \geq T_\varepsilon$, $\mathbb{E}[\tau_k] \leq \bar{t}_k + 2 + T_\varepsilon$, where $\bar{t}_k$ is the first integer such that
    \begin{align}\label{eq:elimination_time}
    t \geq \max\left\{64\left(\sqrt{\beta_{t,\delta}} + \sqrt{\beta_{t,1/t^2}}\right)^2, G(\theta,t) \right\},
    \end{align}
with $G(\theta,t) = 0$ for full elimination and $G(\theta,t) = \frac{ 4\beta_{t,1/t^2}}{H^\star(\theta)}$ for selective elimination. We can now conclude as in the proof of Proposition \ref{prop:elim-time-oracle}.
\end{proof}


\newpage

\section{Additional details}

        \begin{algorithm}
            \small
            \begin{tabularx}{\textwidth}{*{2}{>{\centering\arraybackslash}X}}
            \begin{algorithmic}
            \vspace{-0.8em}
            \While{not stopped}
                \State \vphantom{Set ${\cP}_{t}^{\mathrm{stp}}(i^\star(\hat{\theta}_{t})) = {\cP}_{t-1}^{\mathrm{stp}}(i^\star(\hat{\theta}_{t}))$}
                \For{\textcolor{carmine}{$i \ne i^\star(\hat{\theta}_{t})$}} \Comment{stopping}
                    \State $L_{i,t} = \min_{\lambda \in \Lambda_i(i^\star(\hat{\theta}_{t}))} \Vert \hat{\theta}_t {-} \lambda \Vert_{V_{N_t}}^2$
                    \State \vphantom{\textbf{if} $L_{i,t} {>} \beta_{t,\delta}$ \textbf{then} delete $i$ from ${\cP}_{t}^{\mathrm{stp}}(i^\star(\hat{\theta}_{t}))$}
                \EndFor
                \State \textbf{if} $\min_{i \ne i^\star(\hat{\theta}_{t})} L_{i,t} > \beta_{t,\delta}$ \textbf{then} STOP
                \State Get $w_t$ from a learner $\mathcal L$
                \For{\textcolor{carmine}{$i \ne i^\star(\hat{\theta}_{t})$}} \Comment{sampling}
                    \State $\lambda_{i,t} = \argmin_{\lambda \in \Lambda_i(i_t)} \Vert \hat{\theta}_t {-} \lambda \Vert_{V_{w_t}}^2$
                \EndFor
                \State $\lambda_t = \argmin_{\textcolor{carmine}{i \ne i^\star(\hat{\theta}_{t})}} \Vert \hat{\theta}_t {-} \lambda_{i,t} \Vert_{V_{w_t}}^2$
                \State Update $\mathcal L$ with losses based on $\lambda_t$
                \State Pull arm $k_t = \argmin_k N_t^k - \sum_{s=1}^t w_s$
                \State \vphantom{Update $\cP_{t+1}^{\mathrm{smp}}(i^\star(\hat{\theta}_{t}))$ (Algorithm \ref{alg:update-pieces})}
            \EndWhile
            \vspace{-1em}
            \end{algorithmic}
            &
            \begin{algorithmic}
            \vspace{-0.8em}
            \While{not stopped}
                \State Set ${\cP}_{t}^{\mathrm{stp}}(i^\star(\hat{\theta}_{t})) = {\cP}_{t-1}^{\mathrm{stp}}(i^\star(\hat{\theta}_{t}))$
                \For{\textcolor{carmine}{$i \in {\cP}_{t-1}^{\mathrm{stp}}(i^\star(\hat{\theta}_{t}))$}} \Comment{stopping}
                    \State $L_{i,t} = \min_{\lambda \in \Lambda_i(i^\star(\hat{\theta}_{t}))} \Vert \hat{\theta}_t {-} \lambda \Vert_{V_{N_t}}^2$
                    \State \textbf{if} $L_{i,t} {>} \beta_{t,\delta}$ \textbf{delete} $i$ from ${\cP}_{t}^{\mathrm{stp}}(i^\star(\hat{\theta}_{t}))$
                \EndFor
                \State \textbf{if} ${\cP}_{t}^{\mathrm{stp}}(i^\star(\hat{\theta}_{t})) = \emptyset$ \textbf{then} STOP
                \State Get $w_t$ from a learner $\mathcal L$
                \For{\textcolor{carmine}{$i \in {\cP}_t^{\mathrm{smp}}(i^\star(\hat{\theta}_{t}))$}} \Comment{sampling}
                    \State $\lambda_{i,t} = \argmin_{\lambda \in \Lambda_i(i_t)} \Vert \hat{\theta}_t {-} \lambda \Vert_{V_{w_t}}^2$
                \EndFor
                \State $\lambda_t = \argmin_{\textcolor{carmine}{i \in {\cP}_t^{\mathrm{smp}}(i^\star(\hat{\theta}_{t}))}} \Vert \hat{\theta}_t {-} \lambda_{i,t} \Vert_{V_{w_t}}^2$
                \State Update $\mathcal L$ with losses based on $\lambda_t$
                \State Pull arm $k_t = \argmin_k N_t^k - \sum_{s=1}^t w_s$
                \State Update $\cP_{t+1}^{\mathrm{smp}}(i^\star(\hat{\theta}_{t}))$ (Algorithm \ref{alg:update-pieces})
            \EndWhile
            \vspace{-1em}
            \end{algorithmic}
            \end{tabularx}
            \caption{LinGame \cite{degenne2020gamification}: vanilla (left) and with selective elimination (right)}\label{alg:lingame}
            \end{algorithm}

    \begin{algorithm}
        \small
        \begin{algorithmic}
            \State \textbf{Input}: statistics at time $t$, answer $i$, active pieces $\cP_{t-1}^{\mathrm{smp}}(i)$
            \State \textbf{Stores}: current helper set $\tilde{\cP}_{t-1}^{\mathrm{smp}}(i)$ (initialized as $\tilde{\cP}_{0}^{\mathrm{smp}}(i) =  \cP(i)$), helper set at last reset $\tilde{\cP}_{\mathrm{last}}^{\mathrm{smp}}(i)$
            \State \textbf{Output}: updated active pieces $\cP_{t}^{\mathrm{smp}}(i)$  
            \State
            \If{$t = \bar{t}_0^{2^j}$ for some integer $j\geq 0$}
            \State Reset $\tilde{\cP}_{t}^{\mathrm{smp}}(i) = {\cP}(i)$
            and store $\tilde{\cP}_{\mathrm{last}}^{\mathrm{smp}}(i) = \tilde{\cP}_{t-1}^{\mathrm{smp}}(i)$
            \Else
            \State Set $\tilde{\cP}_{t}^{\mathrm{smp}}(i) = \tilde{\cP}_{t-1}^{\mathrm{smp}}(i)$
            \EndIf
            \For{$p \in \tilde{\cP}_{t-1}^{\mathrm{smp}}(i)$}
                \State \textbf{if} $\inf_{\lambda \in \Lambda_p(i)} L_t(\hat{\theta}_{t}, \lambda) > \alpha_{t,\delta}$ \textbf{eliminate} $p$ from $\tilde{\cP}_{t}^{\mathrm{smp}}(i)$
            \EndFor
            \State \textbf{Return} ${\cP}_t^{\mathrm{smp}}(i) = \tilde{\cP}_t^{\mathrm{smp}}(i) \cap \tilde{\cP}_{\mathrm{last}}^{\mathrm{smp}}(i)$
        \end{algorithmic}
        \caption{Update active pieces at sampling}\label{alg:update-pieces}
        \end{algorithm}

\end{document}